\title{WPPNets and WPPFlows: The Power of\\Wasserstein Patch Priors
 for Superresolution}
\author{{Fabian Altekr\"uger\footnotemark[2] }\thanks{Department of Mathematics, 
Humboldt-Universit\"at zu Berlin, 
Unter den Linden 6, 
D-10099 Berlin, Germany,
fabian.altekrueger@hu-berlin.de}
\and Johannes Hertrich\thanks{Institute of Mathematics,
Technische Universit\"at Berlin,
Stra{\ss}e des 17. Juni 136, 
D-10623 Berlin, Germany,
j.hertrich@math.tu-berlin.de.}
}
\DeclareMathOperator*{\argmin}{arg\,min}
\DeclareMathOperator*{\argmax}{arg\,max}
\newcommand{\R}{\mathbb{R}}
\newcommand{\C}{\mathbb{C}}
\newcommand{\E}{\mathbb{E}}
\newcommand{\tT}{\mathrm{T}}
\newcommand{\KL}{\mathrm{KL}}
\theoremstyle{plain}
\newtheorem{lemma}{Lemma}
\newtheorem{theorem}[lemma]{Theorem}
\newtheorem{proposition}[lemma]{Proposition}
\newtheorem{remark}[lemma]{Remark}
\theoremstyle{definition}
\begin{document}
\maketitle

\renewcommand{\thefootnote}{\arabic{footnote}}
\begin{abstract}
Exploiting image patches instead of whole images have proved to be 
a powerful approach to tackle various problems in image processing.
Recently, Wasserstein patch priors (WPP), 
which are based on the comparison of the patch distributions of the unknown image
and a reference image,
were successfully used
as data-driven regularizers in the variational formulation
of superresolution.
However, for each input image, this approach requires the solution of a 
non-convex minimization problem  which is computationally costly.
In this paper, we propose to learn two kind of neural networks in an unsupervised way
based on  WPP loss functions.
First, we show how convolutional neural networks (CNNs) can be incorporated.
Once the network, called WPPNet, is learned, it can be very efficiently applied to any input image.
Second, we incorporate conditional normalizing flows to provide a tool for uncertainty quantification.
Numerical examples demonstrate the  very good performance of WPPNets for superresolution 
in various image classes even if the forward operator is known only approximately.
\end{abstract}

%intro
\section{Introduction}

In inverse problems, the task is to reconstruct an unknown ground truth $\bar x$ from a noisy observation 
\begin{equation} \label{model}
y = f(\bar x) + \xi,
\end{equation}
where $f$ is an ill-posed forward operator and $\xi$ is the realization of Gaussian noise with distribution $\mathcal N(0,\sigma^2 I)$.
Such problems can be tackled by finding a minimizer of a variational problem
\begin{equation}\label{general_variational_problem}
\mathcal J(x) = \mathcal D(f(x),y) + \lambda\mathcal R(x), \qquad \lambda > 0,
\end{equation}
where $\mathcal D(f(x),y)$ is a data-fidelity term which accounts for the noise 
and $\mathcal R$ a regularizer or image prior.
The concrete form of \eqref{general_variational_problem} is often derived by a Bayesian approach.

In this paper, we focus on the problem of superresolution. 
Here, $\bar x$ is a high-resolution image and $y$ a low-resolution image
obtained by a forward operator $f$, which is usually a composition of a blur operator and a downsampling operator.
Due to its actuality, superresolution
using deep neural networks (NNs) was considered in many papers, 
see e.g.,~\cite{DLHT2015,LTHC2017,LSKNM2017, RIM2017,SHCSFN2021,Tian21,WYWG2018,ZTKZF2018}, see also \cite{WCH2020} for a survey. 
In particular, unrolled approaches like \cite{GL2010,JP2021,MMK2021} yield good results.
However, within their training process, these methods require the 
access to a large amount of registered pairs $(\bar x_i,y_i)$ of high- and low-resolution images.
Only very few NN based approaches incorporate knowledge about the forward operator $f$ and the underlying image domain instead of large training data sets.
Examples are 
the zero-shot superresolution \cite{SCI2018},
the deep image prior \cite{UVL2018} and
Plug-and-Play methods \cite{VBW2013}.
Zero-shot superresolution \cite{SCI2018} exploits the internal image statistics and learns image-specific relations between the low-resolution image and its downscaled versions in order to apply these relations for reconstructing the high-resolution image.
The Deep Image Prior (DIP) \cite{UVL2018} learns a CNN by minimizing the loss function
$$
\mathcal L_{{\text{DIP}}}(\theta) \coloneqq \|f(G_\theta(z))-y\|^2,
$$
where  $z$ is a randomly chosen input and an early-stopping technique is used as regularization.
Then, the reconstructed image is obtained by $x=G_\theta(z)$.
It was shown in \cite{UVL2018} that DIP admits competitive results for many inverse problems. 
In \cite{BLS20} a combination of DIP with the TV regularizer (DIP+TV), leading to the loss function
\begin{equation}\label{eq_DIP:Loss}
\mathcal L_{{\text{DIP+ TV}}}(\theta) \coloneqq \|f(G_\theta(z))-y\|^2 + \lambda \text{TV}(G_\theta(z)), \quad \lambda >0,
\end{equation}
was successfully proposed for computerized tomography.
Note that each reconstruction with DIP+TV requires the training of a NN, which makes the method time consuming.
The idea of Plug-and-Play methods is to consider an optimization algorithm 
from convex analysis for solving \eqref{general_variational_problem}, as, e.g.,\ the 
forward-backward splitting \cite{CW2005} or the alternating direction method of multipliers \cite{CKCH2019,EB1992},
and to replace the proximal operator with respect to the regularizer by a more general denoiser.
For example, the forward backward splitting algorithm for minimizing \eqref{general_variational_problem} is given by
\begin{equation*}
x_{r+1}=\mathrm{prox}_{\eta R}(x_r-\eta\nabla_x \mathcal D(x_r,y))
\end{equation*}
and can be modified by 
\begin{equation}\label{eq_PnP_FBS}
x_{r+1}= \mathcal G(x_r-\eta\nabla_x \mathcal D(x_r,y)),
\end{equation}
where $\mathcal G$ is a neural network trained for denoising natural images as the DRUNet from \cite{ZLZZ2021}, and the hyperparameter $\eta>0$ is the step size.
Plug-and-Play methods were used for several applications in image processing with excellent performance, 
see e.g.,~\cite{CWE2016,GJNMU2018,HNS2021,MMHC2017,O2017,ZLZZ2021}.
Closely related to Plug-and-Play methods are regularizing by denoising (RED) \cite{REM2017}, variational networks \cite{EKKP2020} 
and total deep variation \cite{KEKP2020,PKPE2021}.

On the other hand, powerful methods based on the self-similarity of small patches within
natural images were developed in the last years. 
Such patch-based methods were used, e.g., for denoising \cite{BCM2005,HBD2018,LBM2013,STA2021} 
and superresolution \cite{HNABBSS2020,SJ2016}.
In particular, the expected patch log-likelihood algorithm (EPLL) \cite{PDDN2019,ZW2011} achieved competitive results to 
simple NN-based approaches by using the log-likelihood function of a Gaussian mixture model as regularizer within the 
variational problem \eqref{general_variational_problem}. 
Recently, this approach was also extended to posterior sampling for uncertainty quantification \cite{FW2021}.

In this paper, 
we assume that we are given 
the (approximate) forward operator $f\colon\R^{d_1\times d_2}\to\R^{n_1\times n_2}$ consisting of a blur and a downsampling operator, where $d_1=q n_1$ and $d_2=q n_2$ for some integral magnification factor $q$.
Additionally, we assume that we are given a database of low-resolution images $y_1,...,y_m$ 
and one single high-resolution reference image $\tilde x$.
Further, we assume that the distribution of patches in $\tilde x$ is similar to the patch distribution in the 
unknown high-resolution ground truths $\bar x_1,..., \bar x_m$ corresponding to $y_1,...,y_m$.
This assumption is fulfilled for images from similar image classes as,
e.g., from textures or materials.
The setting is motivated by the analysis of materials microstructures.
Here, it is usually possible to scan a large area of a material with a low-resolution, while the limited amount of time 
and resources forbids to image the same size of a data using a higher resolution.
Further, it is often impossible to scan the same section of one sample twice with different resolutions as in many applications 
destructive imaging processes, e.g.,\ FIB-SEM are used. 
These imaging techniques destroy the scanned section of the material and consequently there is no chance to 
generate paired training data.
Within this setting, Hertrich et al. \cite{Hertrich21} proposed  to solve the variational problem 
\eqref{general_variational_problem} using a so-called Wasserstein patch prior (WPP), which penalizes the quadratic Wasserstein distance between the patch distribution
of $x$ and the reference image $\tilde x$. 
The approach was inspired by the idea that a texture-like image can be represented 
by the distribution of small patches
\cite{EL1999, GRGH2017, Hou21Patch,Hou21, LB2019}, which was used by Guiterrez et al.~\cite{GRGH2017} 
to synthesize textures and was extended by Houdard et al.~\cite{Hou21Patch} to generative texture modelling.
Texture synthesis based on wavelet coefficients was considered in \cite{RPDB2011}.
Unfortunately, the minimization of the variational problem corresponding to the WPP requires several computations and differentiations of Wasserstein distances, which is computationally expensive.
We propose to overcome this computational overhead by exploiting NNs. 
First we introduce WPPNets, which are
NNs trained in an unsupervised way using a new WPP-based loss function.
Then, in order to measure the uncertainty within the reconstructions, 
we train conditional normalizing flows using a WPP-Kullback-Leibler-based loss function and call the resulting model WPPFlows.
Normalizing flows \cite{DSB2017,KD2018,RM2015} are learned diffeomorphisms to model complicated and high-dimensional probability distributions using a much simpler latent distribution. Applications to inverse problems were considered, e.g.,\ in \cite{AFHH2021,Denker21,Bouman20}.
We demonstrate the performance of our methods by various numerical examples on real-world data\footnote{The code is available at \url{https://github.com/FabianAltekrueger/WPPNets}.}.
It turns out that WPPNets outperform several methods such as Plug-and-Play approaches and DIP.
In practice, it is often unrealistic to assume that the forward operator $f$ is known exactly.
Instead, it can be described inaccurately or estimated on synthetic data.
Our numerical examples show that WPPNets are much more stable under such 
perturbations of the operator than other approaches.
Finally, our examples show that WPPFlows are a reasonable method to quantify the uncertainty within the reconstructions as they produce diverse and realistic images.

The paper is organized as follows:
in Section~\ref{Section_WPP}, we briefly review the WPP approach from \cite{Hertrich21}. 
Then, in Section~\ref{Section_WReg}, we introduce WPPNets.
Conditional normalizing flows were incorporated in Section \ref{Sec_UQ} to get WPPFlows.
In Section~\ref{Section_NumResults}, we present numerical examples and compare our results with other methods.
Finally, conclusions are drawn in Section~\ref{Section_conclusion}.

%-------------------------------------------------------------------------
\section{The Wasserstein Patch Prior}\label{Section_WPP}
%-------------------------------------------------------------------------

\begin{figure}
    \centering
    \includegraphics[width=\textwidth]{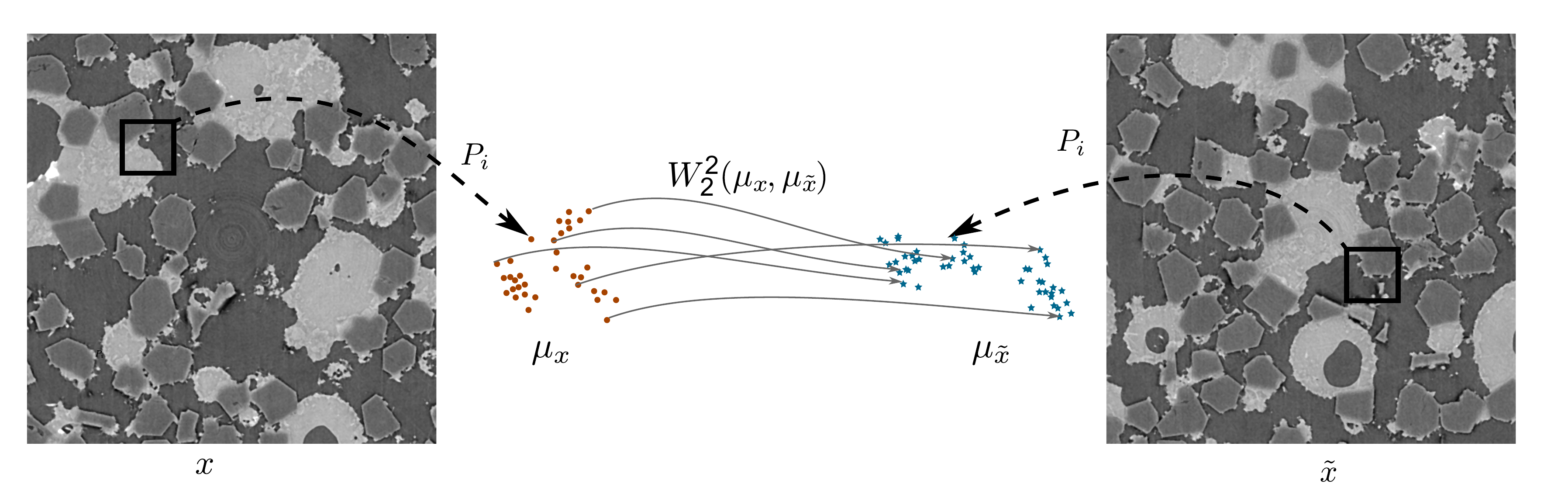}
    \caption{Visualization of the Wasserstein patch prior.}
    \label{fig_vis}
\end{figure}

Let $f\colon\R^{d_1\times d_2}\to\R^{n_1\times n_2}$ be a known forward operator with $d_1=qn_1$ and $d_2=qn_2$ for some integral magnification factor $q$.
Given an observation $y=f(\bar x)+\xi\in\R^{n_1\times n_2}$ for some noise $\xi$ and a reference image $\tilde x$, we aim to reconstruct the unknown ground truth image $\bar x\in\R^{d_1\times d_2}$ assuming that $\bar x$ and $\tilde x$ have a similar patch distribution.

For this purpose, let $P_j\colon \R^{d_1\times d_2}\to\R^{s_1\times s_2}$, $j=1,\ldots,N$ denote the operator which
extracts the $j$-th patch of size $s_1 \times s_2$ with $s_1 \ll d_1$, $s_2 \ll d_2$
from an image  $x \in \R^{d_1 \times d_2}$.
Resampling the images columnwise, we may consider $f\colon\R^d\to\R^n$ and $P_j\colon \R^{d} \to \R^{s}$, where
$d \coloneqq d_1 d_2$, $n\coloneqq n_1 n_2$ and $s \coloneqq s_1 s_2$.
Then we define 
the \emph{empirical patch distribution} $\mu_x$ of  $x \in \R^{d}$ 
by
\begin{align}
\mu_{x} = \frac{1}{N} \sum_{j=1}^{N} \delta_{P_j( x)},
\end{align}
where $\delta$ denotes the Dirac measure.
The main assumption in the following models is that similarly structured images $x$ and $\tilde x$ have also 
similar empirical patch distributions.
Based on the empirical patch distributions of an image $x$ and a reference image $\tilde x$, 
we define the  \emph{Wasserstein Patch Prior} (WPP) as the squared Wasserstein-2 distance of the corresponding empirical patch distributions, i.e.,
$$
W_2^2 (\mu_x,\mu_{\tilde x}) \coloneqq 
\min\limits_{\pi \in \Pi(\mu_x,\mu_{\tilde x})} \sum_{j=1}^{N} \sum_{k=1}^{\tilde N} \|P_j (x) - P_k(\tilde x) \|^2 \pi_{j,k}
$$
where                                                        
$\Pi(\mu_x,\mu_{\tilde x}) \coloneqq \{ \pi = (\pi_{j,k}) \in \R^{N \times \tilde N}_{\ge 0}: 
\sum_{j=1}^N \pi_{j,k} = \frac{1}{\tilde N}, \,
\sum_{k=1}^{\tilde N} \pi_{j,k} = \frac1N \}
$.
The WPP is visualized in Figure~\ref{fig_vis}.
Note that it is not required that the images have the same size. Instead, we only require that all patches  have the same size.
We will frequently use the semi-dual form of the Wasserstein distance, see e.g.,~\cite[Chapter~1]{Santambrogio2015},
\begin{equation} \label{w_dual}
W_2^2 (\mu_x,\mu_{\tilde x}) 
=
\max\limits_{\boldsymbol{\psi} \in \R^{\tilde{N}}} 
\Big( \frac{1}{N} \sum_{j=1}^N \psi^c \left(P_j(x) \right) + \frac{1}{{\tilde N}} \sum_{k=1}^{\tilde N} \psi_k \Big),
\end{equation}
where $\boldsymbol{\psi} \coloneqq (\psi_k)_{k=1}^{\tilde N}$ and
$\psi^c \left( P_j(x) \right) \coloneqq \min\limits_{k \in \{1,...,{\tilde N}\}} \{ \| P_j (x) -  P_k(\tilde x) \|^2 - \psi_k \}$ 
denotes the $c$-transform of $\boldsymbol{\psi}$.
In \cite{Hertrich21}, the WPP is used as regularizer in the variational problem
\begin{align}\label{WPP}
\mathcal J(x) \coloneqq \frac12\|f(x)-y\|^2+\lambda W_2^2(\mu_x,\mu_{\tilde x}).
\end{align}
It was shown that \eqref{WPP} outperforms state-of-the-art methods
for superresolution of material images.
However, it requires the minimization of the functional \eqref{WPP} 
for each high-resolution image $x$ we want to reconstruct
from its low-resolution counterpart,  which is computationally costly.
Therefore, in the next section we propose to learn a NN 
based on the above loss function and then to use this network
to generate high-resolution images in a fast way.

%-------------------------------------------------------------------------
\section{WPPNets} \label{Section_WReg}
%-------------------------------------------------------------------------
We assume that we are given a high-resolution reference image 
$\tilde{x}\in \R^d$
and several low-resolution images $y_i\in\R^n$, $i=1,\ldots,m$, $n < d$.
The corresponding high-resolution ground truth images $\bar x_i\in\R^d$, $i=1,\ldots,m$ are unknown, 
but we assume that the  patch distributions within the reference image $\tilde{x}$ and the ground truths $\bar x_i$ are similar.
Based on \eqref{WPP}
it appears natural to train a CNN $G_\theta\colon\R^n\to\R^d$ using the loss function
\begin{align}\label{eq_loss1}
\mathcal L(\theta) \coloneqq \frac1m\sum_{i=1}^m \|f(G_\theta(y_i))-y_i\|^2 + \lambda W_2^2(\mu_{G_\theta(y_i)},\mu_{\tilde x}).
\end{align}
In other words, the network will be trained to map an observation $y$ 
onto the corresponding WPP reconstruction $x$, which is the minimizer of \eqref{WPP}.
As the CNN $G_\theta$ is fully convolutional, we can apply it for images of arbitrary size.
More precisely, $G_\theta$ maps a low-dimensional input image of size $n_1\times n_2$ to a high-resolution image of size $d_1\times d_2$ with $d_1=qn_1$ and $d_2=qn_2$, where $q$ is the magnification factor.

Since each training image shows only a small part of the considered texture or material, the patch distribution in this part might have a severe bias compared with the patch distribution
in the whole texture or material because of local structures shown in this specific part.
Consequently, the Wasserstein patch prior, which enforces equality of these patch distributions, does not
make sense when it is applied on too small images.
As a remedy, 
we divide our training data into disjoint batches $(B_j)_j$ 
of size $|B_j|=b$, $j=1,...,N_B$ with $\bigcup_{j=1}^{N_B} B_j=\{1,...,m\}$. Then the loss function \eqref{eq_loss1} can be rewritten to
\begin{align}\label{eq_loss2}
\mathcal L(\theta) =\frac1{N_B} \sum_{j=1}^{N_B}\Big(\frac1b\sum_{i\in B_j} \|f(G_\theta(y_i))-y_i\|^2 +\lambda \frac1b\sum_{i\in B_j}  W_2^2(\mu_{G_\theta(y_i)},\mu_{\tilde x})\Big).
\end{align}
Instead of comparing the patch distributions of $\tilde x$ and $G_\theta (y_i)$ by $W_2^2(\mu_{G_\theta(y_i)},\mu_{\tilde x})$ for each $i$ separately, we now compare
the patch distribution of $\tilde x$ with the distribution of all patches within the images of the batch $B_j$. This can be seen as a concatenation of the reconstructed images $G_\theta (y_i)$, $i \in B_j$ for some batch $B_j$,  in order to reduce the bias of the patch distribution of the single reconstructions $G_\theta (y_i)$ compared to the patch distribution of all reconstructions. Formally, this corresponds to replacing in \eqref{eq_loss2} the term
\begin{equation}\label{replace_1}
\frac1{b}\sum_{i\in B_j}W_2^2(\mu_{G_\theta(y_i)},\mu_{\tilde x}) \quad \mathrm{by} \quad
W_2^2\Big(\frac1{b}\sum_{i\in B_j}\mu_{G_\theta(y_i)},\mu_{\tilde x}\Big),
\end{equation}
for $j=1,...,N_B$.
Then, we obtain the loss function
\begin{align}\label{eq_loss}
\mathcal L_\mathrm{WPPNet}(\theta)
\coloneqq
\frac1{N_B} \sum_{j=1}^{N_B}\Big(\frac1b\sum_{i\in B_j} \|f(G_\theta(y_i))-y_i\|^2 +\lambda  W_2^2\Big(\frac1b\sum_{i\in B_j}\mu_{G_\theta(y_i)},\mu_{\tilde x}\Big)\Big).
\end{align}
We call a neural network trained with the loss function \eqref{eq_loss} a \emph{Wasserstein Patch Prior Network} (WPPNet).
\begin{remark}
The required batch size depends on the size of the low-resolution example images $y_i$, $i=1,...,m$ and on the level of homogeneity of the considered texture or material.
In the case of large example images $y_i$ and a very homogeneous texture, we can assume that the patch distribution in each image $y_i$ is representative for the whole texture. Consequently, we can choose batch size $b=1$. In this case, the loss functions \eqref{eq_loss} and \eqref{eq_loss2} coincide.
However, in practice, we have often only access to small low-resolution images and in particular for applications with materials' microstructures the considered images admit a lower homogeneity such that larger batch sizes are necessary.
\end{remark}

In order to minimize the loss function \eqref{eq_loss} with a gradient-based optimization method, 
we need to compute the derivative of 
$W_2^2 (\mu_{G_{\theta}(y)},\mu_{\tilde{x}})$ with respect to $\theta$ for some observation $y\in\mathcal Y$. 

Since the Wasserstein distance is computed iteratively, the application of backpropagation is computationally intractable. 
Instead, we compute the derivative via the semi-dual formulation of the Wasserstein distance.
To this end, recall that the Wasserstein distance reads in its semi-dual form as
\begin{align*}
W_2^2(\mu_{G_{\theta}(y)},\mu_{\tilde{x}}) 
= 
\max_{\psi \in \R^{\tilde{N}}} F (\psi,\theta; y), \hspace{0.3cm}%\qquad 
F(\psi,\theta;y) \coloneqq \frac{1}{N} \sum_{j=1}^{N} \psi^c 
\left( P_j \left( G_{\theta}(y) \right) \right)
+ \frac{1}{\tilde{N}} \sum_{k=1}^{\tilde{N}} \psi_k.
\end{align*}
Then the following well-known theorem provides a connection between the gradient of the Wasserstein distance and those of $F$, 
see e.g., \cite{Hou21Patch}.

\begin{theorem} \label{Theorem_derivativeW2}
Let both
$\theta \mapsto W_2^2 \Big(\frac{1}{N} \sum_{j=1}^N \delta_{P_j \left( G_\theta (y) \right)}, 
\frac{1}{\tilde{N}} \sum_{k=1}^{\tilde{N}} \delta_{P_k (\tilde{x})} \Big)$ 
and $\theta \mapsto F(\psi^{*},\theta;y)$ be differentiable at $\theta_0$ 
with $\psi^{*} \in \argmax_{\psi} F(\psi,\theta_0;y)$. 
Then it holds
\begin{align*}
\nabla_\theta W_2^2 \Big(\frac{1}{N} \sum_{j=1}^N \delta_{P_j \left(G_{\theta_0} (y) \right)}, \frac{1}{\tilde{N}} \sum_{k=1}^{\tilde{N}} 
\delta_{P_k (\tilde{x})} \Big) 
= \nabla_\theta F(\psi^{*},\theta_0;y).
\end{align*}
\end{theorem}

Since for almost every $\theta$ the set of minimizers
\begin{align}\label{eq:kappa}
\kappa_\psi (j) = \argmin_{k \in \{ 1,...,\tilde{N} \}} \big(\| P_j \left(G_\theta (y) \right) - P_k (\tilde{x}) \|^2 - \psi_k \big)
\end{align}
is single-valued, it holds that 
$\psi^c \left(P_j \left(G_{\theta}(y)\right) \right)
= 
\| P_j \left(G_{\theta}(y) \right) - P_{\kappa_\psi (j)} \left(\tilde{x}\right) \|^2 
- \psi_{\kappa_{\psi (j)}}$, 
which is differentiable in $\theta$ if $G_\theta$ is. 
Thus, by Theorem \ref{Theorem_derivativeW2}, the gradient of the Wasserstein distance is given by
\begin{align*}
\nabla_\theta W_2^2 (\mu_{G_\theta (y)},\mu_{\tilde{x}}) 
&= \nabla_\theta F(\psi^* , \theta;y) \\
&= \frac{2}{N} \sum_{j=1}^N 
\Big(\partial_\theta \left(P_j \left(G_\theta (y) \right) \right)^\tT\Big)  
\left(P_j \left(G_\theta (y) \right) - P_{\kappa_{\psi^*} (j)} \left(\tilde{x} \right) \right),\label{eq_wasser_grad}
\end{align*}
where
$
\psi^*\coloneqq\argmax_{\boldsymbol{\psi}\in\R^{\tilde N}} F(\psi,\theta;y).
$
For computing $\psi^*$, we use a (stochastic) gradient ascent algorithm as suggested in
\cite{Hou21Patch}.

\begin{remark}[Computational Complexity]\label{rem_computational_complexity}
The computation of (the gradient of) 
$W_2^2(\mu_{G_{\theta}(y)},\mu_{\tilde x})$ does not scale well with the number of patches $N$ 
and $\tilde N$ of the reconstruction $G_\theta(y)$ and the reference image $\tilde x$.
More precisely, it has complexity $\mathcal O(N\tilde N)$ due to the computation 
of the $\kappa_\psi$ in \eqref{eq:kappa}.
As both $N$ and $\tilde N$ might be large, this leads to an intractable computational effort. 
To speed up the numerical computations, we replace the empirical patch distribution $\mu_{\tilde x}$  
by the distribution of a random subset of all patches, i.e., we redefine $\mu_{\tilde x}$ as
$$
\mu_{\tilde x}=\frac1{|I|}\sum_{k\in I}\delta_{P_k(\tilde x)},
$$
where $I$ is a random subset of $\{1,...,\tilde N\}$ of size $|I|\ll \tilde N$.
Since $\mu_{\tilde x}$ has to be a fixed measure, this set $I$ is chosen once in the beginning and 
kept over the whole training procedure.
Then, the complexity of the computation of (the gradient of) 
$W_2^2(\mu_{G_{\theta}(y)},\mu_{\tilde x})$ reduces to $\mathcal O(N|I|)$.
Note that we cannot subsample the patch distribution of the reconstruction in the same way 
as this would lead to pixels in the reconstructions, which are not effected by the regularizer.
For our numerical experiments we chose $|I|=10000$.
\end{remark}

\begin{remark}[Relation to the deep image prior]
In the case that $m=1$, i.e., when we have given exactly one training image, the loss function of the WPPNet
reads as
$$
\|f(G_\theta(y))-y\|^2+\lambda W_2^2(\mu_{G_\theta(y)},\mu_{\tilde x}).
$$
This is quite similar to the loss function of DIP+TV \eqref{eq_DIP:Loss} with the only difference that the 
TV regularization is replaced by a WPP regularizer.
Consequently, in this case, we can interpret the WPPNet as deep image prior with WPP regularization.
However, it is well known that the reconstructions of the deep image prior are highly adapted to the considered observation and do not generalize to observations which are unseen during the training time.
Therefore, this relation of WPPNets to the deep image prior holds not longer true as soon as the number $m$ of 
low-resolution images is larger than $1$.
Nevertheless, we can hope that the effect of a ``regularization by a CNN architecture'' improves the results of the WPPNet over the variational WPP reconstruction in some cases.
\end{remark}

%-------------------------------------------------------------------------
\section{WPPFlows} \label{Sec_UQ}
%-------------------------------------------------------------------------
For several applications, it is crucial to get not only a realistic reconstruction $x$, 
but also to measure the uncertainty within the reconstruction.
From a mathematical point of view this corresponds to reconstructing the full posterior distribution
$P_{X|Y=y}$ within the Bayesian inverse problem 
\begin{equation}\label{eq_BIP}
Y=f(X)+\Xi,
\end{equation}
where $\Xi\sim\mathcal N(0,\sigma^2 I)$ is independent of $X$,
For this purpose, we will make use of \emph{conditional normalizing flows}.
In particular, this allows  to produce different possible high-resolution reconstructions from the same low-resolution image $y$.

Within the Bayesian inverse problem \eqref{eq_BIP} we need to specify the prior distribution $P_X$.
Here, we assume that it is defined via the Wasserstein patch prior by the density
\begin{equation}\label{eq_prior_px}
p_X(x) \sim \exp(-\rho W_2^2(\mu_x,\mu_{\tilde x})),
\end{equation}
where $\rho>0$ is a hyperparameter.
The following proposition ensures that this defines indeed a probability distribution, 
which is crucial to apply the concepts from Baysian statistics.
\begin{proposition}\label{thm_WPP_dist}
The function
$
\varphi(x)\coloneqq\exp(-\rho W_2^2(\mu_x,\mu_{\tilde x}))
$
is integrable.
\end{proposition}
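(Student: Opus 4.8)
The plan is to show that $\varphi$ decays fast enough at infinity to be integrable over $\R^d$. The key observation is that $W_2^2(\mu_x,\mu_{\tilde x})$ grows quadratically in $\|x\|$ up to a bounded correction, so that $\varphi(x)$ is bounded by a Gaussian-type function, which is integrable.

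First I would get a lower bound on $W_2^2(\mu_x,\mu_{\tilde x})$ in terms of $\|x\|^2$. For any transport plan $\pi\in\Pi(\mu_x,\mu_{\tilde x})$ we have
\begin{align*}
\sum_{j=1}^N\sum_{k=1}^{\tilde N}\|P_j(x)-P_k(\tilde x)\|^2\pi_{j,k}
&\ge \sum_{j,k}\left(\tfrac12\|P_j(x)\|^2-\|P_k(\tilde x)\|^2\right)\pi_{j,k}\\
&=\frac{1}{2N}\sum_{j=1}^N\|P_j(x)\|^2-\frac{1}{\tilde N}\sum_{k=1}^{\tilde N}\|P_k(\tilde x)\|^2,
\end{align*}
using $\|a-b\|^2\ge\tfrac12\|a\|^2-\|b\|^2$ and the marginal constraints. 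The second term is a constant $C_{\tilde x}$ depending only on the reference image. For the first term, I would note that $\sum_{j=1}^N\|P_j(x)\|^2 = \sum_{j=1}^N\langle P_j^{\tT}P_j x,x\rangle = \langle Ax,x\rangle$ where $A\coloneqq\sum_j P_j^{\tT}P_j$ is symmetric positive semidefinite; in fact $A$ is a diagonal matrix whose $i$-th entry counts how many patches contain pixel $i$, and this count is at least $1$ for every pixel provided the patches cover the whole image (which holds for the standard overlapping patch extraction with, say, unit stride, possibly with wrap-around). Hence $A\succeq I$ and $\sum_j\|P_j(x)\|^2\ge\|x\|^2$, giving
\[
W_2^2(\mu_x,\mu_{\tilde x})\ \ge\ \frac{1}{2N}\|x\|^2 - C_{\tilde x}.
\]
Consequently $\varphi(x)\le e^{\rho C_{\tilde x}}\exp\!\left(-\tfrac{\rho}{2N}\|x\|^2\right)$, and the right-hand side is integrable over $\R^d$ as a scaled Gaussian, so $\varphi$ is integrable. (Continuity of $\varphi$, which gives measurability, follows from continuity of $x\mapsto W_2^2(\mu_x,\mu_{\tilde x})$, itself a consequence of the dual formula \eqref{w_dual} as a max of affine-in-cost functions; but measurability alone suffices for integrability.)

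The main obstacle I anticipate is the claim that every pixel is covered by at least one patch, i.e.\ that $A\succeq cI$ for some $c>0$. If the patch extraction leaves boundary pixels uncovered, $A$ is only positive semidefinite and the bound degenerates in those coordinates. This is handled either by assuming periodic (wrap-around) patch extraction — the convention typically used in this patch-based literature and consistent with the CNN setting — so that each pixel lies in exactly $s=s_1s_2$ patches and $A=sI$ exactly, or by noting that in the superresolution application one may restrict attention to the interior and absorb the finitely many boundary coordinates into the constant, or simply by adding the standing assumption that the patches cover the image. Under any of these mild and natural conventions the argument goes through verbatim, and the decay rate $\tfrac{\rho}{2N}\|x\|^2$ (or better, $\tfrac{\rho s}{2N}\|x\|^2$ in the periodic case) is explicit.
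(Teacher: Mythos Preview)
Your proof is correct and, if anything, cleaner than the paper's own argument. Both proofs establish a quadratic lower bound $W_2^2(\mu_x,\mu_{\tilde x})\ge c_1\|x\|_2^2-c_2$ (outside a compact set) and conclude by domination with a Gaussian, but they reach this bound differently. The paper plugs $\boldsymbol{\psi}=0$ into the dual formula to get $W_2^2\ge\frac1N\sum_j\min_k\|P_j(x)-P_k(\tilde x)\|^2$, then keeps only the single patch $P_{j^*(x)}$ containing the entry of $x$ with largest absolute value, bounds that summand below by $(\|x\|_\infty-\|\tilde x\|_\infty)_+^2$, and finally passes to the Euclidean norm via norm equivalence and a compact/non-compact splitting of the domain of integration. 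Your route avoids the $\|\cdot\|_\infty$ detour and the case distinction: the elementary inequality $\|a-b\|^2\ge\tfrac12\|a\|^2-\|b\|^2$ together with the observation $\sum_j\|P_j(x)\|^2=\langle Ax,x\rangle$ with $A=\sum_jP_j^\tT P_j\succeq I$ gives the global Gaussian envelope in one step, with an explicit decay constant. The coverage assumption you flag (every pixel lies in at least one patch, so $A\succeq I$) is exactly the same hidden assumption the paper uses when it asserts that some patch contains the maximum-absolute-value pixel; you are simply more explicit about it.
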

The proof is given in Appendix~\ref{app_proof_wpp_dist}.

The following remark interprets the variational WPP model \eqref{WPP} from the previous sections as maximum-a-postiori estimator (MAP) within the Bayesian inverse problem \eqref{eq_BIP}.

\begin{remark} \label{map}Under the assumption that $P_{Y|X=x} =\mathcal N(f(x),\sigma^2 I)$, Bayes' theorem implies that maximizing the log-posterior distribution $\log(p_{X|Y=y}(x))$
corresponding to \eqref{eq_BIP} can be written as
\begin{align*}
\argmax_x \{\log (p_{X|Y=y}(x) )\}
&=
\argmax_x \Big\{\log  \Big(\frac{p_{Y|X=x}(y)\, p_X(x)}{p_Y(y)} \Big)\Big\}
\\
&=
\argmax_x \left\{ \log \left( \exp \left( -\|f(x) -y\|^2/(2\sigma^2) \right) \right) + \log \left( p_X(x) \right) \right\}
\\                                                    
&=
\argmin_x \left\{ \tfrac12 \|f(x)-y\|^2 - \sigma^2 \log \left( p_X(x) \right) \right\}.
\end{align*}
For the prior \eqref{eq_prior_px} with
$\rho \coloneqq \frac{\lambda}{\sigma^2}$
this gives the variational WPP model \eqref{WPP}.
\end{remark}

%-------------------------------------------------------------------------
\paragraph{Normalizing Flows}
%-------------------------------------------------------------------------
The aim of normalizing flows is to sample from a complicated probability distribution $P_X$ 
which admits the density function $p_X$.
A \emph{normalizing flow} $\mathcal T=\mathcal T_\theta\colon\R^d\to\R^d$ is an invertible neural network with parameters $\theta$, which is learned to push forward a simple distribution $P_Z$ (usually a standard Gaussian) to $P_X$ such that
$
\mathcal T_\#P_Z = P_Z \circ \mathcal T^{-1} \approx P_X.
$
Here, the symbol $\approx$ means that the distributions are similar in some proper distance or divergence.
Several architectures of normalizing flows were proposed in literature \cite{CBDJ2019,EKS2020,KD2018,RM2015}. Here, we use an adaption of the SRFlow architecture \cite{Lugmayr20} based on affine coupling blocks \cite{AKRK2019,DSB2017}. A detailed description is given in Appendix~\ref{sec_WPPFlow_architecture}.

For applications in inverse problems, normalizing flows were generalized to incorporate a condition \cite{Ardizzone21,HHS2021}. 
More precisely, for approximating all posterior distributions $P_{X|Y=y}$ within the Bayesian inverse problem \eqref{eq_BIP} using a flow model, we learn a mapping $\mathcal T=\mathcal T_\theta\colon\R^d\times\R^n\to\R^d$ such that for all $y\in\R^n$ we have that $\mathcal T(\cdot,y)$ is invertible and that $\mathcal T(\cdot,y)_\#P_Z\approx P_{X|Y=y}$.
Note, that (conditional) normalizing flows can be generalized for the use of non-deterministic transformation, 
see the overview paper \cite{HHS2021tutorial}.

To ensure  $\mathcal T(\cdot,y)_\#P_Z\approx P_{X|Y=y}$, 
we use the expectation on $Y$ of the 
\emph{backward Kullback-Leibler (KL) divergence} 
\begin{equation}\label{eq_backward_KL}
\mathcal L(\theta)\coloneqq\E_{y\sim P_Y}[\KL(\mathcal T(\cdot,y)_\#P_Z,P_{X|Y=y})]
\end{equation}
This was also proposed in \cite{AFHH2021,HHS2021,KDKS2019,Bouman20}. 
The KL divergence is not symmetric and for a discussion 
on forward versus backward Kullback-Leibler divergences we refer to \cite{HHS2021}.

For computing $\mathcal L_\theta$ and taking its derivative, 
we use the following proposition, which is a combination of \cite[Prop. 1]{AFHH2021} and \cite{AKRK2019}. 

%------------------------------------------------------------------
\begin{proposition} \label{prop_backwardKL}
Let $X$ and $Y$ be related by the Bayesian inverse problem \eqref{eq_BIP}.
Then, for any $y\in\R^n$, the KL divergence $\KL(\mathcal T(\cdot,y)_\#P_Z,P_{X|Y=y})$ is up to a constant equal to
\begin{align*}
\E_{z\sim P_Z}\Big[\frac1{2\sigma^2}\|f(\mathcal T(z,y))-y\|^2+\log(p_X(\mathcal T(z,y))-\log(|\det(\nabla \mathcal T(z,y))|)\Big]
\end{align*}
\end{proposition}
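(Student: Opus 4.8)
The plan is to start from the definition of the Kullback--Leibler divergence between the pushforward measure $\mathcal{T}(\cdot,y)_\#P_Z$ and the posterior $P_{X|Y=y}$, and to rewrite it as an expectation over $z\sim P_Z$ using the change-of-variables formula for normalizing flows. Concretely, since $\mathcal{T}(\cdot,y)$ is an invertible map pushing $P_Z$ forward, the density of $\mathcal{T}(\cdot,y)_\#P_Z$ at the point $x=\mathcal{T}(z,y)$ is $p_Z(z)\,|\det(\nabla\mathcal{T}(z,y))|^{-1}$. Substituting this into
$\KL(\mathcal{T}(\cdot,y)_\#P_Z,P_{X|Y=y})=\E_{z\sim P_Z}\big[\log\big(q(\mathcal{T}(z,y))/p_{X|Y=y}(\mathcal{T}(z,y))\big)\big]$, where $q$ denotes that pushforward density, the term $\log p_Z(z)$ does not depend on $\theta$ and can be absorbed into the additive constant, leaving $-\log|\det(\nabla\mathcal{T}(z,y))|-\log p_{X|Y=y}(\mathcal{T}(z,y))$ inside the expectation.

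Next I would expand the log-posterior via Bayes' theorem exactly as in Remark~\ref{map}: $\log p_{X|Y=y}(x)=\log p_{Y|X=x}(y)+\log p_X(x)-\log p_Y(y)$. The term $\log p_Y(y)$ is constant in both $z$ and $\theta$, so it joins the additive constant. For the likelihood, the assumption $P_{Y|X=x}=\mathcal{N}(f(x),\sigma^2 I)$ gives $\log p_{Y|X=x}(y)=-\frac{1}{2\sigma^2}\|f(x)-y\|^2$ plus a normalization constant independent of $x$ (it depends only on $\sigma$ and the dimension), which again goes into the constant. For the prior, Proposition~\ref{thm_WPP_dist} guarantees that $\varphi(x)=\exp(-\rho W_2^2(\mu_x,\mu_{\tilde x}))$ is integrable, so $p_X(x)=\varphi(x)/\int\varphi$ is a genuine density; hence $\log p_X(x)=-\rho W_2^2(\mu_x,\mu_{\tilde x})-\log\int\varphi$, and the normalizing integral is another constant absorbed into the additive constant.

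Assembling these pieces with $x=\mathcal{T}(z,y)$, the integrand becomes, up to a constant,
$-\log|\det(\nabla\mathcal{T}(z,y))|+\frac{1}{2\sigma^2}\|f(\mathcal{T}(z,y))-y\|^2+\rho W_2^2(\mu_{\mathcal{T}(z,y)},\mu_{\tilde x})$,
and taking the expectation over $z\sim P_Z$ yields exactly the claimed expression. The only subtlety, and the point I would be most careful about, is the bookkeeping of which terms are legitimately \emph{constant}: they must be independent of $\theta$ (so that the expression is useful for optimization) but they may depend on the fixed data $y$, $\sigma$, $\tilde x$ and the dimension. The Gaussian likelihood normalizer and the prior normalizer $\log\int\varphi$ both qualify, and the latter is where Proposition~\ref{thm_WPP_dist} is essential: without integrability of $\varphi$ the prior density would not exist and the Bayes expansion would be meaningless. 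A minor technical assumption implicitly needed is that $\mathcal{T}(\cdot,y)$ is a differentiable invertible map with nonvanishing Jacobian determinant, which holds for the flow architectures considered, and that the relevant expectation is finite, so that rearranging the logarithms into a sum is valid.
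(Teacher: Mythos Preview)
Your proposal is correct and matches the paper's intended argument: the paper does not spell out a proof but simply states that the result ``can be deduced from \cite[Prop.~1]{AFHH2021} and \cite{AKRK2019} by inserting the WPP as prior distribution,'' which is exactly the change-of-variables/Bayes decomposition you carry out, together with the use of Proposition~\ref{thm_WPP_dist} to make the WPP prior a genuine density. Your bookkeeping of constants (negative entropy of $P_Z$, Gaussian normalizer, $\log p_Y(y)$, $\log\int\varphi$) is the right way to make the ``up to a constant'' precise.
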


By discretizing the expectation of $P_Y$ by independent samples $y_1,...,y_m$ of $Y$ and using our definition \eqref{eq_prior_px} of the prior distribution $p_X$, we obtain by Proposition \ref{prop_backwardKL} that the loss function from \eqref{eq_backward_KL} is given by
\begin{align*}
\mathcal L(\theta) &\overset{\sim}{=} \frac1{m} \sum_{i=1}^{m} \E_{z\sim P_Z}\Big[\frac1{2\sigma^2}\|f(\mathcal{T}(z;y_i))-y_i\|^2 +\rho  W_2^2\Big(\mu_{\mathcal{T}(z;y_i)},\mu_{\tilde x}\Big)
\\&\phantom{\coloneqq \frac1{m} \sum_{i=1}^{m} \E_{z\sim P_Z}\Big[} - \log \vert \det \nabla \mathcal{T}(z;y_i) \vert \Big],
\end{align*}
where the symbol $\overset{\sim}{=}$ indicates equality up to a constant.
As in Section~\ref{Section_WReg}, we merge the patch distribution of a batch of images such that we obtain the loss function
\begin{align*}
\mathcal L_\mathrm{WPPFlow}(\theta) &\coloneqq \E_{z_1,...,z_m\sim (P_Z)^m}\Big[\frac1{|B|} \sum_{j=1}^{N_B}\frac{1}{b} \sum_{i\in B_j} \frac{\|f(\mathcal{T}(z_i;y_i))-y_i\|^2}{2\sigma^2}
\\&\phantom{\coloneqq \E_{z_1,...,z_m\sim (P_Z)^m}\Big[}  - \log \vert \det \nabla \mathcal{T}(z_i;y_i) \vert + \rho  W_2^2\Big(\frac1b\sum_{i\in B_j}\mu_{\mathcal{T}(z_i;y_i)},\mu_{\tilde x}\Big)\Big].
\end{align*}
We call a normalizing flow trained by this loss function \emph{Wasserstein Patch Prior Flow} (WPPFlow).

\begin{remark}[MCMC methods]
Using Bayes formula, we can evaluate the density of the log-posterior distribution up to a constant by
\begin{equation}\label{eq_posterior}
\begin{aligned}
\log(p_{X|Y=y}(x))&\overset{\sim}{=}\log(p_{Y|X=x}(y))\log(p_X(x))\\&\overset{\sim}{=}-\tfrac{1}{2\sigma^2}\|f(x)-y\|^2-\rho W_2^2(\mu_x,\mu_{\tilde x}).
\end{aligned}
\end{equation}
A classical possibility for sampling from a probability distribution whose density is known up to 
a constant are Markov chain Monte Carlo (MCMC) methods like the Metropolis Hastings algorithm or the
Langevin dynamics, see \cite{RR2004} for an overview.
However, the methods do not scale very well in high dimensions and for images
usually several millions of evaluations of the target density are required.
This makes the application of MCMC methods impossible when the evaluation of the target density is costly.
In our case the posterior density \eqref{eq_posterior} includes the computation of 
the Wasserstein distance of the empirical patch distributions.
Even though there exist efficient algorithms for computing Wasserstein distances,
such a large number of evaluations leads to an intractable computational effort.
For a detailed comparison of MCMC methods and normalizing flows on a smaller problem, we refer to \cite{AFHH2021}.
\end{remark}

%-------------------------------------------------------------------------
\section{Numerical Results} \label{Section_NumResults}
%-------------------------------------------------------------------------
In this section we demonstrate the good performance of our methods.
In Subsection~\ref{sec_textures} and \ref{sec_synchrotron}, we assume 
that the forward operator $f$, consisting of a blur operator and a downsampling operator,
is given exactly.
Then, in Subsection \ref{sec_inaccurate}, we demonstrate the robustness of WPPNets by assuming that the
operator knowledge is inaccurate.
In the first example, 
we generate the low-resolution data by a slightly different operator than we use for learning the WPPNet.
In the second example, 
we consider real data, where the forward operator $f$ is unknown.
We estimate the forward operator $f$ based on one pair of registered images and we use this 
(inaccurate) estimation of the operator for superresolution.
Finally, in Subsection~\ref{Sec_results_uq} we consider the uncertainty in reconstructing from a low-resolution image using WPPFlows.
Details on the architecture and the experimental setup are given in Appendix~\ref{sec_implementation_details}.

We compare our WPPNet with the following methods:
\begin{itemize}
\item \textbf{bicubic interpolation} \cite{K1981}. 
\item \textbf{Plug-and-Play Forward Backward Splitting with DRUNet} (PnP-DRUNet):
we use the DRUNet from \cite{ZLZZ2021} as denoiser $\mathcal G$ in \eqref{eq_PnP_FBS} and run 100 iterations. 
Even with the learned NN, PnP-DRUNet is significantly slower than a simple evaluation of the WPPNet.
\item \textbf{Deep Image Prior with TV regularization} (DIP+TV)
\cite{UVL2018}\footnote{We use the original implementation from \cite{UVL2018} available at\\ \url{https://github.com/DmitryUlyanov/deep-image-prior}}:
note that each reconstruction with the DIP+TV requires the training of a neural network. Thus, the reconstruction time of
DIP+TV is much slower than for the WPPNet.
\item \textbf{ACNN} trained on natural images:
a natural way to overcome the issue of missing (paired) training data 
could be to train a CNN onto natural images and
to hope that it generalizes to the special structured test set.
Following this approach, we compare our results with an asymmetric CNN (ACNN) \cite{Tian21} trained
on the 400 training images from the BSDS500 dataset \cite{MCTM2001}.
As loss function, we use the standard $L^2$-loss.
Afterwards we apply these networks to our special structured test set.
Training and reconstruction time of a ACNN is comparable with those of a WPPNet. 
Note that the architecture we use here is the same as for the WPPNet.
\item \textbf{WPP} from \eqref{WPP}.
\end{itemize}
We compare the WPPFlow with the following method:
\begin{itemize}
\item \textbf{SRFlow} trained on natural images: we train a modified version of the SRFlow \cite{Lugmayr20}.
The loss function used in \cite{Lugmayr20} is the negative log-likelihood which is equivalent to interchanging the two arguments in the loss function \eqref{eq_backward_KL}.
Note that this leads to a loss function which requires a large database of paired training data.
As we assume that such a database is not available, we use the DIV2K images \cite{AT2017} as training images.
 Note that the architecture we use here is the same as for the WPPFlow.
\end{itemize}

We have to emphasize that, except the WPP, these methods do not include some prior information 
and thus have some weaker assumptions than WPPNets. 
Nevertheless, there are not many comparison methods which include only one high-resolution image. 
The intention is to demonstrate the impact of including a comparably small knowledge about the underlying image domain.

To evaluate the quality of our results, we use different quality measures:
\begin{itemize}
\item \textbf{PSNR.} For two images $x$ and $y$ on $[0,1]^{m\times n}$, the peak-signal to noise ration is defined as
$$
\mathrm{PSNR}(x,y)=-10\log_{10}(\tfrac{1}{mn}\|x-y\|^2).
$$
Larger PSNR values correspond to a better reconstruction.
It is well-known that the PSNR prefers very smooth reconstructions which does in general not coincide with the visual
impression. 
\item \textbf{Blur effect \cite{CDLN2007}.} 
This metric is based on comparing an input image $x$ with a blurred version $x_{\mathrm{blur}}$. 
For sharp images $x$, the difference should be very pronounced while it will be small for blurred $x$.
The blur effect is normalized to $[0,1]$, where a small blur effect indicates that $x$ is very sharp while a large blur effect means that $x$ is very blurry.
\item \textbf{LPIPS \cite{ZIESW2018}\footnote{We use the implementation \url{https://github.com/richzhang/PerceptualSimilarity}, version 0.1.}}.
The basic idea of the learned perceptual image patch similarity is to compare the feature maps extracted from some deep neural network that is trained for some classical imaging task which is not necessarily related to our original problem.
A small value of LPIPS indicates a high perceptual similarity.
\item \textbf{SSIM \cite{WBSS04}.}
The structural similarity index measure compares the overall image structure of two images $x$ and $y$ on $[0,1]^{n \times n}$. It is computed by moving a local window at $M$ locations
\begin{align*}
\text{SSIM}(x,y) \coloneqq \frac{1}{M} \sum_{i=1}^M \frac{(2 \mu_x^{(i)} \mu_y^{(i)} + C_1)(2 \sigma_{xy}^{(i)} + C_2)}{((\mu_x^{(i)})^2 + (\mu_y^{(i)})^2 + C_1)((\sigma_x^{(i)})^2 + (\sigma_y^{(i)})^2 + C_2)},    
\end{align*}
where $\mu_x^{(i)}$ and $\mu_y^{(i)}$ are the mean intensity, $\sigma_x^{(i)}$ and $\sigma_y^{(i)}$ are the standard deviation and $\sigma_{xy}^{(i)}$ is the covariance of $x$ and $y$ at the local window $i$. The local window is chosen to be $7\times7$ and the constants $C_1 = (K_1 L)^2$, $C_2 = (K_2 L)^2$ ensure stability, where $K_1 = 0.01$, $K_2 = 0.03$ and $L = \max(x) - \max(y)$ is the data range as in \cite{WBSS04}.
\item \textbf{FSIM \cite{ZZMZ2011}.} The basic idea of the feature-based similarity index (FSIM) is not to compare the raw pixel values, but to compare the similarity of certain feature maps extracted from the images. We use the same feature maps as in the original paper \cite{ZZMZ2011}.
\end{itemize}
Since we do not want to consider boundary effects, we do not consider a boundary of $40$ pixels when evaluating the quality measures.
All figures show the full images and zoom-in parts below them.

\begin{figure}[t]
\centering
\begin{subfigure}{0.197\textwidth}
  \centering
  \includegraphics[width=\linewidth]{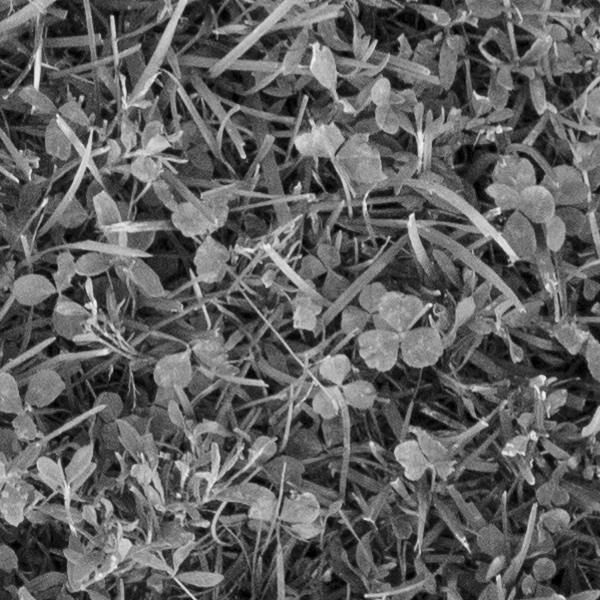}
  \caption{Grass} 
  \label{fig_ref_grass}
\end{subfigure}%
\hfill
\begin{subfigure}{0.197\textwidth}
  \centering
  \includegraphics[width=\linewidth]{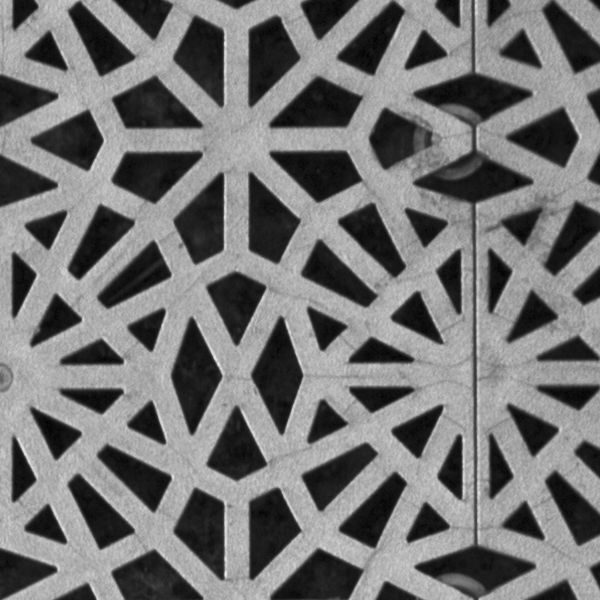}
  \caption{Floor} 
  \label{fig_ref_floor}
\end{subfigure}%
\hfill
\begin{subfigure}{0.197\textwidth}
  \centering
  \includegraphics[width=\linewidth]{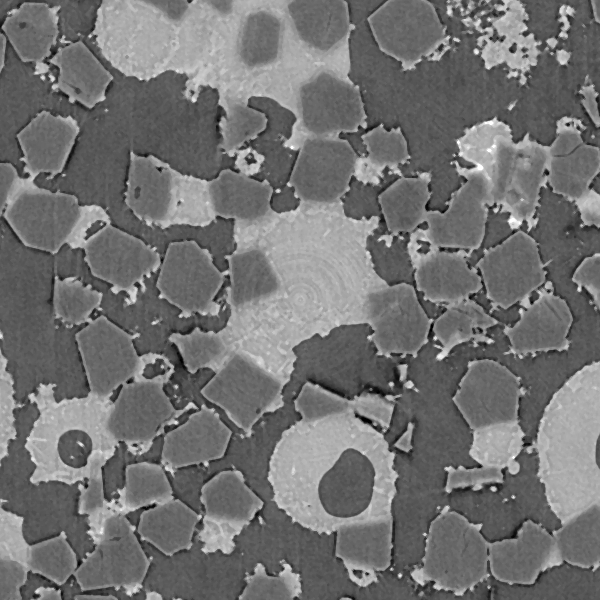}
  \caption{SiC}
  \label{fig_ref_SiC}
\end{subfigure}%
\hfill
\begin{subfigure}{0.197\textwidth}
  \centering
  \includegraphics[width=\linewidth]{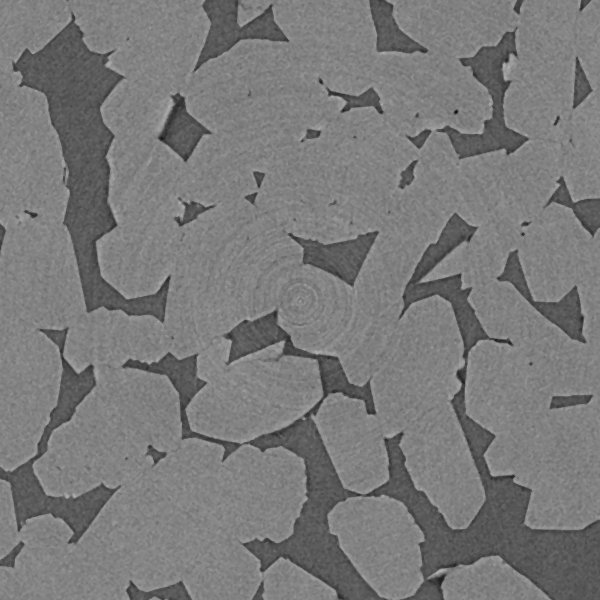}
  \caption{FS} 
  \label{fig_ref_FS}
\end{subfigure}%
\hfill
\begin{subfigure}{0.197\textwidth}
  \centering
  \includegraphics[width=\linewidth]{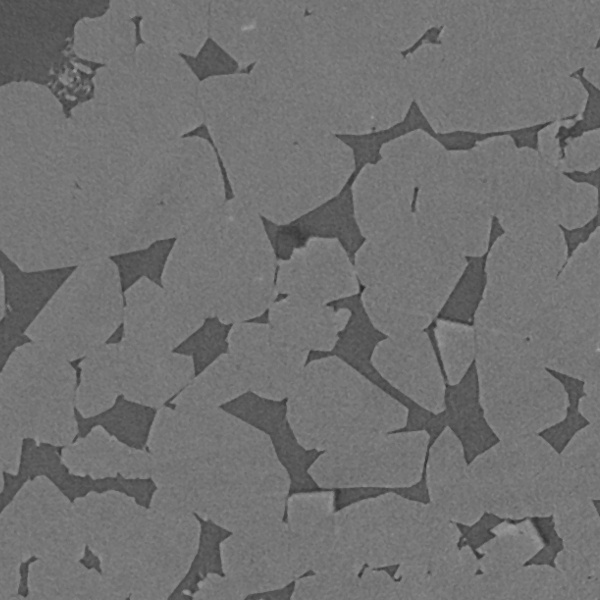}
  \caption{Registered FS} 
  \label{fig_ref_RegFS}
\end{subfigure}%
\caption{Reference images used for the numerical examples.} \label{fig_ground_truth}
\end{figure}

%%%%%%%%%%%%%%%%%%%%%%%%%%%%%%%%%%%%%%%%%%%%%%%%%%%%%%%%%%%%%%%%%%%%%%%%%%%%%%%%%%%%%%
\subsection{Texture Superresolution}\label{sec_textures}

First, we consider the Kylberg texture dataset \cite{K2011}\footnote{available at \url{https://kylberg.org/kylberg-texture-dataset-v-1-0}}.
Here, we use the textures ``Grass'' and ``Floor''. The high-resolution ground truth and the reference image are different $600\times 600$ sections
cropped from the original texture images, see Figures~\ref{fig_ref_grass} and \ref{fig_ref_floor}. Similarly, the low-resolution training data is generated by cropping $100\times 100$
sections from the texture images and artificially downsampling it by a predefined forward operator $f$.
The forward operator $f$ is a convolution with a $16 \times 16$ 
Gaussian blur kernel with standard deviation $2$, 
stride $4$ and $\xi \sim \mathcal{N}(0,0.01^2)$ is some noise. Note that the stride determines the subsampling factor in each direction.
To keep the dimensions consistent, we use zero-padding. As weighting parameter in the WPP-loss \eqref{eq_loss} we used $\lambda = 12.5$.
\footnote{The implementation of this example is available online at \url{https://github.com/FabianAltekrueger/WPPNets}.}

The resulting quality measures are given in Table~\ref{table_errorMeasures_textures} and 
the reconstructions are shown in Figure~\ref{Comp_HRLRPred_texture_grass} and \ref{Comp_HRLRPred_texture_floor}, respectively.
We observe that the WPPNet and WPP lead to significantly sharper and visually better results than the other methods.
However, the WPP requires the minimization of the functional \eqref{WPP} for each reconstruction, which is computationally costly.
Also DIP+TV requires for any reconstruction the training of a NN and for PnP-DRUNet, we have to compute the
iteration \eqref{eq_PnP_FBS} several times.
Thus, the reconstruction time for WPP, DIP+TV and PnP-DRUNet is significantly larger than for WPPNet and ACNN.

Further, we observe that PnP-DRUNet and DIP+TV have a better PSNR value than the WPPNet for the texture ``Floor''.
However, it is well-known that the PSNR as quality measure prefers smooth images.
In terms of the blur effect, LPIPS, FSIM and the visual impression, the WPPNet and WPP are clearly better than the other methods.
Considering the results, we can see that PnP-DRUNet, DIP+TV and ACNN tend to generate oversmoothed images, while WPP and WPPNet tend to oversharpen the reconstruction.
This hypothesis can be underlined, by the fact that the PSNR and SSIM values of the WPPNet reconstruction
can be significantly improved by applying a Gaussian blur filter with standard deviation $0.7$ for grass and $1.0$ for floor on the reconstruction, see Table~\ref{table_errorMeasures_textures} (right).

\begin{figure}[t!]
\centering
\begin{subfigure}[t]{.33\textwidth}
  \includegraphics[width=\linewidth]{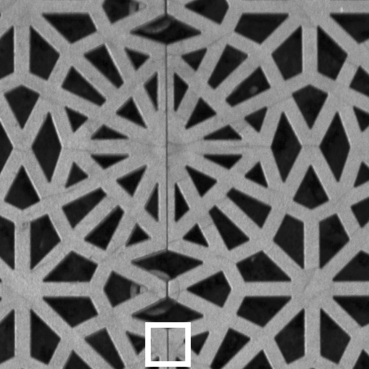}
\end{subfigure}%
\hfill
\begin{subfigure}[t]{.33\textwidth}
  \includegraphics[width=\linewidth]{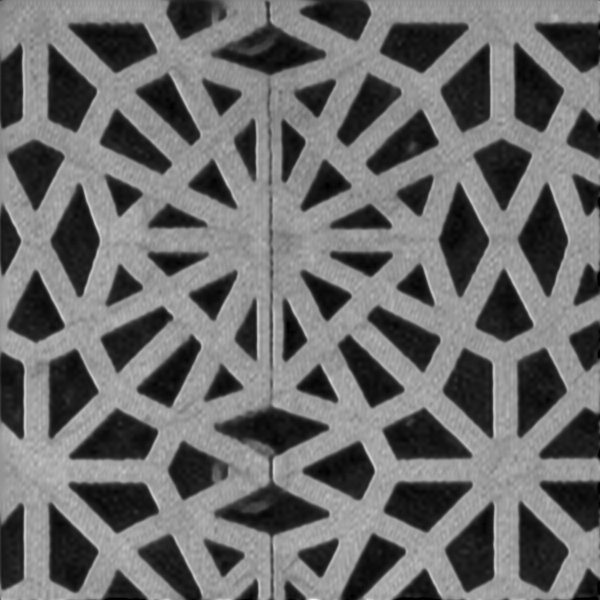}
\end{subfigure}%
\hfill
\begin{subfigure}[t]{.33\textwidth}
  \includegraphics[width=\linewidth]{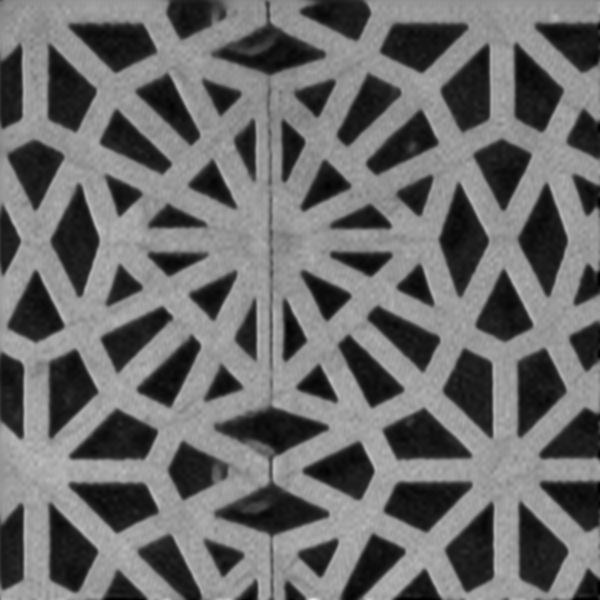}
\end{subfigure}%

\begin{subfigure}[t]{.33\textwidth}
  \includegraphics[width=\linewidth]{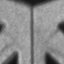}
  \caption*{HR image}
\end{subfigure}%
\hfill
\begin{subfigure}[t]{.33\textwidth}
  \includegraphics[width=\linewidth]{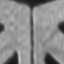}
  \caption*{unblurred}
\end{subfigure}%
\hfill
\begin{subfigure}[t]{.33\textwidth}
  \includegraphics[width=\linewidth]{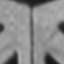}
  \caption*{blurred}
\end{subfigure}%
\caption{Comparison of unblurred and blurred WPPNet reconstruction. The zoomed-in part is marked with a white box in the HR image.} 
\label{Comp_blur_unblur}
\end{figure}

Within the WPPNet reconstruction of the ``Grass'' texture, we observe slightly structured noise. This can be explained by the fact that the reference image is very noisy, see Figure~\ref{fig_ref_grass}.

\begin{table}
\begin{center}
\scalebox{.70}{
\begin{tabular}[t]{c|c|cccccc|c} 
             &             & bicubic & PnP-DRUNet& DIP+TV    & ACNN  & WPP    & WPPNet  & WPPNet blur \\
\hline
             & PSNR        & 22.68   & 24.71  & \textbf{24.97}  & \underline{\textbf{25.06}}  & 24.61 & 24.79 & 24.95 \\ 
Grass        & Blur Effect & 0.5980  & 0.4649 & 0.4410 & 0.4307  & \underline{\textbf{0.4153}} & \textbf{0.4219} & 0.4495\\
             & LPIPS       & 0.4891  & 0.4074 & \textbf{0.2287} & 0.2403  & \underline{\textbf{0.1777}} & 0.2353 & 0.2729 \\
             & SSIM        & 0.6442  & 0.7529 & \textbf{0.7626} & \underline{\textbf{0.7683}} & 0.7473 & 0.7581 & 0.7602\\
             & FSIM        & 0.8820  & 0.9188 & 0.9364 & 0.9328 & \underline{\textbf{0.9420}} & \textbf{0.9378} & 0.9340\\
\hline
Time         &Training     &    -     &  -\footnotemark[6] 
                                                               & -      & 3h       & -        &  8h & \\
             &Reconstruction& 0.0003s &  51.36s     &  114.42s    & 0.05s    &  477.06s     & 0.05s & \\
\hline
\hline
             & PSNR        & 29.03   & \underline{\textbf{32.96}}  & \textbf{32.90}  & 30.35  & 30.39 & 30.99 & 32.86 \\ 
Floor        & Blur Effect & 0.7469  & 0.6803 & 0.6690 & 0.5977  & \underline{\textbf{0.5218}} & \textbf{0.5407} & 0.6187 \\
             & LPIPS       & 0.2568  & 0.2584 & 0.2462 & 0.2795  & \underline{\textbf{0.1647}} & \textbf{0.1705} & 0.1800\\
             & SSIM        & 0.8091  & \underline{\textbf{0.8552}} & \textbf{0.8507} & 0.8345 & 0.7850 & 0.7969 & 0.8413\\
             & FSIM        & 0.9435  & 0.9773 & \textbf{0.9776} & 0.9722 & 0.9752 & \underline{\textbf{0.9796}} & 0.9878\\
\hline
Time         &Training     &    -     &  -\footnotemark[6] 
                                                               & -      &   3h     & -        & 9.5h &  \\
             &Reconstruction& 0.0003s &  51.36s     &  114.42s    & 0.05s    &  477.06s     & 0.05s & \\
\end{tabular}}
\caption{Comparison of superresolution results for the textures ``Grass'' and ``Floor'' (stride 4). 
The best two values are marked in bold, the best one is additionally underlined.}                    
\label{table_errorMeasures_textures}
\end{center}
\end{table}

\footnotetext[6]{For the PnP-DRUNet we used a pretrained denoiser.}

\begin{figure}[t!]
\centering
\begin{subfigure}[t]{.2\textwidth}
  \includegraphics[width=\linewidth]{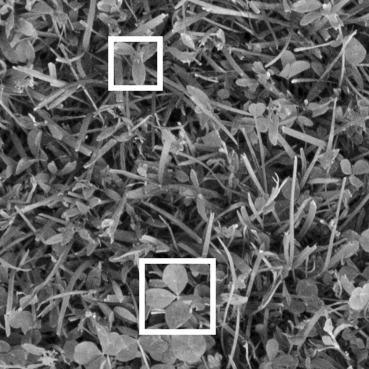}
\end{subfigure}%
%\hfill
\hspace{0.05cm}
\begin{subfigure}[t]{.2\textwidth}
  \includegraphics[width=\linewidth]{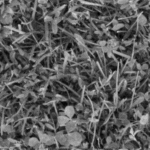}
\end{subfigure}%
\hspace{0.05cm}
%\hfill
\begin{subfigure}[t]{.2\textwidth}
  \includegraphics[width=\linewidth]{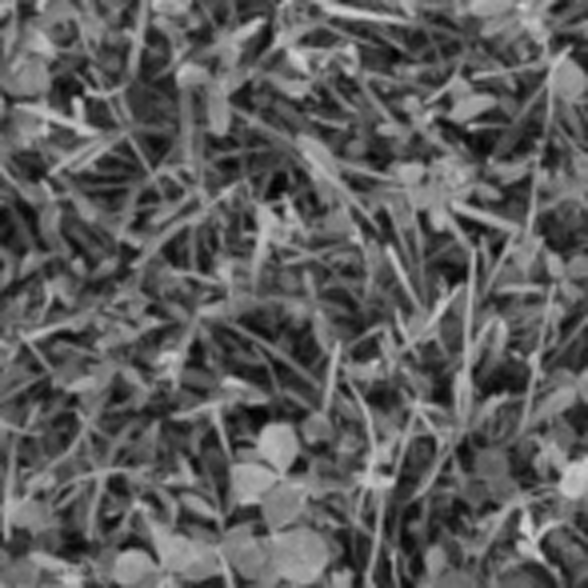}
\end{subfigure}%
\hspace{0.05cm}
%\hfill
\begin{subfigure}[t]{.2\textwidth}
  \includegraphics[width=\linewidth]{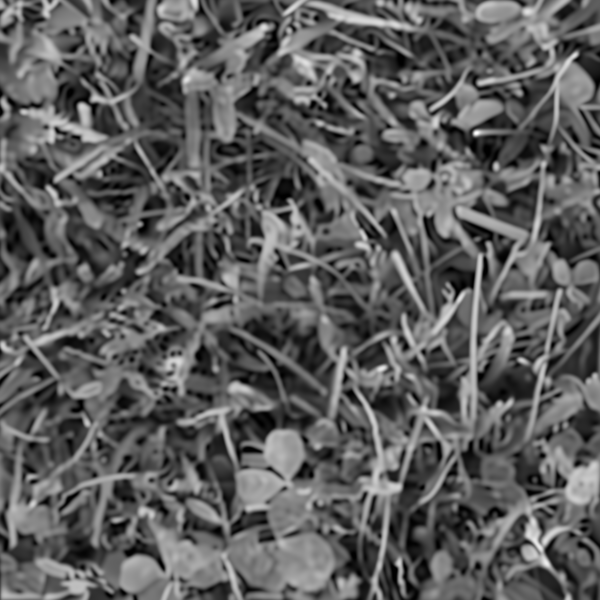}
\end{subfigure}%
%\vfill

\begin{subfigure}[t]{.2\textwidth}
  \includegraphics[width=\linewidth]{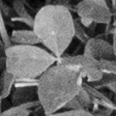}
\end{subfigure}%
\hspace{0.05cm}
%\hfill
\begin{subfigure}[t]{.2\textwidth}
  \includegraphics[width=\linewidth]{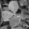}
\end{subfigure}%
\hspace{0.05cm}
%\hfill
\begin{subfigure}[t]{.2\textwidth}
  \includegraphics[width=\linewidth]{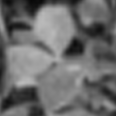}
\end{subfigure}%
\hspace{0.05cm}
%\hfill
\begin{subfigure}[t]{.2\textwidth}
  \includegraphics[width=\linewidth]{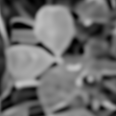}
\end{subfigure}%
%\vfill

\begin{subfigure}[t]{.2\textwidth}
  \includegraphics[width=\linewidth]{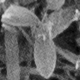}
  \caption*{HR image}
\end{subfigure}%
\hspace{0.05cm}
%\hfill
\begin{subfigure}[t]{.2\textwidth}
  \includegraphics[width=\linewidth]{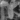}
  \caption*{LR image}
\end{subfigure}%
\hspace{0.05cm}
%\hfill
\begin{subfigure}[t]{.2\textwidth}
  \includegraphics[width=\linewidth]{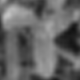}
  \caption*{bicubic}
\end{subfigure}%
\hspace{0.05cm}
%\hfill
\begin{subfigure}[t]{.2\textwidth}
  \includegraphics[width=\linewidth]{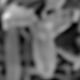}
  \caption*{PnP-DRUNet}
\end{subfigure}%

%%%%%%%%%%%%%%%%%%%%%%%%%%%%%%%%%%%%%%%%%%%%%%%%%%%%%%%%%%%%%%%%%%%%%
%\vfill

\begin{subfigure}[t]{.2\textwidth}
  \includegraphics[width=\linewidth]{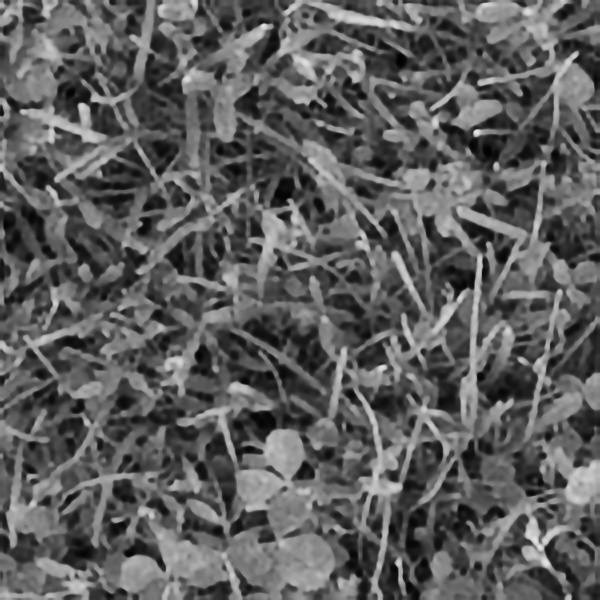}
\end{subfigure}%
\hspace{0.05cm}
%\hfill
\begin{subfigure}[t]{.2\textwidth}
  \includegraphics[width=\linewidth]{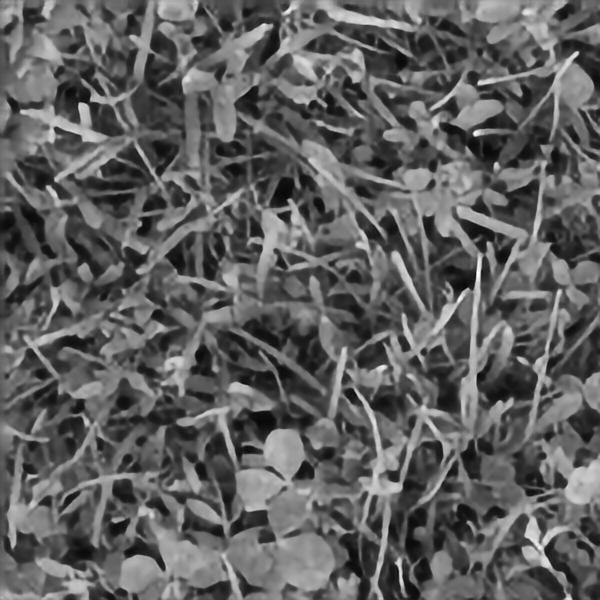}
\end{subfigure}%
\hspace{0.05cm}
%\hfill
\begin{subfigure}[t]{.2\textwidth}
  \includegraphics[width=\linewidth]{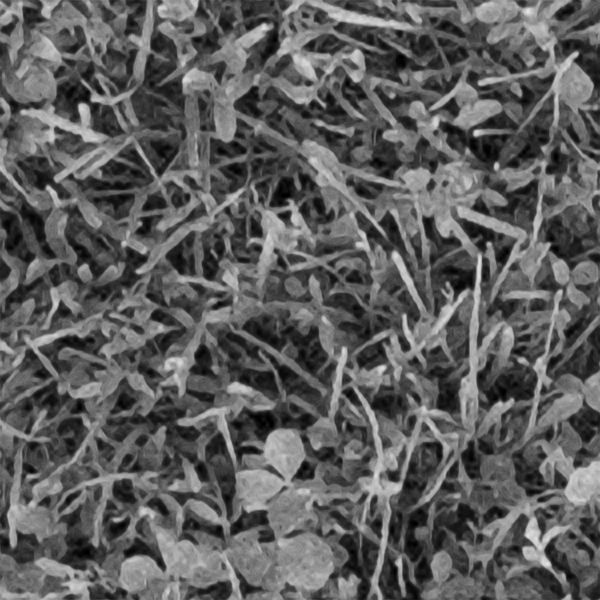}
\end{subfigure}%
\hspace{0.05cm}
%\hfill
\begin{subfigure}[t]{.2\textwidth}
  \includegraphics[width=\linewidth]{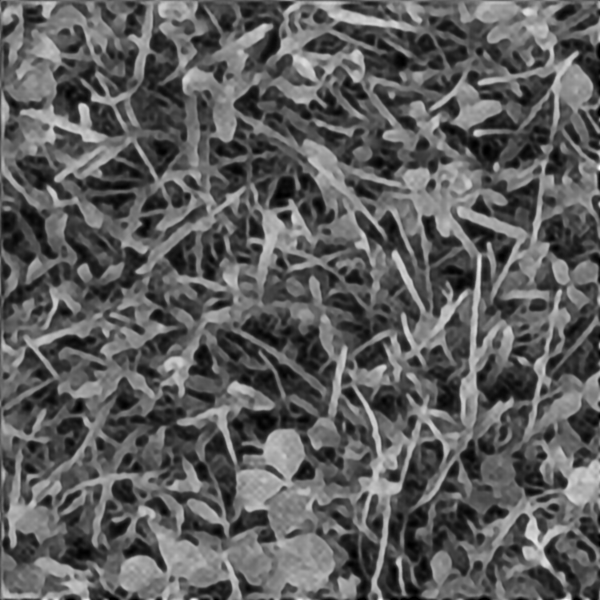}
\end{subfigure}%
%\vfill

\begin{subfigure}[t]{.2\textwidth}
  \includegraphics[width=\linewidth]{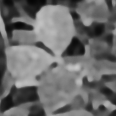}  
\end{subfigure}%
\hspace{0.05cm}
%\hfill
\begin{subfigure}[t]{.2\textwidth}
  \includegraphics[width=\linewidth]{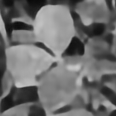}
\end{subfigure}%
\hspace{0.05cm}
%\hfill
\begin{subfigure}[t]{.2\textwidth}
  \includegraphics[width=\linewidth]{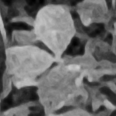}
\end{subfigure}%
\hspace{0.05cm}
%\hfill
\begin{subfigure}[t]{.2\textwidth}
  \includegraphics[width=\linewidth]{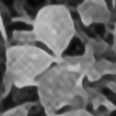}
\end{subfigure}%
%\vfill

\begin{subfigure}[t]{.2\textwidth}
  \includegraphics[width=\linewidth]{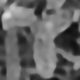}
  \caption*{DIP+TV}
\end{subfigure}%
\hspace{0.05cm}
%\hfill
\begin{subfigure}[t]{.2\textwidth}
  \includegraphics[width=\linewidth]{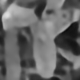}
  \caption*{ACNN}
\end{subfigure}%
\hspace{0.05cm}
%\hfill
\begin{subfigure}[t]{.2\textwidth}
  \includegraphics[width=\linewidth]{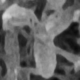}
  \caption*{WPP}
\end{subfigure}%
\hspace{0.05cm}
%\hfill
\begin{subfigure}[t]{.2\textwidth}
  \includegraphics[width=\linewidth]{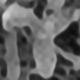}
  \caption*{WPPNet}
\end{subfigure}%
\caption{Comparison of superresolution  for the texture ``Grass'' with stride 4. The zoomed-in parts are marked with a white box in the HR image.} 
\label{Comp_HRLRPred_texture_grass}
\end{figure}
%-----------------------------------------------------
\begin{figure}[t!]
\centering
\begin{subfigure}[t]{.2\textwidth}
  \includegraphics[width=\linewidth]{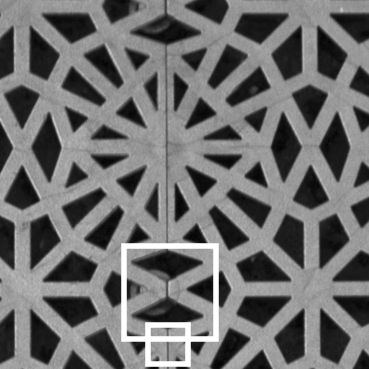}
\end{subfigure}%
\hspace{0.05cm}
%\hfill
\begin{subfigure}[t]{.2\textwidth}
  \includegraphics[width=\linewidth]{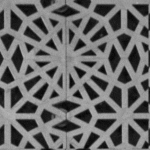}
\end{subfigure}%
\hspace{0.05cm}
%\hfill
\begin{subfigure}[t]{.2\textwidth}
  \includegraphics[width=\linewidth]{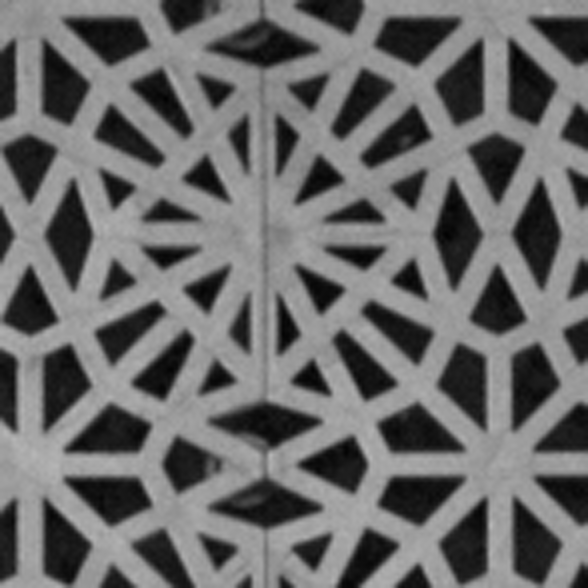}
\end{subfigure}%
\hspace{0.05cm}
%\hfill
\begin{subfigure}[t]{.2\textwidth}
  \includegraphics[width=\linewidth]{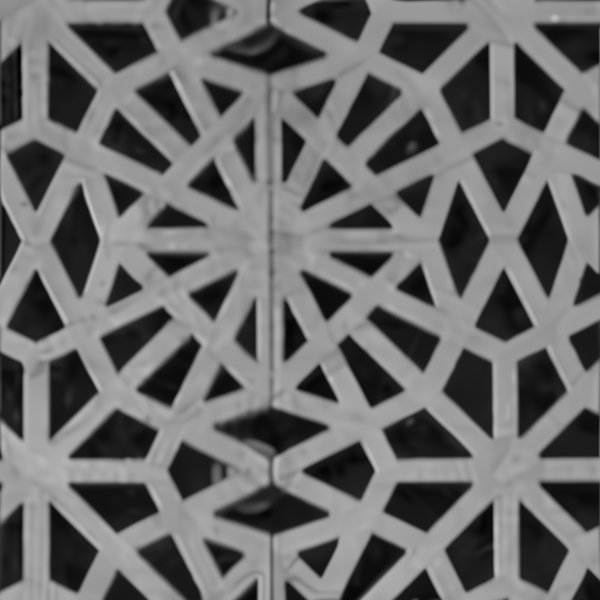}
\end{subfigure}%
%\vfill

\begin{subfigure}[t]{.2\textwidth}
  \includegraphics[width=\linewidth]{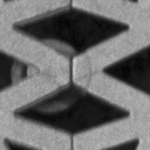}
\end{subfigure}%
\hspace{0.05cm}
%\hfill
\begin{subfigure}[t]{.2\textwidth}
  \includegraphics[width=\linewidth]{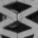}
\end{subfigure}%
\hspace{0.05cm}
%\hfill
\begin{subfigure}[t]{.2\textwidth}
  \includegraphics[width=\linewidth]{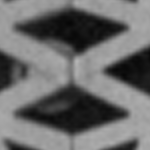}
\end{subfigure}%
\hspace{0.05cm}
%\hfill
\begin{subfigure}[t]{.2\textwidth}
  \includegraphics[width=\linewidth]{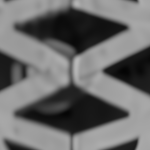}
\end{subfigure}%
%%\vfill

\begin{subfigure}[t]{.2\textwidth}
  \includegraphics[width=\linewidth]{Results/Texture_Floor/hr_zoom2.png}
  \caption*{HR image}
\end{subfigure}%
\hspace{0.05cm}
%\hfill
\begin{subfigure}[t]{.2\textwidth}
  \includegraphics[width=\linewidth]{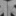}
  \caption*{LR image}
\end{subfigure}%
\hspace{0.05cm}
%\hfill
\begin{subfigure}[t]{.2\textwidth}
  \includegraphics[width=\linewidth]{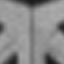}
  \caption*{bicubic}
\end{subfigure}%
\hspace{0.05cm}
%\hfill
\begin{subfigure}[t]{.2\textwidth}
  \includegraphics[width=\linewidth]{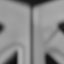}
  \caption*{PnP-DRUNet}
\end{subfigure}%

%%%%%%%%%%%%%%%%%%%%%%%%%%%%%%%%%%%%%%%%%%%%%%%%%%%%%%%%%%%%%%%%%%%%%
%\vfill

\begin{subfigure}[t]{.2\textwidth}
  \includegraphics[width=\linewidth]{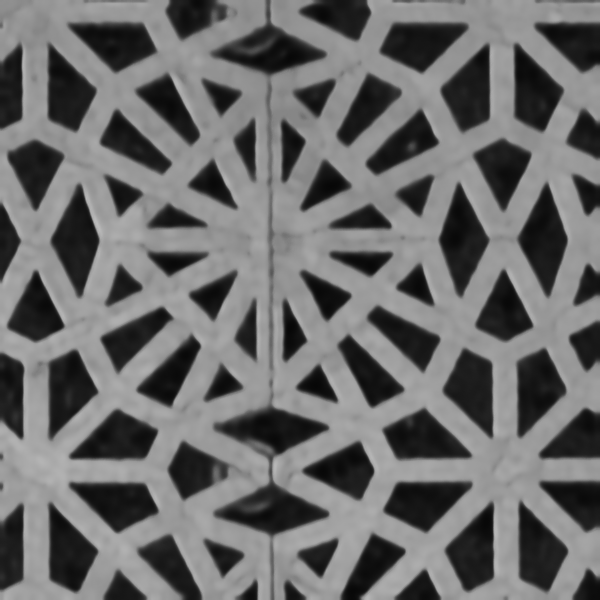}
\end{subfigure}%
\hspace{0.05cm}
%\hfill
\begin{subfigure}[t]{.2\textwidth}
  \includegraphics[width=\linewidth]{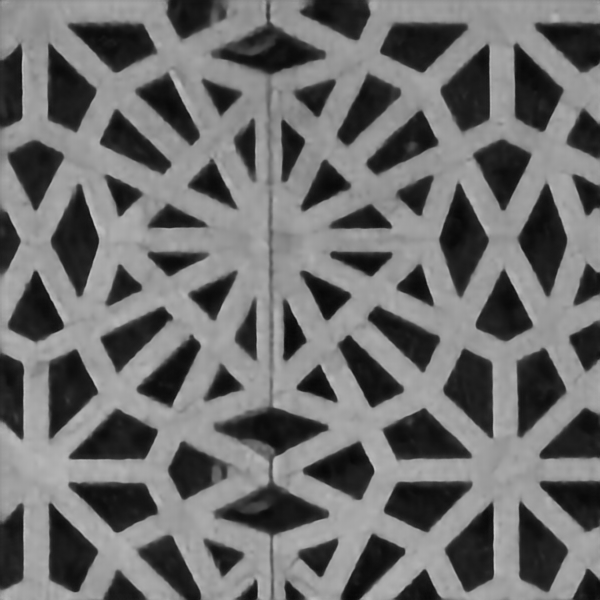}
\end{subfigure}%
\hspace{0.05cm}
%\hfill
\begin{subfigure}[t]{.2\textwidth}
  \includegraphics[width=\linewidth]{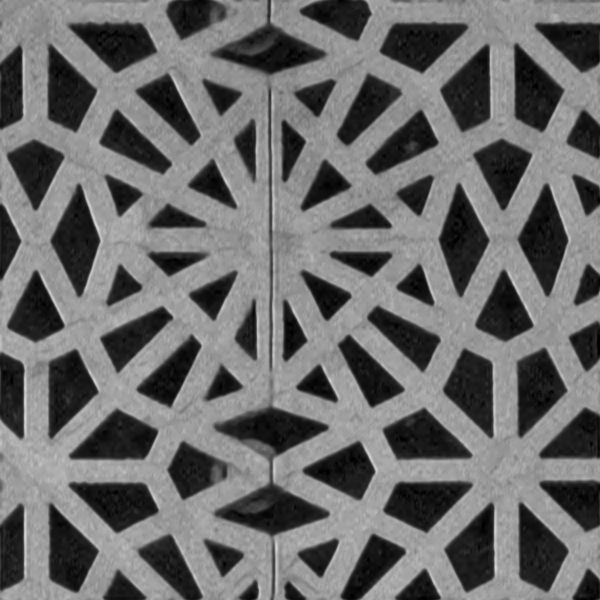}
\end{subfigure}%
\hspace{0.05cm}
%\hfill
\begin{subfigure}[t]{.2\textwidth}
  \includegraphics[width=\linewidth]{Results/Texture_Floor/W2_270.png}
\end{subfigure}%
%\vfill

\begin{subfigure}[t]{.2\textwidth}
  \includegraphics[width=\linewidth]{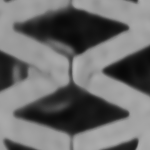}  
\end{subfigure}%
\hspace{0.05cm}
%\hfill
\begin{subfigure}[t]{.2\textwidth}
  \includegraphics[width=\linewidth]{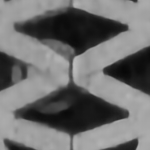}
\end{subfigure}%
\hspace{0.05cm}
%\hfill
\begin{subfigure}[t]{.2\textwidth}
  \includegraphics[width=\linewidth]{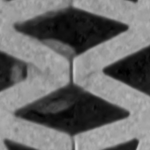}
\end{subfigure}%
\hspace{0.05cm}
%\hfill
\begin{subfigure}[t]{.2\textwidth}
  \includegraphics[width=\linewidth]{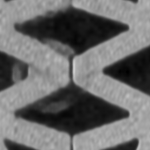}
\end{subfigure}%
%\vfill

\begin{subfigure}[t]{.2\textwidth}
  \includegraphics[width=\linewidth]{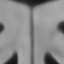}
  \caption*{DIP+TV}
\end{subfigure}%
\hspace{0.05cm}
%\hfill
\begin{subfigure}[t]{.2\textwidth}
  \includegraphics[width=\linewidth]{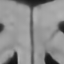}
  \caption*{ACNN}
\end{subfigure}%
\hspace{0.05cm}
%\hfill
\begin{subfigure}[t]{.2\textwidth}
  \includegraphics[width=\linewidth]{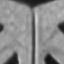}
  \caption*{WPP}
\end{subfigure}%
\hspace{0.05cm}
%\hfill
\begin{subfigure}[t]{.2\textwidth}
  \includegraphics[width=\linewidth]{Results/Texture_Floor/w2_zoom2.png}
  \caption*{WPPNet}
\end{subfigure}%
\caption{Comparison of superresolution for the texture ``Floor'' with stride 4. The zoomed-in parts are marked with a white box in the HR image.}                    
\label{Comp_HRLRPred_texture_floor}
\end{figure}

%%%%%%%%%%%%%%%%%%%%%%%%%%%%%%%%%%%%%%%%%%%%%%%%%%%%%%%%%%%%%%%%%%%%%%%%%%%%%%%%%%%%%%%%%%%%%%%%%%
\subsection{Synchrotron Computed Tomography Data}\label{sec_synchrotron}

Next we consider material data which was also used in \cite{Hertrich21,HNABBSS2020}.
A series of multi-scale 3D images has been acquired by synchrotron micro-computed tomography at the SLS beamline TOMCAT. Samples of two materials were selected to provide 3D images having different levels of complexity, namely
\begin{itemize}
\item[-] ``SiC Diamonds'' obtained by microwave sintering of silicon and diamonds, see \cite{vaucher2007line}.
\item[-] ``FS'' (Fontainebleau sandstone), a rather homogeneous natural rock that is commonly used in the oil industry for flow experiments.
\end{itemize}
In our experiments we consider a voxel spacing of $1.625$ {\textmu}m.
From this 3D image we extract 2D slices of size $600\times 600$ and use them as ground truth and reference images for our experiments, see Figures~\ref{fig_ref_SiC} and \ref{fig_ref_FS}.
Since we require that the forward operator $f$ is known (same $f$ as in Section~\ref{sec_textures}), 
we generate the low-resolution images artificially 
by extracting 2D slices from our 3D image and downsample it using the known predefined forward operator $f$.
In this way, we generate a set of $1000$ low-resolution images of size $25 \times 25$ for training the WPPNet.

The resulting quality measures are given in Table~\ref{table_errorMeasures} and 
the reconstructions are shown in Figure~\ref{Comp_HRLRPred_SiC} and \ref{Comp_HRLRPred_FS}.
Similar as in Subsection \ref{sec_textures}, we observe that the reconstructions with WPPNet and WPP are significantly sharper and visually better than the other methods.
Again, the PSNR prefers in some cases the much smoother reconstructions of DIP+TV and ACNN.
However, the results of WPPNet and WPP look visually much better which is also quantified by smaller values for
LPIPS, FSIM and blur effect.

We apply all methods onto a larger test set in Appendix~\ref{sec_further_exp}.

\begin{table}
\begin{center}
\scalebox{.85}{
\begin{tabular}[t]{c|c|cccccc} 
             &             & bicubic & PnP-DRUNet& DIP+TV   & ACNN  & WPP    & WPPNet \\
\hline
             & PSNR        &  25.34  & 27.43  & \textbf{27.81}  & 27.66   & 27.57  &  \underline{\textbf{27.83}} \\ 
SiC & Blur Effect &  0.5794 & 0.4405 & 0.4046 & 0.4076  & \underline{\textbf{0.3743}} &  \textbf{0.3810} \\
             & LPIPS       &  0.4216 & 0.3133 & 0.2076 & 0.2441  & \underline{\textbf{0.1627}} &  \textbf{0.1819} \\
             & SSIM        & 0.7247  & \textbf{0.7770} & 0.7756 & \underline{\textbf{0.7842}} & 0.7555 & 0.7678 \\
             & FSIM        & 0.8792  & 0.8911 & 0.9344 & 0.9130 & \textbf{0.9443} & \underline{\textbf{0.9449}} \\
\hline
Time         &Training     &    -     &  -    & -      & 6.5h       & -        &  16h  \\
             &Reconstruction& 0.0003s       &  51.36s     &  114.42s    & 0.05s    &  477.06s     & 0.05s  \\
\hline
\hline
             & PSNR        &  29.19  & 31.05  & \textbf{31.35}  & \underline{\textbf{31.68}}   & 31.07  &  30.96 \\ 
FS           & Blur Effect &  0.4856 & 0.4936 & 0.3724 & 0.4276  & \underline{\textbf{0.3225}} &  \textbf{0.3305} \\
             & LPIPS       &  0.3524 & 0.3565 & 0.2162 & 0.2881  & \underline{\textbf{0.1630}} &  \textbf{0.1823} \\
             & SSIM        & 0.7049  & 0.7436 & \textbf{0.7495} & \underline{\textbf{0.7615}} & 0.7275 & 0.7297 \\
             & FSIM        & 0.8561  & 0.7888 & 0.9171 & 0.8429 & \underline{\textbf{0.9505}} & \textbf{0.9439} \\
\hline
Time         &Training     &    -     &  -\footnotemark[6] 
                                                               & -      &   6.5h     & -        &  15h  \\
             &Reconstruction& 0.0003s &  51.36s     &  114.42s    & 0.05s    &  477.06s     & 0.05s  \\
\end{tabular}}
\caption{Comparison of superresolution results for material images with stride 4. 
The best two values are marked in bold, the best one is additionally underlined.}                    
\label{table_errorMeasures}
\end{center}
\end{table}

\begin{figure}
\centering
\begin{subfigure}[t]{.2\textwidth}
  \includegraphics[width=\linewidth]{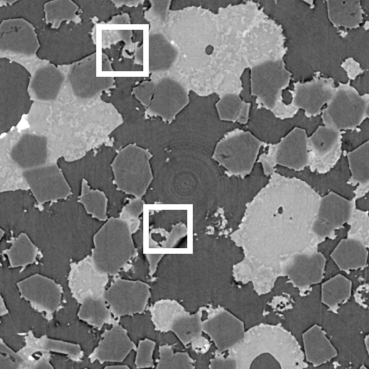}
\end{subfigure}%
\hspace{0.05cm}
%\hfill
\begin{subfigure}[t]{.2\textwidth}
  \includegraphics[width=\linewidth]{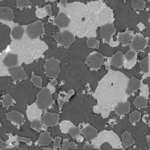}
\end{subfigure}%
\hspace{0.05cm}
%\hfill
\begin{subfigure}[t]{.2\textwidth}
  \includegraphics[width=\linewidth]{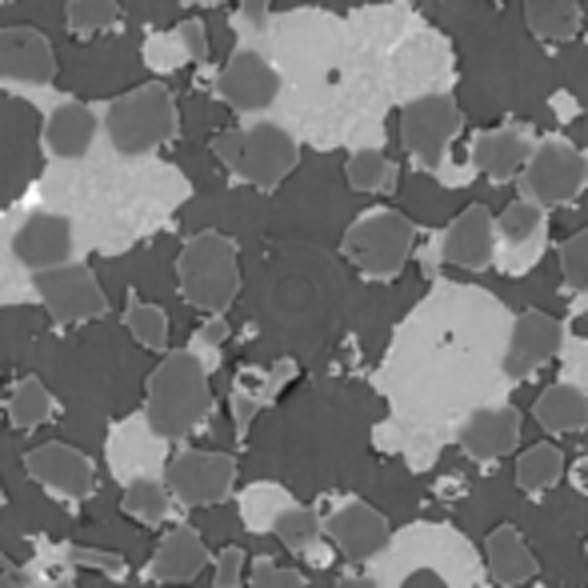}
\end{subfigure}%
\hspace{0.05cm}
%\hfill
\begin{subfigure}[t]{.2\textwidth}
  \includegraphics[width=\linewidth]{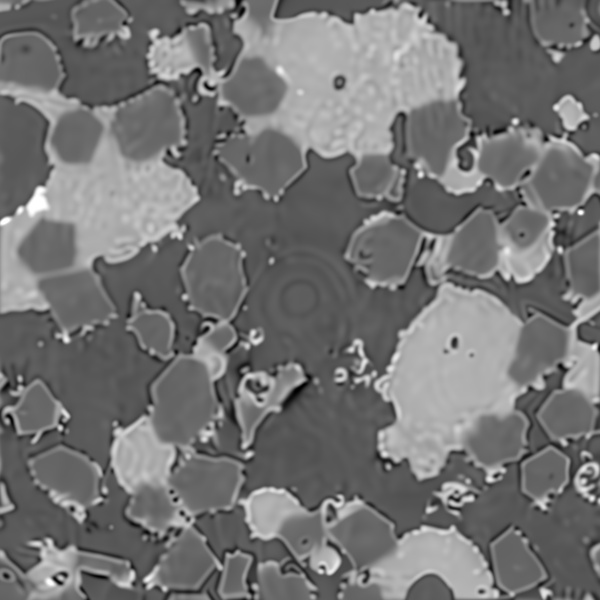}
\end{subfigure}%
\hspace{0.05cm}
%\vfill

\begin{subfigure}[t]{.2\textwidth}
  \includegraphics[width=\linewidth]{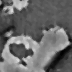}
\end{subfigure}%
\hspace{0.05cm}
%\hfill
\begin{subfigure}[t]{.2\textwidth}
  \includegraphics[width=\linewidth]{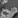}
\end{subfigure}%
\hspace{0.05cm}
%\hfill
\begin{subfigure}[t]{.2\textwidth}
  \includegraphics[width=\linewidth]{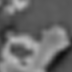}
\end{subfigure}%
\hspace{0.05cm}
%\hfill
\begin{subfigure}[t]{.2\textwidth}
  \includegraphics[width=\linewidth]{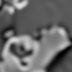}
\end{subfigure}%
\hspace{0.05cm}
%%\vfill

\begin{subfigure}[t]{.2\textwidth}
  \includegraphics[width=\linewidth]{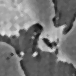}
  \caption*{HR image}
\end{subfigure}%
\hspace{0.05cm}
%\hfill
\begin{subfigure}[t]{.2\textwidth}
  \includegraphics[width=\linewidth]{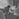}
  \caption*{LR image}
\end{subfigure}%
\hspace{0.05cm}
%\hfill
\begin{subfigure}[t]{.2\textwidth}
  \includegraphics[width=\linewidth]{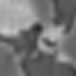}
  \caption*{bicubic}
\end{subfigure}%
\hspace{0.05cm}
%\hfill
\begin{subfigure}[t]{.2\textwidth}
  \includegraphics[width=\linewidth]{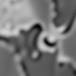}
  \caption*{PnP-DRUNet}
\end{subfigure}%

%%%%%%%%%%%%%%%%%%%%%%%%%%%%%%%%%%%%%%%%%%%%%%%%%%%%%%%%%%%%%%%%%%%%%
%\vfill

\begin{subfigure}[t]{.2\textwidth}
  \includegraphics[width=\linewidth]{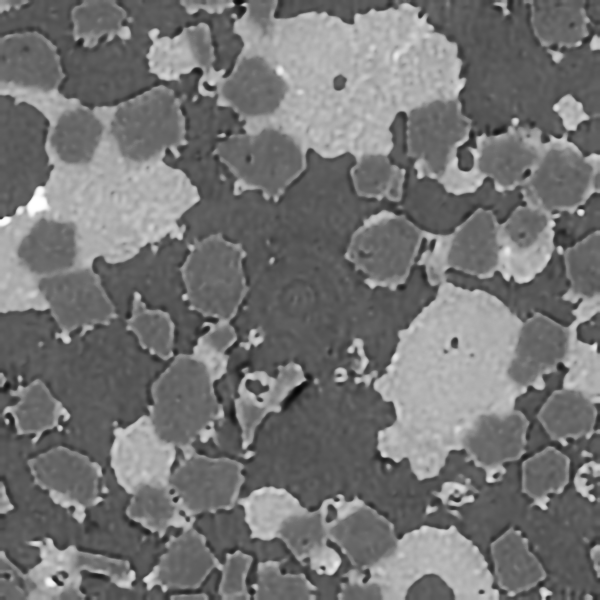}
\end{subfigure}%
\hspace{0.05cm}
%\hfill
\begin{subfigure}[t]{.2\textwidth}
  \includegraphics[width=\linewidth]{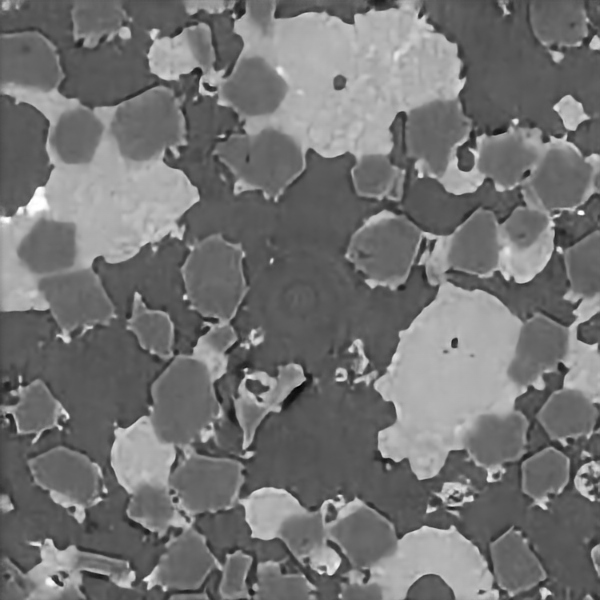}
\end{subfigure}%
\hspace{0.05cm}
%\hfill
\begin{subfigure}[t]{.2\textwidth}
  \includegraphics[width=\linewidth]{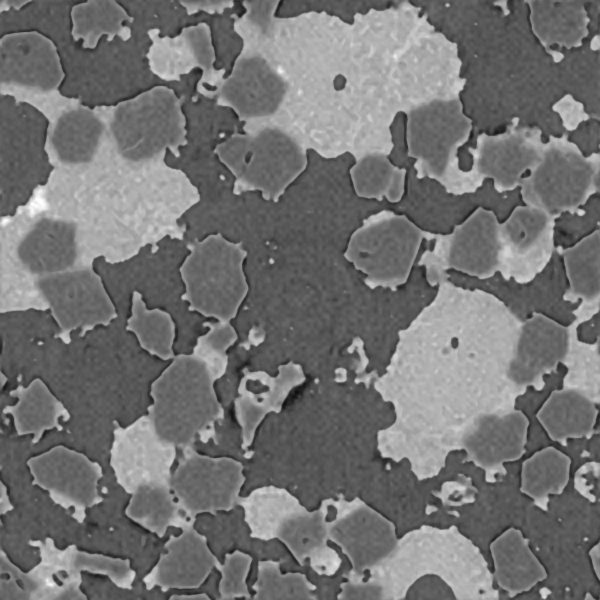}
\end{subfigure}%
\hspace{0.05cm}
%\hfill
\begin{subfigure}[t]{.2\textwidth}
  \includegraphics[width=\linewidth]{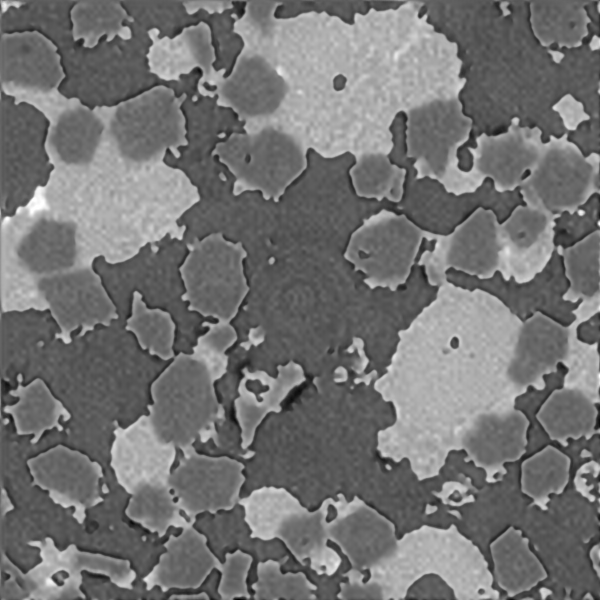}
\end{subfigure}%
\hspace{0.05cm}
%\vfill

\begin{subfigure}[t]{.2\textwidth}
  \includegraphics[width=\linewidth]{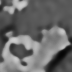}  
\end{subfigure}%
\hspace{0.05cm}
%\hfill
\begin{subfigure}[t]{.2\textwidth}
  \includegraphics[width=\linewidth]{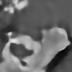}
\end{subfigure}%
\hspace{0.05cm}
%\hfill
\begin{subfigure}[t]{.2\textwidth}
  \includegraphics[width=\linewidth]{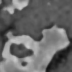}
\end{subfigure}%
\hspace{0.05cm}
%\hfill
\begin{subfigure}[t]{.2\textwidth}
  \includegraphics[width=\linewidth]{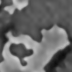}
\end{subfigure}%
%\vfill

\begin{subfigure}[t]{.2\textwidth}
  \includegraphics[width=\linewidth]{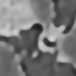}
  \caption*{DIP+TV}
\end{subfigure}%
\hspace{0.05cm}
%\hfill
\begin{subfigure}[t]{.2\textwidth}
  \includegraphics[width=\linewidth]{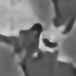}
  \caption*{ACNN}
\end{subfigure}%
\hspace{0.05cm}
%\hfill
\begin{subfigure}[t]{.2\textwidth}
  \includegraphics[width=\linewidth]{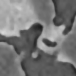}
  \caption*{WPP}
\end{subfigure}%
\hspace{0.05cm}
%\hfill
\begin{subfigure}[t]{.2\textwidth}
  \includegraphics[width=\linewidth]{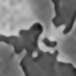}
  \caption*{WPPNet}
\end{subfigure}%
\caption{Comparison of superresolution  for the material ``SiC Diamonds'' with stride~4. The zoomed-in parts are marked with a white box in the HR image.} \label{Comp_HRLRPred_SiC}
\end{figure}
%%%%%%%%%%%%%%%%%%%%%%%%%%%%%%%%%%%%%%%%%%%%%%%%%%%%%%%%%%%%%%%%%%%%%%%%%%%%%%%%%%%%%%%%%%%%%%%%%%%%%%%%%

\begin{figure}[t!]
\centering
\begin{subfigure}[t]{.2\textwidth}
  \includegraphics[width=\linewidth]{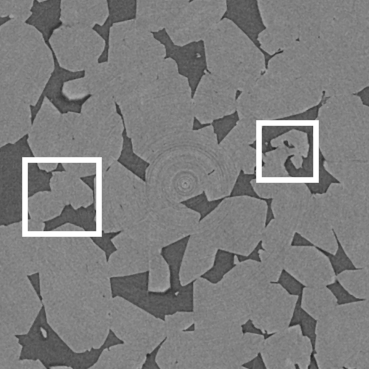}
\end{subfigure}%
\hspace{0.05cm}
%\hfill
\begin{subfigure}[t]{.2\textwidth}
  \includegraphics[width=\linewidth]{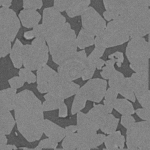}
\end{subfigure}%
\hspace{0.05cm}
%\hfill
\begin{subfigure}[t]{.2\textwidth}
  \includegraphics[width=\linewidth]{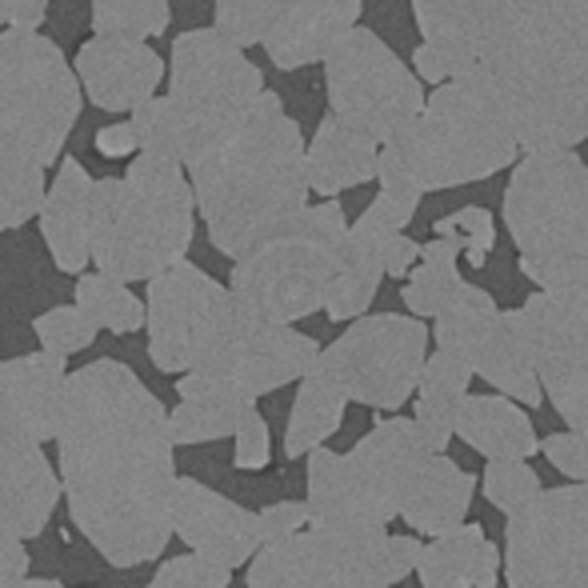}
\end{subfigure}%
\hspace{0.05cm}
%\hfill
\begin{subfigure}[t]{.2\textwidth}
  \includegraphics[width=\linewidth]{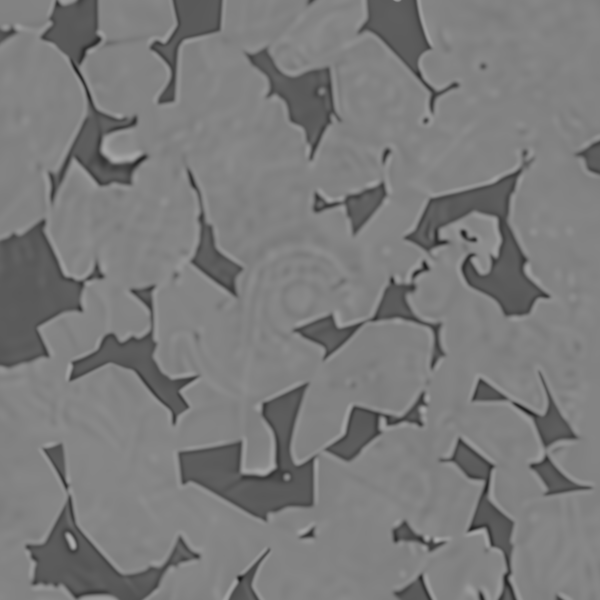}
\end{subfigure}%
\hspace{0.05cm}
%\vfill

\begin{subfigure}[t]{.2\textwidth}
  \includegraphics[width=\linewidth]{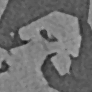}
\end{subfigure}%
\hspace{0.05cm}
%\hfill
\begin{subfigure}[t]{.2\textwidth}
  \includegraphics[width=\linewidth]{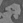}
\end{subfigure}%
\hspace{0.05cm}
%\hfill
\begin{subfigure}[t]{.2\textwidth}
  \includegraphics[width=\linewidth]{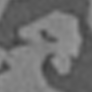}
\end{subfigure}%
\hspace{0.05cm}
%\hfill
\begin{subfigure}[t]{.2\textwidth}
  \includegraphics[width=\linewidth]{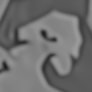}
\end{subfigure}%
\hspace{0.05cm}
%\vfill

\begin{subfigure}[t]{.2\textwidth}
  \includegraphics[width=\linewidth]{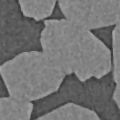}
  \caption*{HR image}
\end{subfigure}%
\hspace{0.05cm}
%\hfill
\begin{subfigure}[t]{.2\textwidth}
  \includegraphics[width=\linewidth]{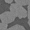}
  \caption*{LR image}
\end{subfigure}%
\hspace{0.05cm}
%\hfill
\begin{subfigure}[t]{.2\textwidth}
  \includegraphics[width=\linewidth]{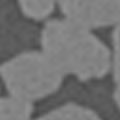}
  \caption*{bicubic}
\end{subfigure}%
\hspace{0.05cm}
%\hfill
\begin{subfigure}[t]{.2\textwidth}
  \includegraphics[width=\linewidth]{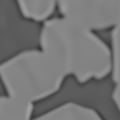}
  \caption*{PnP-DRUNet}
\end{subfigure}%
\hspace{0.05cm}
%%%%%%%%%%%%%%%%%%%%%%%%%%%%%%%%%%%%%%%%%%%%%%%%%%%%%%%%%%%%%%%%%%%%%
%\vfill24

\begin{subfigure}[t]{.2\textwidth}
  \includegraphics[width=\linewidth]{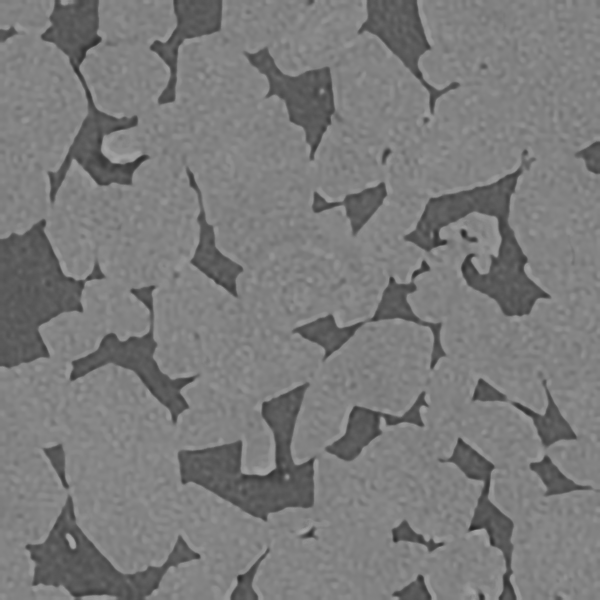}
\end{subfigure}%
\hspace{0.05cm}
%\hfill
\begin{subfigure}[t]{.2\textwidth}
  \includegraphics[width=\linewidth]{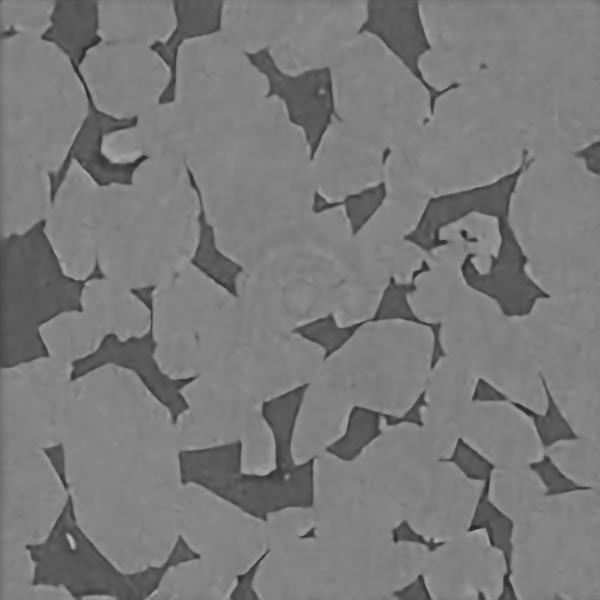}
\end{subfigure}%
\hspace{0.05cm}
%\hfill
\begin{subfigure}[t]{.2\textwidth}
  \includegraphics[width=\linewidth]{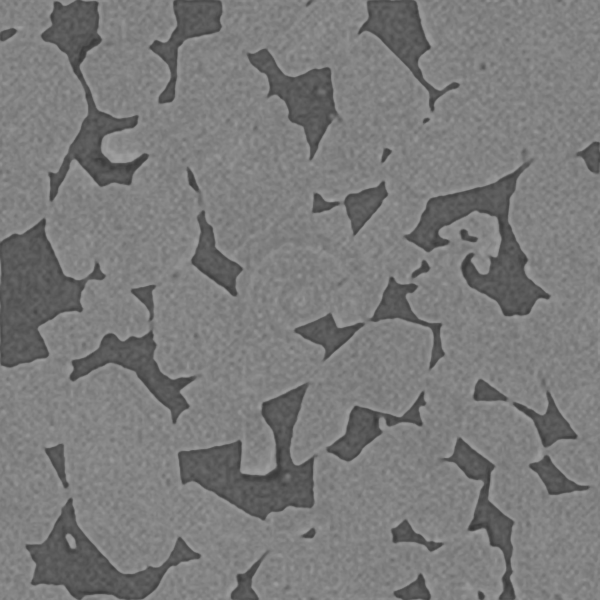}
\end{subfigure}%
\hspace{0.05cm}
%\hfill
\begin{subfigure}[t]{.2\textwidth}
  \includegraphics[width=\linewidth]{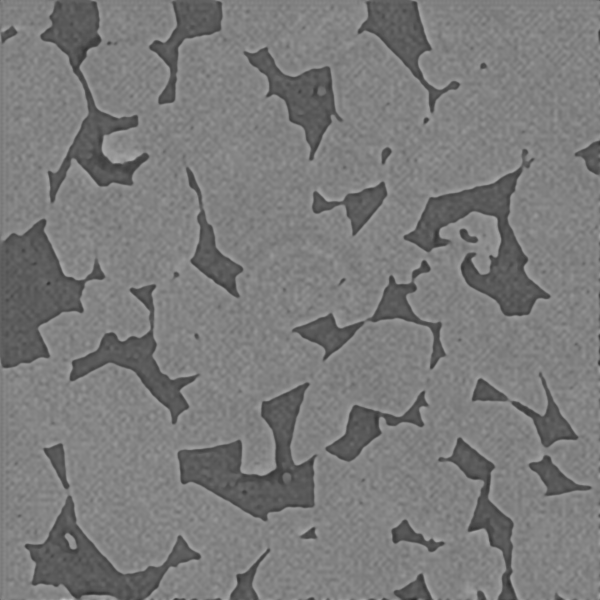}
\end{subfigure}%
%\vfill

\begin{subfigure}[t]{.2\textwidth}
  \includegraphics[width=\linewidth]{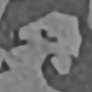}  
\end{subfigure}%
\hspace{0.05cm}
%\hfill
\begin{subfigure}[t]{.2\textwidth}
  \includegraphics[width=\linewidth]{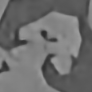}
\end{subfigure}%
\hspace{0.05cm}
%\hfill
\begin{subfigure}[t]{.2\textwidth}
  \includegraphics[width=\linewidth]{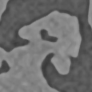}
\end{subfigure}%
\hspace{0.05cm}
%\hfill
\begin{subfigure}[t]{.2\textwidth}
  \includegraphics[width=\linewidth]{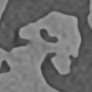}
\end{subfigure}%
%\vfill

\begin{subfigure}[t]{.2\textwidth}
  \includegraphics[width=\linewidth]{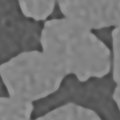}
  \caption*{DIP+TV}
\end{subfigure}%
\hspace{0.05cm}
%\hfill
\begin{subfigure}[t]{.2\textwidth}
  \includegraphics[width=\linewidth]{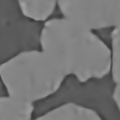}
  \caption*{ACNN}
\end{subfigure}%
\hspace{0.05cm}
%\hfill
\begin{subfigure}[t]{.2\textwidth}
  \centering
  \includegraphics[width=\linewidth]{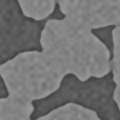}
  \caption*{WPP}
\end{subfigure}%
\hspace{0.05cm}
%\hfill
\begin{subfigure}[t]{.2\textwidth}
  \includegraphics[width=\linewidth]{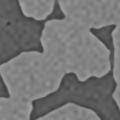}
  \caption*{WPPNet}
\end{subfigure}%
\caption{Comparison of superresolution  for the material ``FS'' with stride 4. The zoomed-in parts are marked with a white box in the HR image.} \label{Comp_HRLRPred_FS}
\end{figure}

%%%%%%%%%%%%%%%%%%%%%%%%%%%%%%%%%%%%%%%%%%%%%%%%%%%%%%%%%%%%%%%%%%%%%%%%%%%%%%
\paragraph{Higher Magnification Factor}

Additionally to the previous examples, we apply WPPNets for superresolution with a magnification factor of 6 onto the SiC Diamonds image.
Here, the forward operator $f$ is given by a convolution with a $16 \times 16$ Gaussian blur kernel with standard deviation 3, 
stride 6 and zero-padding. As before, we set the noise to $\xi \sim \mathcal{N}(0,0.01^2)$. 
We use the same ground truth and reference image as before, illustrated in Figure \ref{fig_ground_truth}. 
The resulting quality measures are given in Table~\ref{table_errors_magnif6} and the reconstructions are shown in Figure~\ref{fig_magnificationx6}. 

\begin{table}[b!]
\begin{center}
\scalebox{.85}{
\begin{tabular}[t]{c|c|cccccc} 
             &              & bicubic & PnP-DRUNet& DIP+TV    & ACNN  & WPP    & WPPNet \\
\hline
             & PSNR         &  22.98  & 24.36  & \underline{\textbf{24.82}}  & 24.20   & \textbf{24.51}  &  24.44 \\ 
SiC & Blur Effect  &  0.6875 & 0.6139 & 0.5036 & 0.4476  & \underline{\textbf{0.4081}} &  \textbf{0.4151} \\
             & LPIPS        &  0.5988 & 0.4987 & 0.3602 & 0.3306  & \underline{\textbf{0.2446}} &  \textbf{0.2636} \\
             & SSIM        & 0.5844  & 0.6471 & \textbf{0.6500} & \underline{\textbf{0.6515}} & 0.6488 & 0.6471 \\
             & FSIM        & 0.8371  & 0.8169 & 0.8550 & 0.8371 & \underline{\textbf{0.8653}} & \textbf{0.8641} \\
\hline
Time         &Training     &    -     &  -\footnotemark[6] 
                                                               & -      &  5.5h      & -        & 28h   \\
             &Reconstruction& 0.0003s &  51.36s     &  114.42s    & 0.05s    &  477.06s     & 0.05s  \\
\end{tabular}}
\caption{Comparison of superresolution results for the material  ``SiC'' with  stride 6. The best two values are marked in bold, the best one is additionally underlined.}                    
\label{table_errors_magnif6}
\end{center}
\end{table}

\begin{figure}[t!]
\centering
\begin{subfigure}[t]{.2\textwidth}
  \includegraphics[width=\linewidth]{Results/SiC/img_hr_rectangle.png}
\end{subfigure}%
\hspace{0.05cm}
%\hfill
\begin{subfigure}[t]{.2\textwidth}
  \includegraphics[width=\linewidth]{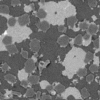}
\end{subfigure}%
\hspace{0.05cm}
%\hfill
\begin{subfigure}[t]{.2\textwidth}
  \includegraphics[width=\linewidth]{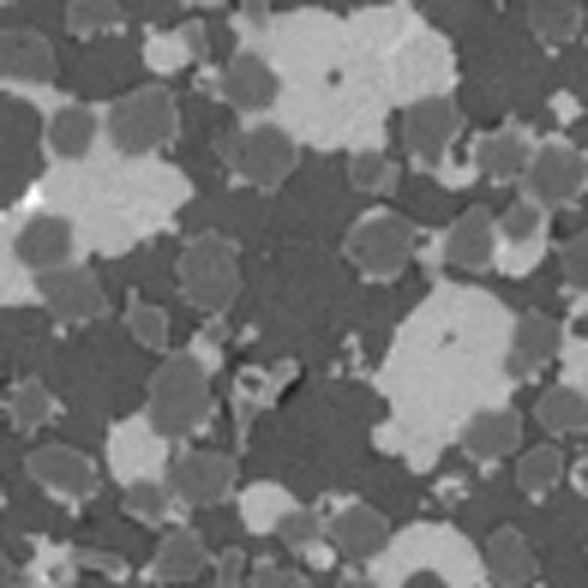}
\end{subfigure}%
\hspace{0.05cm}
%\hfill
\begin{subfigure}[t]{.2\textwidth}
  \includegraphics[width=\linewidth]{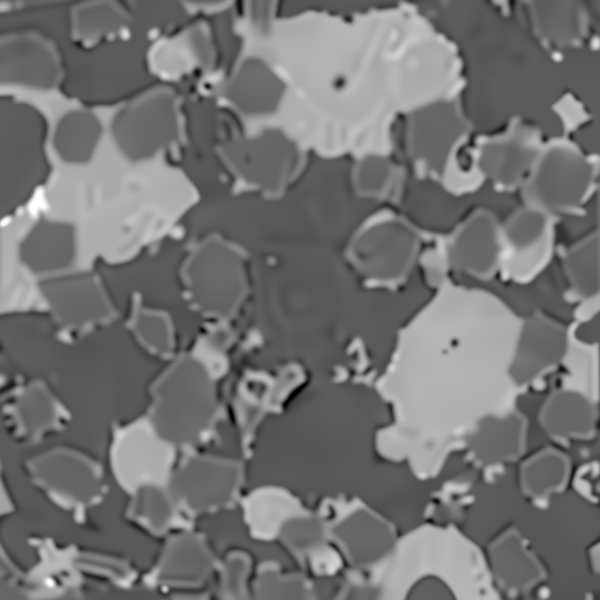}
\end{subfigure}%
%\vfill

\begin{subfigure}[t]{.2\textwidth}
  \includegraphics[width=\linewidth]{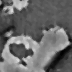}
\end{subfigure}%
\hspace{0.05cm}
%\hfill
\begin{subfigure}[t]{.2\textwidth}
  \includegraphics[width=\linewidth]{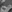}
\end{subfigure}%
\hspace{0.05cm}
%\hfill
\begin{subfigure}[t]{.2\textwidth}
  \includegraphics[width=\linewidth]{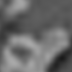}
\end{subfigure}%
\hspace{0.05cm}
%\hfill
\begin{subfigure}[t]{.2\textwidth}
  \includegraphics[width=\linewidth]{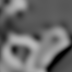}
\end{subfigure}%
%\vfill

\begin{subfigure}[t]{.2\textwidth}
  \includegraphics[width=\linewidth]{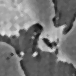}
  \caption*{HR image}
\end{subfigure}%
\hspace{0.05cm}
%\hfill
\begin{subfigure}[t]{.2\textwidth}
  \includegraphics[width=\linewidth]{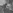}
  \caption*{LR image}
\end{subfigure}%
\hspace{0.05cm}
%\hfill
\begin{subfigure}[t]{.2\textwidth}
  \includegraphics[width=\linewidth]{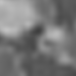}
  \caption*{bicubic}
\end{subfigure}%
\hspace{0.05cm}
%\hfill
\begin{subfigure}[t]{.2\textwidth}
  \includegraphics[width=\linewidth]{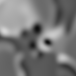}
  \caption*{PnP-DRUNet}
\end{subfigure}%
%%%%%%%%%%%%%%%%%%%%%%%%%%%%%%%%%%%%%%%%%%%%%%%%%%%%%%%%%%%%%%%%%%%%%
%\vfill

\begin{subfigure}[t]{.2\textwidth}
  \includegraphics[width=\linewidth]{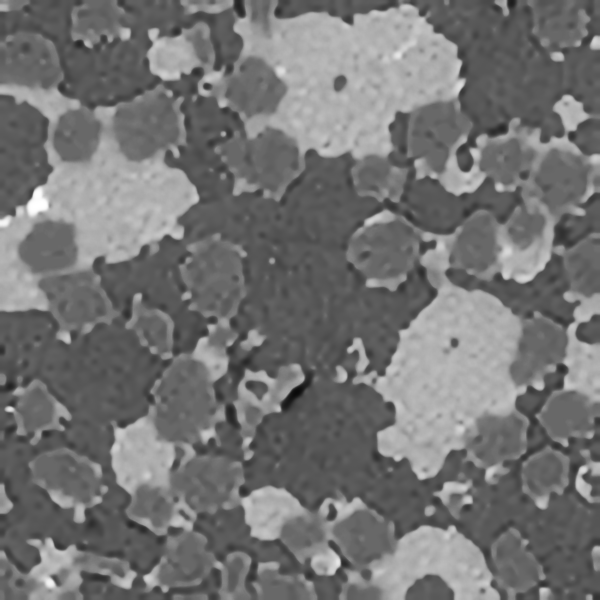}
\end{subfigure}%
\hspace{0.05cm}
%\hfill
\begin{subfigure}[t]{.2\textwidth}
  \includegraphics[width=\linewidth]{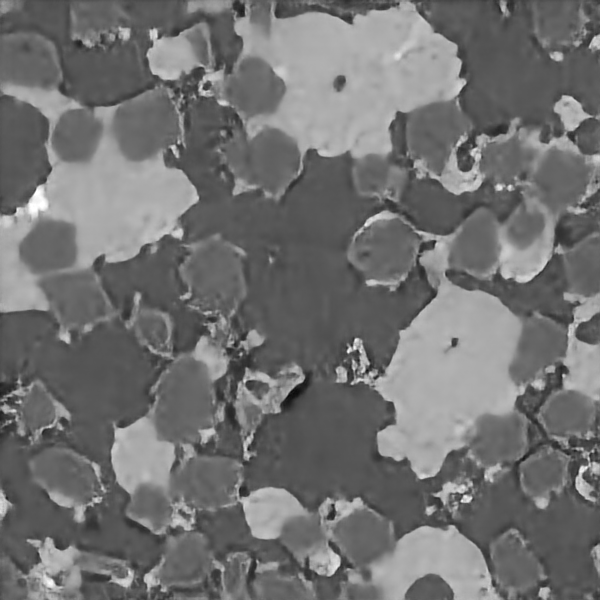}
\end{subfigure}%
\hspace{0.05cm}
%\hfill
\begin{subfigure}[t]{.2\textwidth}
  \includegraphics[width=\linewidth]{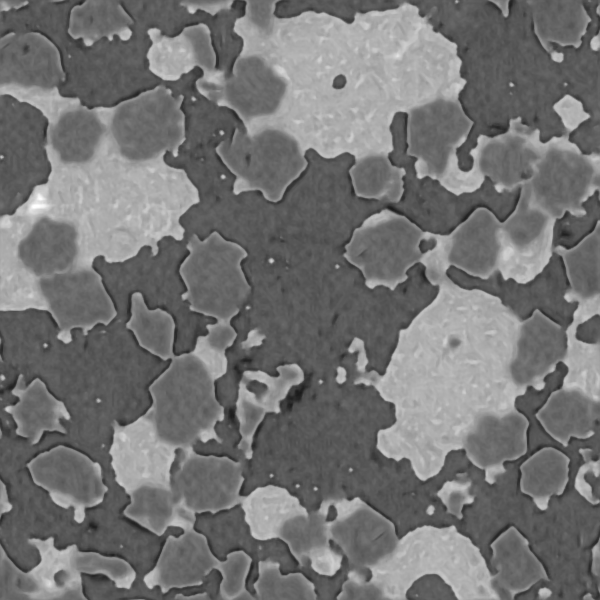}
\end{subfigure}%
\hspace{0.05cm}
%\hfill
\begin{subfigure}[t]{.2\textwidth}
  \includegraphics[width=\linewidth]{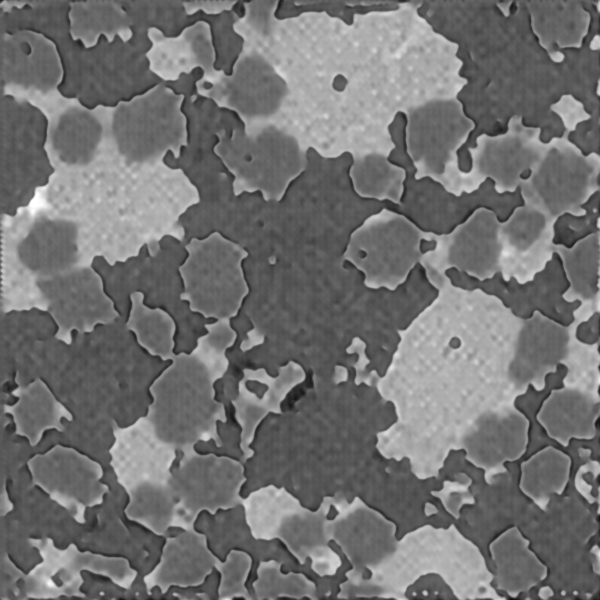}
\end{subfigure}%
%\vfill

\begin{subfigure}[t]{.2\textwidth}
  \includegraphics[width=\linewidth]{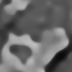}  
\end{subfigure}%
\hspace{0.05cm}
%\hfill
\begin{subfigure}[t]{.2\textwidth}
  \includegraphics[width=\linewidth]{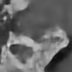}
\end{subfigure}%
\hspace{0.05cm}
%\hfill
\begin{subfigure}[t]{.2\textwidth}
  \includegraphics[width=\linewidth]{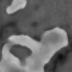}
\end{subfigure}%
\hspace{0.05cm}
%\hfill
\begin{subfigure}[t]{.2\textwidth}
  \includegraphics[width=\linewidth]{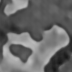}
\end{subfigure}%
%\vfill

\begin{subfigure}[t]{.2\textwidth}
  \includegraphics[width=\linewidth]{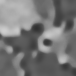}
  \caption*{DIP+TV}
\end{subfigure}%
\hspace{0.05cm}
%\hfill
\begin{subfigure}[t]{.2\textwidth}
  \includegraphics[width=\linewidth]{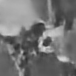}
  \caption*{ACNN}
\end{subfigure}%
\hspace{0.05cm}
%\hfill
\begin{subfigure}[t]{.2\textwidth}
  \centering
  \includegraphics[width=\linewidth]{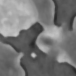}
  \caption*{WPP}
\end{subfigure}%
\hspace{0.05cm}
%\hfill
\begin{subfigure}[t]{.2\textwidth}
  \includegraphics[width=\linewidth]{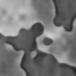}
  \caption*{WPPNet}
\end{subfigure}%
\caption{Comparison of superresolution for the material ``SiC'' with  stride 6. The zoomed-in parts are marked with a white box in the HR image.} \label{fig_magnificationx6}
\end{figure} 

%------------------------------------------------------------------------------------------
\subsection{Stability under Inaccurate Operators}\label{sec_inaccurate}

In this subsection, we demonstrate the robustness of WPPNets against inaccurate knowledge of the forward operator $f$.
Our data is given by the SiC Diamonds image.

\subsubsection{Inaccurate Forward Operator}

In this example, we generate the low-resolution observations using the forward operator $f_\mathrm{true}$ given by a strided
convolution with a $16\times 16$ Gaussian blur kernel with standard deviation $2$, stride $4$ and zero padding as in the previous 
subsection.
On the other hand, we train the WPPNet and ACNN with an inaccurate forward operator $f_\mathrm{inacc}$, which we also use
for the reconstruction with DIP+TV, PnP-DRUNet and WPP. 
The operator $f_\mathrm{inacc}$ is given in the same way as $f_\mathrm{true}$ with the only difference, that we use a different standard deviation of $1.0$, $1.5$, $2.5$ and $3.0$.

The resulting quality measures are given in Table~\ref{table_errorMeasures_inaccurate} and the reconstructions in 
Figure~\ref{Fig_inaccurateforward}.
We observe that WPPNet and WPP are much more robust against the inaccurate operator than all of the comparison methods. 
If the standard deviation of the inaccurate forward operator is smaller than the true one, we can observe a blur in the reconstructions of PnP-DRUNet, DIP+TV and ACNN, while the WPP and WPPNet reconstructions yield sharp edges. 
For larger standard deviations, there appear artifacts in the reconstructions of DIP+TV and ACNN, while the PnP-DRUNet reconstruction is still blurry.
On the other hand, the reconstruction using WPPNet or WPP is still close to the results using the accurate operator
from the previous section.
Note that for large standard deviation ACNN and DIP+TV have a very small blur effect due to the large number of high-frequency artifacts.

\begin{table}[t]
\begin{center}
\scalebox{.8}{
\begin{tabular}[t]{c|c|ccccc} 
Standard deviation &        & PnP-DRUNet& DIP+TV    & ACNN  & WPP    & WPPNet \\
\hline
             & PSNR         & 24.62  & \textbf{26.15}  & \underline{\textbf{26.17}}  & 25.73  & 26.03 \\ 
1.0          & Blur Effect  & 0.5769 & 0.4860 & 0.5005  & \underline{\textbf{0.3776}} &  \textbf{0.3841} \\
             & LPIPS        & 0.4511 & 0.2969 & 0.2846 & \underline{\textbf{0.1785}} &  \textbf{0.1862} \\ 
             & SSIM        & 0.6555  & \textbf{0.7347} & \underline{\textbf{0.7375}} & 0.7081 & 0.7326  \\
             & FSIM        & 0.8553  & \underline{\textbf{0.9182}} & 0.9074 & \textbf{0.9097} & 0.9087 \\
\hline
Time         &Training       &  -\footnotemark[6]    & -      &  1h      & -     & 17h      \\
             &Reconstruction&  51.36s     &  114.42s    & 0.05s    &  477.06s     & 0.05s  \\
\hline             
\hline
             & PSNR         & 26.13  & \textbf{26.94}  & 26.89  & 26.89  &  \underline{\textbf{27.00}} \\ 
1.5          & Blur Effect  & 0.5379 & 0.4631 & 0.4524  & \underline{\textbf{0.3808}} &  \textbf{0.3891} \\
             & LPIPS        & 0.3907 & 0.2639 & 0.2665 & \underline{\textbf{0.1669}} &  \textbf{0.1851} \\ 
             & SSIM        & 0.7169  & \textbf{0.7571} & \underline{\textbf{0.7613}} & 0.7426 & 0.7538  \\
             & FSIM        & 0.8635  & 0.9112 & 0.9097 & \underline{\textbf{0.9345}} & \textbf{0.9305} \\
\hline
Time         &Training       &  -\footnotemark[6]    & -      &  2h      & -     &  14h     \\
             &Reconstruction&  51.36s     &  114.42s    & 0.05s    &  477.06s     & 0.05s  \\
\hline          
\hline
             & PSNR         & 27.43  & \textbf{27.81}  & 27.66   & 27.57  &  \underline{\textbf{27.83}} \\ 
2.0          & Blur Effect  & 0.4405 & 0.4046 & 0.4076  & \underline{\textbf{0.3743}} &  \textbf{0.3810} \\
(correct operator)             & LPIPS        & 0.3133 & 0.2076 & 0.2441  & \underline{\textbf{0.1627}} &  \textbf{0.1819} \\
             & SSIM         & \textbf{0.7770} & 0.7756 & \underline{\textbf{0.7842}} & 0.7555 & 0.7678 \\
             & FSIM         & 0.8911 & 0.9344 & 0.9130 & \textbf{0.9443} & \underline{\textbf{0.9449}} \\
\hline
Time         &Training       &  -    & -      & 6.5h       & -        &  16h  \\
             &Reconstruction &  51.36s     &  114.42s    & 0.05s    &  477.06s     & 0.05s  \\
\hline
\hline
             & PSNR         & 26.58  & 26.69  & 25.84  & \textbf{27.45}  &  \underline{\textbf{27.48}} \\ 
2.5          & Blur Effect  & 0.4367 & \textbf{0.3766} & \underline{\textbf{0.3445}}  & 0.3876 &  0.3964 \\
             & LPIPS        & 0.3563 & 0.2266 & 0.2420  & \underline{\textbf{0.1725}} &  \textbf{0.2055} \\ 
             & SSIM        & 0.7561  & \underline{\textbf{0.7659}} & 0.7574 & \textbf{0.7617} & 0.7616  \\
             & FSIM        & 0.8751  & 0.9188 & 0.9134 & \textbf{0.9405} & \underline{\textbf{0.9411}} \\
\hline
Time         &Training     &  -\footnotemark[6]   & -      &    2h  & -    & 15h        \\
             &Reconstruction &  51.36s     &  114.42s    & 0.05s    &  477.06s     & 0.05s  \\
\hline             
\hline
             & PSNR         & 24.99  & 23.52  & 22.34  & \textbf{26.73}  &  \underline{\textbf{26.91}} \\ 
3.0          & Blur Effect  & 0.4649 & \textbf{0.3587} & \underline{\textbf{0.3196}}  & 0.4117 &  0.4232 \\
             & LPIPS        & 0.4066 & 0.2688 & 0.2806  & \underline{\textbf{0.2011}} &  \textbf{0.2355} \\ 
             & SSIM        & 0.7049  & 0.7119 & 0.6506 & \textbf{0.7496} & 0.7539  \\
             & FSIM        & 0.8359  & 0.9014 & 0.9031 & \textbf{0.9220} & \underline{\textbf{0.9269}} \\
\hline
Time         &Training     &  -\footnotemark[6]   & -      &  2h  & -    & 12h        \\
             &Reconstruction &  51.36s     &  114.42s    & 0.05s    &  477.06s     & 0.05s  
\end{tabular}}
\caption{Comparison of superresolution results for ``SiC'' with an inaccurate forward blur operator and stride 4. 
The best two values are marked in bold, the best one is additionally underlined. The true forward operator has standard deviation 2.0, the values here are the same as in Table~\ref{table_errorMeasures}.}                   
\label{table_errorMeasures_inaccurate}
\end{center}
\end{table}

\begin{figure}[t!]
\begin{subfigure}[t]{.16\textwidth}
  \includegraphics[width=\linewidth]{Results/SiC/img_hr_rectangle.png}
\end{subfigure}%
%\hspace{0.05cm}
\hfill
\begin{subfigure}[t]{.16\textwidth}
  \centering
  \includegraphics[width=\linewidth]{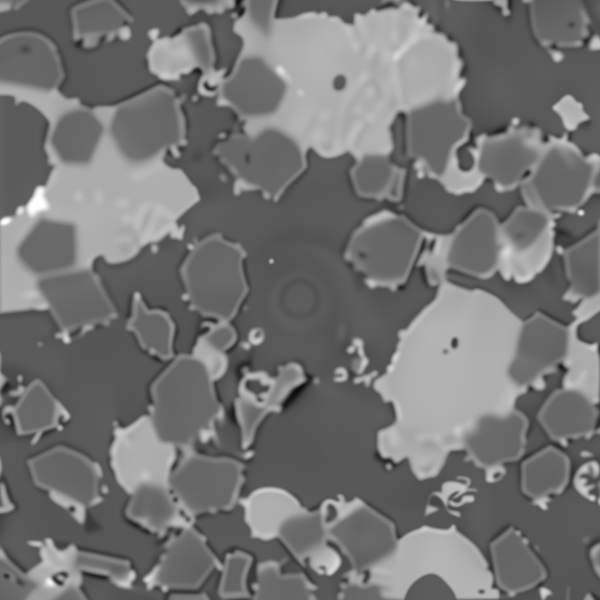}
\end{subfigure}%
%\hspace{0.05cm}
\hfill
\begin{subfigure}[t]{.16\textwidth}
  \centering
  \includegraphics[width=\linewidth]{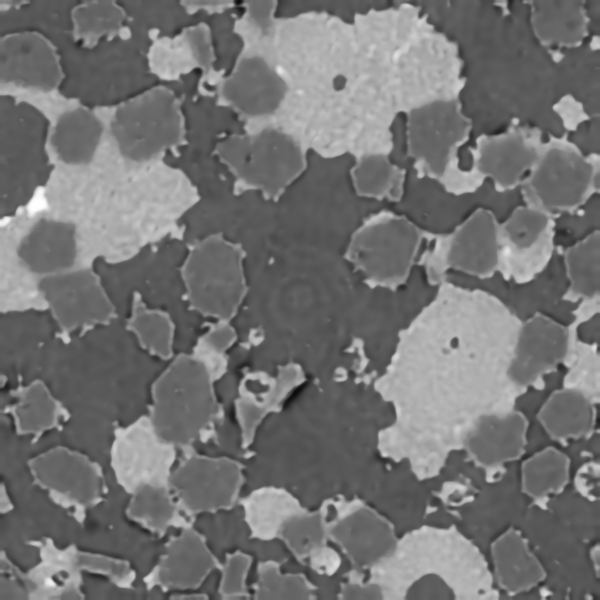}
\end{subfigure}%
%\hspace{0.05cm}
\hfill
\begin{subfigure}[t]{.16\textwidth}
  \centering
  \includegraphics[width=\linewidth]{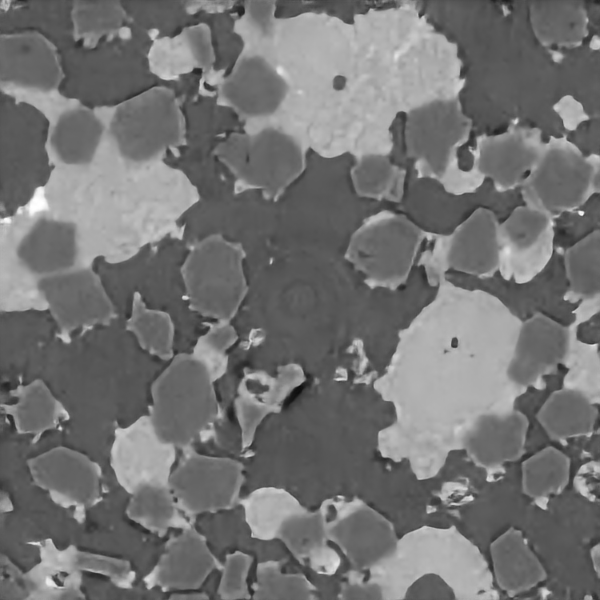}
\end{subfigure}%
%\hspace{0.05cm}
\hfill
\begin{subfigure}[t]{.16\textwidth}
  \centering
  \includegraphics[width=\linewidth]{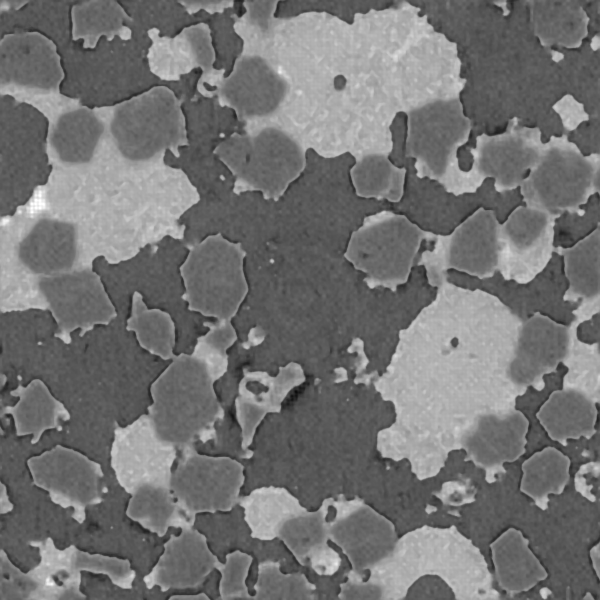}
\end{subfigure}%
%\hspace{0.05cm}
\hfill
\begin{subfigure}[t]{.16\textwidth}
  \centering
  \includegraphics[width=\linewidth]{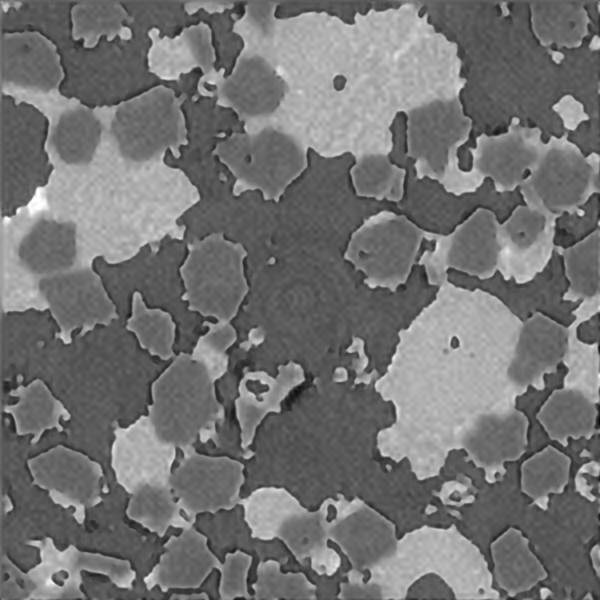}
\end{subfigure}%
%\vfill

\begin{subfigure}[t]{.16\textwidth}
  \centering
  \includegraphics[width=\linewidth]{Results/SiC/hr_zoom1.png}
\end{subfigure}%
\hfill
\begin{subfigure}[t]{.16\textwidth}
  \centering
  \includegraphics[width=\linewidth]{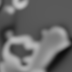}
\end{subfigure}%
\hfill
\begin{subfigure}[t]{.16\textwidth}
  \centering
  \includegraphics[width=\linewidth]{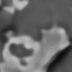}
\end{subfigure}%
\hfill
\begin{subfigure}[t]{.16\textwidth}
  \centering
  \includegraphics[width=\linewidth]{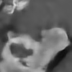}
\end{subfigure}%
\hfill
\begin{subfigure}[t]{.16\textwidth}
  \centering
  \includegraphics[width=\linewidth]{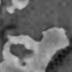}  
\end{subfigure}%
\hfill
\begin{subfigure}[t]{.16\textwidth}
  \centering
  \includegraphics[width=\linewidth]{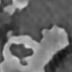}  
\end{subfigure}%
%\vfill

\begin{subfigure}[t]{.16\textwidth}
  \centering
  \includegraphics[width=\linewidth]{Results/SiC/hr_zoom2.png}
  \caption*{HR image}
\end{subfigure}%
\hfill
\begin{subfigure}[t]{.16\textwidth}
  \centering
  \includegraphics[width=\linewidth]{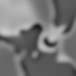}
  \caption*{PnP-DRUNet}
\end{subfigure}%
\hfill
\begin{subfigure}[t]{.16\textwidth}
  \centering
  \includegraphics[width=\linewidth]{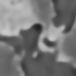}
  \caption*{DIP+TV}
\end{subfigure}%
\hfill
\begin{subfigure}[t]{.16\textwidth}
  \centering
  \includegraphics[width=\linewidth]{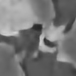}
  \caption*{ACNN}
\end{subfigure}%
\hfill
\begin{subfigure}[t]{.16\textwidth}
  \centering
  \includegraphics[width=\linewidth]{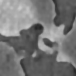}  
  \caption*{WPP}
\end{subfigure}%
\hfill
\begin{subfigure}[t]{.16\textwidth}
  \centering
  \includegraphics[width=\linewidth]{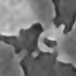}  
  \caption*{WPPNet}
\end{subfigure}%8

%-----------------------------------------------------------------

\centering
\begin{subfigure}[t]{.16\textwidth}
  \includegraphics[width=\linewidth]{Results/SiC/img_hr_rectangle.png}
\end{subfigure}%
%\hspace{0.05cm}
\hfill
\begin{subfigure}[t]{.16\textwidth}
  \centering
  \includegraphics[width=\linewidth]{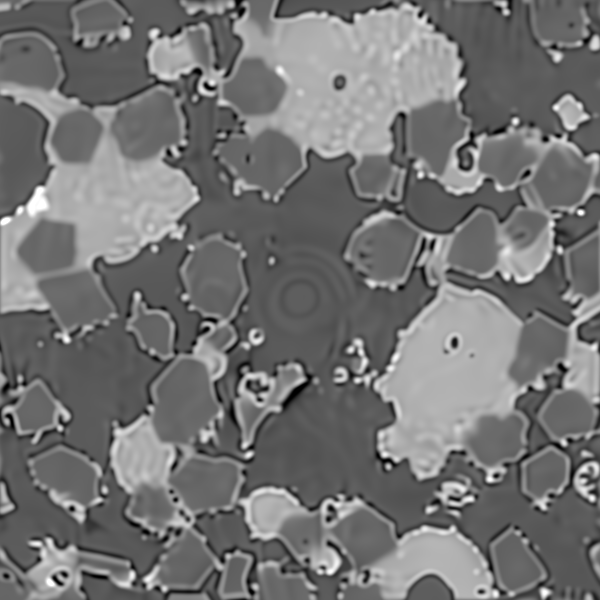}
\end{subfigure}%
%\hspace{0.05cm}
\hfill
\begin{subfigure}[t]{.16\textwidth}
  \centering
  \includegraphics[width=\linewidth]{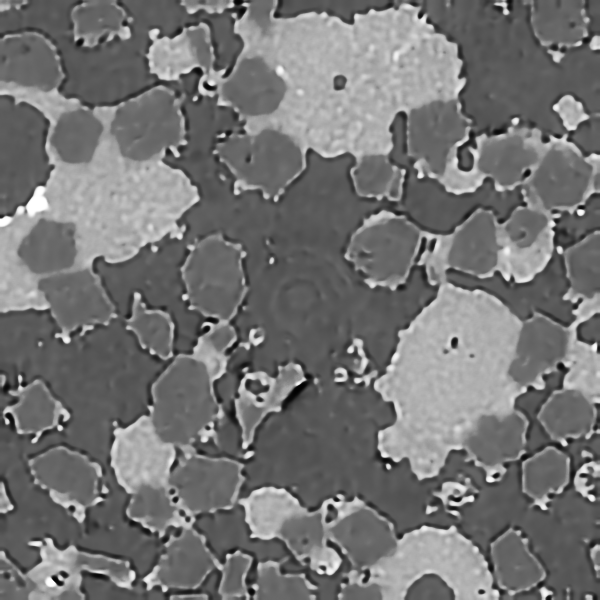}
\end{subfigure}%
%\hspace{0.05cm}
\hfill
\begin{subfigure}[t]{.16\textwidth}
  \centering
  \includegraphics[width=\linewidth]{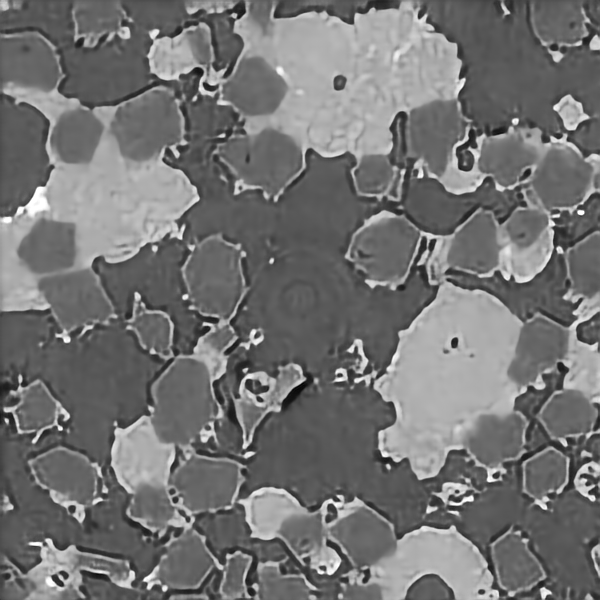}
\end{subfigure}%
%\hspace{0.05cm}
\hfill
\begin{subfigure}[t]{.16\textwidth}
  \centering
  \includegraphics[width=\linewidth]{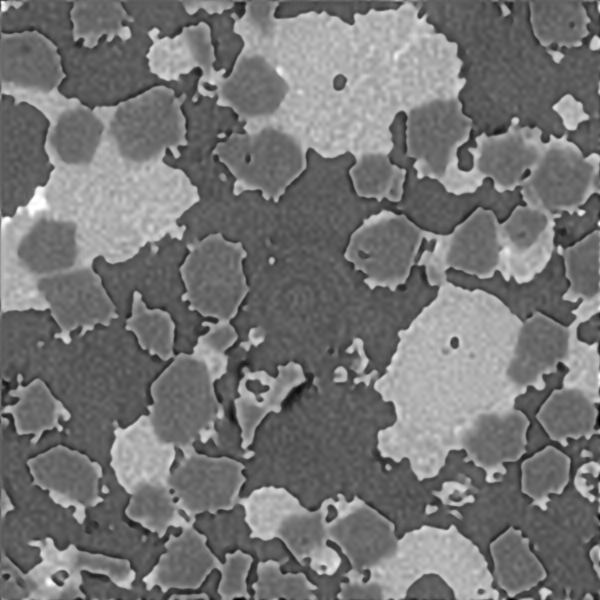}  
\end{subfigure}%
%\hspace{0.05cm}
\hfill
\begin{subfigure}[t]{.16\textwidth}
  \centering
  \includegraphics[width=\linewidth]{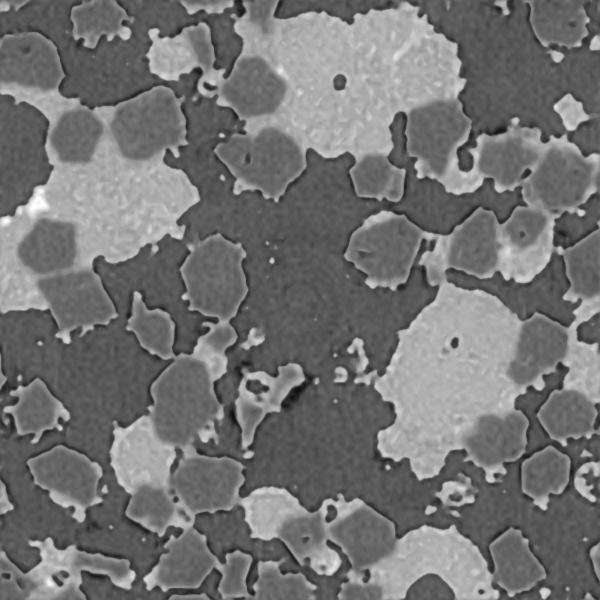}  
\end{subfigure}%
%\vfill

\begin{subfigure}[t]{.16\textwidth}
  \centering
  \includegraphics[width=\linewidth]{Results/SiC/hr_zoom1.png}
\end{subfigure}%
\hfill
\begin{subfigure}[t]{.16\textwidth}
  \centering
  \includegraphics[width=\linewidth]{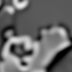}
\end{subfigure}%
\hfill
\begin{subfigure}[t]{.16\textwidth}
  \centering
  \includegraphics[width=\linewidth]{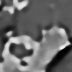}
\end{subfigure}%
\hfill
\begin{subfigure}[t]{.16\textwidth}
  \centering
  \includegraphics[width=\linewidth]{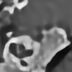}
\end{subfigure}%
\hfill
\begin{subfigure}[t]{.16\textwidth}
  \centering
  \includegraphics[width=\linewidth]{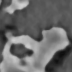}  
\end{subfigure}%
\hfill
\begin{subfigure}[t]{.16\textwidth}
  \centering
  \includegraphics[width=\linewidth]{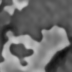}  
\end{subfigure}%
%\vfill

\begin{subfigure}[t]{.16\textwidth}
  \centering
  \includegraphics[width=\linewidth]{Results/SiC/hr_zoom2.png}
  \caption*{HR image}
\end{subfigure}%
\hfill
\begin{subfigure}[t]{.16\textwidth}
  \centering
  \includegraphics[width=\linewidth]{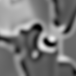}
  \caption*{PnP-DRUNet}
\end{subfigure}%
\hfill
\begin{subfigure}[t]{.16\textwidth}
  \centering
  \includegraphics[width=\linewidth]{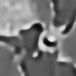}
  \caption*{DIP+TV}
\end{subfigure}%
\hfill
\begin{subfigure}[t]{.16\textwidth}
  \centering
  \includegraphics[width=\linewidth]{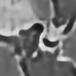}
  \caption*{ACNN}
\end{subfigure}%
\hfill
\begin{subfigure}[t]{.16\textwidth}
  \centering
  \includegraphics[width=\linewidth]{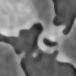}  
  \caption*{WPP}
\end{subfigure}%
\hfill
\begin{subfigure}[t]{.16\textwidth}
  \centering
  \includegraphics[width=\linewidth]{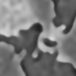}  
  \caption*{WPPNet}
\end{subfigure}%
\caption{Comparison of superresolution for the material ``SiC'' with an inaccurate forward blur operator (standard deviation $1.5$ (top) and $2.5$ (bottom)) and stride 4. The zoomed-in parts are marked with a white box in the HR image.} 
\label{Fig_inaccurateforward}
\end{figure} 

\subsubsection{Estimated Forward Operator}

Finally, we aim to evaluate the performance of WPPNets in a real-world setting motivated by the imaging of material microstructures.
We assume that we have scanned a large area from a materials microstructure using a low-resolution. 
Due to the limited amount of time and resources it is not possible to scan the same area with a higher resolution. 
On the other hand, we assume that we are given a high-resolution image of some small part of this area.

In this setting, we aim to generate a high-resolution correspondence for the whole low-resolution image. We proceed in two steps.
First, we estimate the forward operator of the superresolution problem using the small high-resolution part.
Second, we reconstruct the high-resolution image using the estimated forward operator.

\begin{figure}
\centering
\begin{subfigure}[t]{.2\textwidth}
  \includegraphics[width=\linewidth]{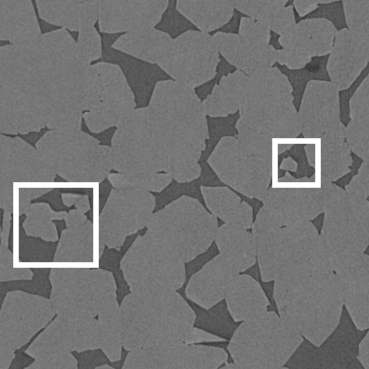}
\end{subfigure}%
\hspace{0.05cm}
%\hfill
\begin{subfigure}[t]{.2\textwidth}
  \includegraphics[width=\linewidth]{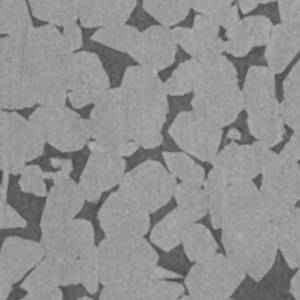}
\end{subfigure}%
\hspace{0.05cm}
%\hfill
\begin{subfigure}[t]{.2\textwidth}
  \includegraphics[width=\linewidth]{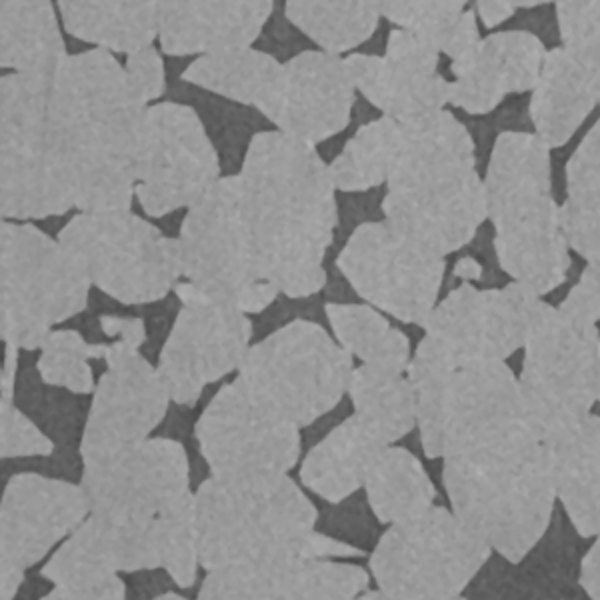}
\end{subfigure}%
\hspace{0.05cm}
%\hfill
\begin{subfigure}[t]{.2\textwidth}
  \includegraphics[width=\linewidth]{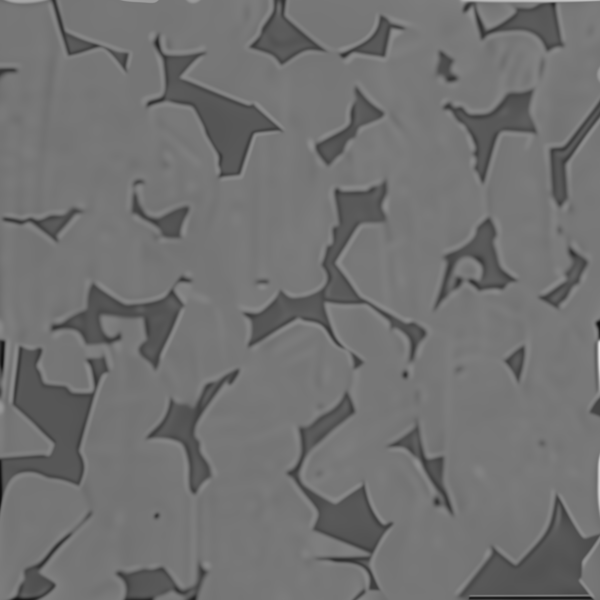}
\end{subfigure}%
%\vfill

\begin{subfigure}[t]{.2\textwidth}
  \includegraphics[width=\linewidth]{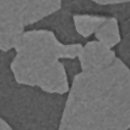}
\end{subfigure}%
\hspace{0.05cm}
%\hfill
\begin{subfigure}[t]{.2\textwidth}
  \includegraphics[width=\linewidth]{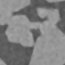}
\end{subfigure}%
\hspace{0.05cm}
%\hfill
\begin{subfigure}[t]{.2\textwidth}
  \includegraphics[width=\linewidth]{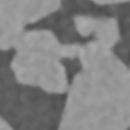}
\end{subfigure}%
\hspace{0.05cm}
%\hfill
\begin{subfigure}[t]{.2\textwidth}
  \includegraphics[width=\linewidth]{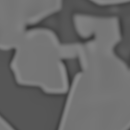}
\end{subfigure}%
%\vfill

\begin{subfigure}[t]{.2\textwidth}
  \includegraphics[width=\linewidth]{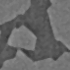}
  \caption*{HR image}
\end{subfigure}%
\hspace{0.05cm}
%\hfill
\begin{subfigure}[t]{.2\textwidth}
  \includegraphics[width=\linewidth]{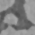}
  \caption*{LR image}
\end{subfigure}%
\hspace{0.05cm}
%\hfill
\begin{subfigure}[t]{.2\textwidth}
  \includegraphics[width=\linewidth]{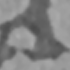}
  \caption*{bicubic}
\end{subfigure}%
\hspace{0.05cm}
%\hfill
\begin{subfigure}[t]{.2\textwidth}
  \includegraphics[width=\linewidth]{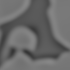}
  \caption*{PnP-DRUNet}
\end{subfigure}%
%%%%%%%%%%%%%%%%%%%%%%%%%%%%%%%%%%%%%%%%%%%%%%%%%%%%%%%%%%%%%%%%%%%%%
%\vfill

\begin{subfigure}[t]{.2\textwidth}
  \includegraphics[width=\linewidth]{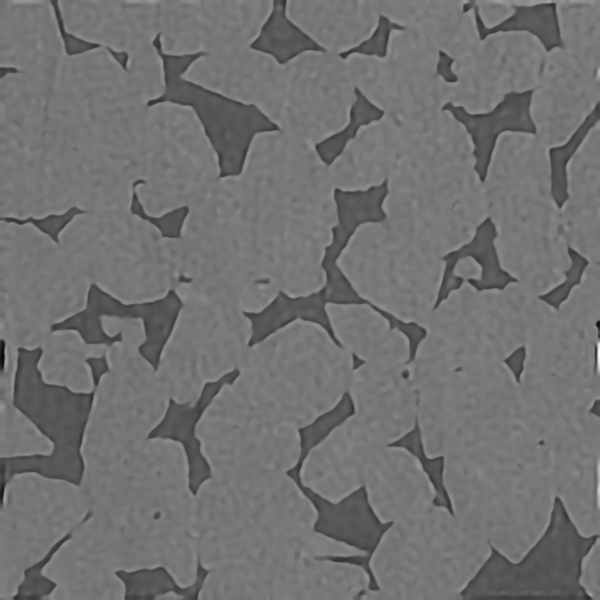}
\end{subfigure}%
\hspace{0.05cm}
%\hfill
\begin{subfigure}[t]{.2\textwidth}
  \includegraphics[width=\linewidth]{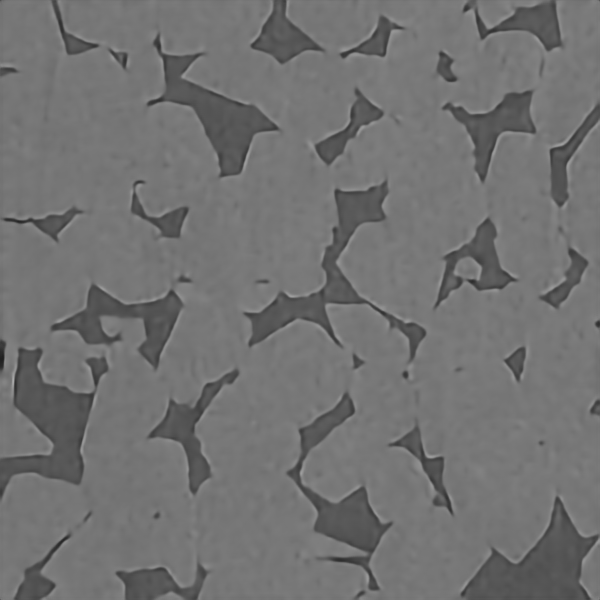}
\end{subfigure}%
\hspace{0.05cm}
%\hfill
\begin{subfigure}[t]{.2\textwidth}
  \includegraphics[width=\linewidth]{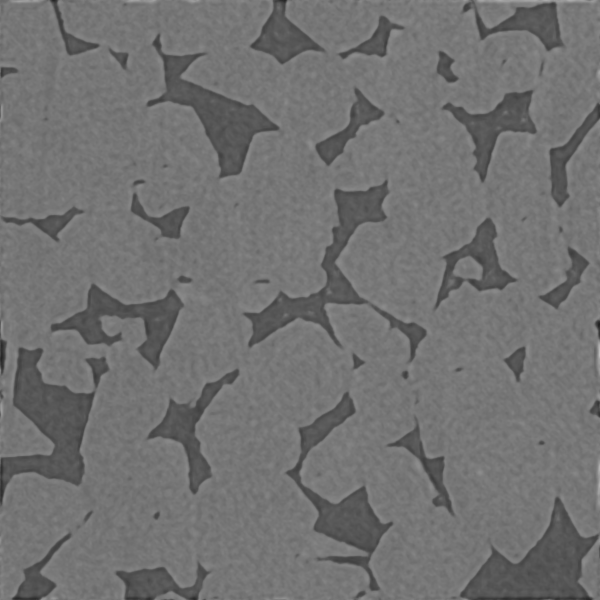}
\end{subfigure}%
\hspace{0.05cm}
%\hfill
\begin{subfigure}[t]{.2\textwidth}
  \includegraphics[width=\linewidth]{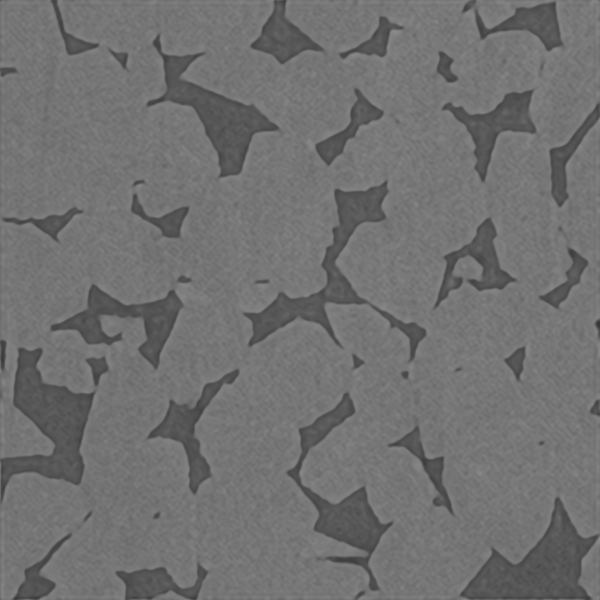}
\end{subfigure}%
%\vfill

\begin{subfigure}[t]{.2\textwidth}
  \includegraphics[width=\linewidth]{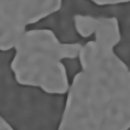}  
\end{subfigure}%
\hspace{0.05cm}
%\hfill
\begin{subfigure}[t]{.2\textwidth}
  \includegraphics[width=\linewidth]{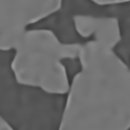}
\end{subfigure}%
\hspace{0.05cm}
%\hfill
\begin{subfigure}[t]{.2\textwidth}
  \includegraphics[width=\linewidth]{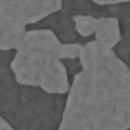}
\end{subfigure}%
\hspace{0.05cm}
%\hfill
\begin{subfigure}[t]{.2\textwidth}
  \includegraphics[width=\linewidth]{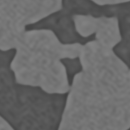}
\end{subfigure}%
%\vfill

\begin{subfigure}[t]{.2\textwidth}
  \includegraphics[width=\linewidth]{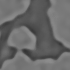}
  \caption*{DIP+TV}
\end{subfigure}%
\hspace{0.05cm}
%\hfill
\begin{subfigure}[t]{.2\textwidth}
  \includegraphics[width=\linewidth]{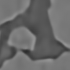}
  \caption*{ACNN}
\end{subfigure}%
\hspace{0.05cm}
%\hfill
\begin{subfigure}[t]{.2\textwidth}
  \centering
  \includegraphics[width=\linewidth]{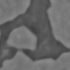}
  \caption*{WPP}
\end{subfigure}%
\hspace{0.05cm}
%\hfill
\begin{subfigure}[t]{.2\textwidth}
  \includegraphics[width=\linewidth]{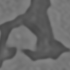}
  \caption*{WPPNet}
\end{subfigure}%
\caption{Reconstruction of the high-resolution image ``Fontainebleau sandstone'' assuming the estimated forward operator. The zoomed-in parts are marked with a white box in the HR image.
} \label{fig_estimatedforward}
\end{figure} 

\paragraph{Data generation and estimation of the Forward Operator}
Now, we consider the synchrotron-computed tomography data from Section~\ref{sec_synchrotron} and consider 
the images with voxel spacings $1.625$ {\textmu}m and $3.25$ {\textmu}m.

Here, we extract a large database of low-resolution images (2D slices of the image with voxel spacing $3.25$ {\textmu}m).
Further, we extract two pairs 
$(\tilde x_1,\tilde y_1)$ and $(\tilde x_2,\tilde y_2)$ of 2D-slices from the images with voxel spacings $1.625$ {\textmu}m and $3.25$ {\textmu}m showing the same area of the
material. Here, the $\tilde x_i$ are extracted from the image with voxel spacing $1.625$ {\textmu}m and the $\tilde y_i$ come from
the image with voxel spacing $3.25$ {\textmu}m.
The pair $(\tilde x_1,\tilde y_1)$ will be used for estimating the operator and the image $\tilde x_1$ (see Figure~\ref{fig_ref_RegFS}) will serve as a reference
image for WPPNet and WPP.
Further, we use the pair $(\tilde x_2,\tilde y_2)$ for evaluating our results, where $\tilde y_2$ is the low-resolution observation,
while $\tilde x_2$ is the high-resolution ground truth.

Note that $(\tilde x_i,\tilde y_i)$ are real-world data. Thus, we register the images $(\tilde x_i,\tilde y_i)$ in a preprocessing
step using the scale-invariant feature transform (SIFT) \cite{L1999}.

In practice, the forward operator could also be estimated from synthetic data (see e.g., \cite{Hertrich21}), which circumvents
the need of the given registered pairs $(\tilde x_i,\tilde y_i)$. However, generating synthetic data for the synchrotron-computed tomography data is
out of scope of our paper.

We estimate the forward operator from the registered pair $(\tilde x_1,\tilde y_1)$ in the same way as proposed in 
\cite{Hertrich21}. For completeness, we describe this procedure in Appendix~\ref{sec_estimation_of_forward}.

\paragraph{Results}
The resulting quality measures are given in Table~\ref{table_errors_estimatedforward} and the reconstructions are shown in
Figure~\ref{fig_estimatedforward}.
We observe that WPPNet and WPP produce significantly sharper and visually better results than the comparisons. This is also
captured by the quality measures.
Note, that there is a change of contrast and brightness between the high-resolution and the low-resolution image. 
As the bicubic interpolation does not consider the operator, this results in a brightness difference of the bicubic interpolation
to the ground truth.

\begin{table}[t]
\begin{center}
\scalebox{.85}{
\begin{tabular}[t]{c|c|cccccc} 
           &              & bicubic & PnP-DRUNet    & DIP +TV   & ACNN  & WPP    & WPPNet \\
\hline
           & PSNR         &  17.67  & \textbf{32.91}  & 32.25 & \underline{\textbf{32.97}}   & 32.68 & 32.86  \\ 
FS         & Blur Effect  &  0.4817 & 0.4900 & 0.3900 & 0.4248  & \textbf{0.3330} & \underline{\textbf{0.3197}}  \\
           & LPIPS        &  0.2805 & 0.3442 & 0.2389 & 0.2882  & \underline{\textbf{0.1362}} & \textbf{0.1411}  \\
           & SSIM         & 0.7280  & \textbf{0.8151} & 0.7850 & \underline{\textbf{0.8196}} & 0.7702 & 0.7724  \\
           & FSIM        & 0.8596  & 0.7903 & 0.8788 & 0.8267 & \textbf{0.9309} & \underline{\textbf{0.9365}} \\
\hline
Time         &Training     &    -     &  -\footnotemark[6] 
                                                               & -      &  2h      & -        & 7h   \\
             &Reconstruction& 0.0003s &  51.36s     &  114.42s    & 0.05s    &  477.06s     & 0.05s  \\
\hline           
\end{tabular}} 
\caption{Comparison of superresolution results using the estimated forward operator. The best two values are marked in bold, the best one is additionally underlined.}
\label{table_errors_estimatedforward}
\end{center}
\end{table}

%%%%%%%%%%%%%%%%%%%%%%%%%%%%%%%%%%%%%
\subsection{Uncertainty Quantification} \label{Sec_results_uq}

Finally, we aim to detect the uncertainties within the reconstructions with the WPP.
To this end, we consider agian the SiC Diamonds images.

\paragraph{Magnification Factor 4}

Here we use the same forward operator $f$ as in Section \ref{sec_synchrotron}, thus the same 1000 low-resolution images of size $25 \times 25$ are used for training the WPPFlow.

In Figure \ref{Fig_WPPSRFlow_x4} (top) we show three different high-resolution predictions with a magnification factor 4 given the same low-resolution image. Note that here we considered the same reference image $\tilde{x}$ as in Figure \ref{Comp_HRLRPred_SiC}. We computed 100 high-resolution predictions and the resulting standard deviation is given in the right part. Here the brighter a pixel is, the less secure is the WPPFlow in its prediction; the maximal pixel-wise standard deviation is 0.08.  As expected, we have a high uncertainty on the edges, while there is nearly no uncertainty in the other regions. 

As a comparison, in Figure \ref{Fig_WPPSRFlow_x4} (bottom) we show three different high-resolution predictions of the SRFlow. It is trained on DIV2K \cite{AT2017} for 1500 epochs using the negative log-likelihood which is equivalent to interchanging the two arguments within \eqref{eq_backward_KL}. The reconstructions admit a lot of artifacts and the pixel-wise standard deviation is not only visible on the edges. 
In Figure \ref{Fig_WPPSRFlow_x4} (right) we compare the mean of 100 reconstructions of WPPFlow and SRFlow. Although each reconstruction of SRFlow admits artifacts, the mean image yields a good reconstruction. This can be also seen in Table \ref{table_errors_NF}. Here we compare the quality measures of the mean images (left) and the averaged quality measures of the reconstructions (right). Whereas the WPPFlow reconstructions are much better, visually and in terms of the quality measures, for the mean image the SRFlow yield a better reconstruction.

\begin{figure}[t!]
\centering
\begin{subfigure}[t]{.14\textwidth}
  \includegraphics[width=\linewidth]{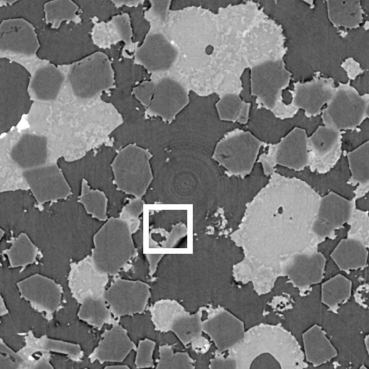}
\end{subfigure}%
\hfill
\begin{subfigure}[t]{.14\textwidth}
  \includegraphics[width=\linewidth]{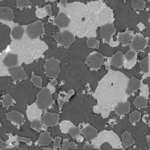}
\end{subfigure}%
\hfill
\begin{subfigure}[t]{.14\textwidth}
  \includegraphics[width=\linewidth]{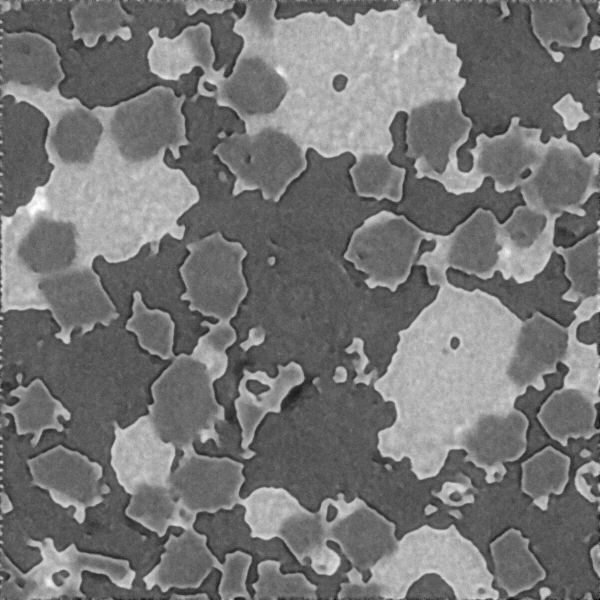}
\end{subfigure}%
\hfill
\begin{subfigure}[t]{.14\textwidth}
  \includegraphics[width=\linewidth]{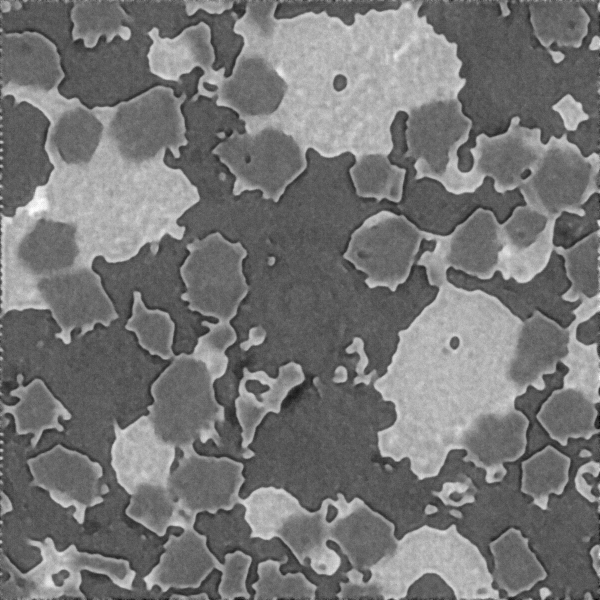}
\end{subfigure}%
\hfill
\begin{subfigure}[t]{.14\textwidth}
  \includegraphics[width=\linewidth]{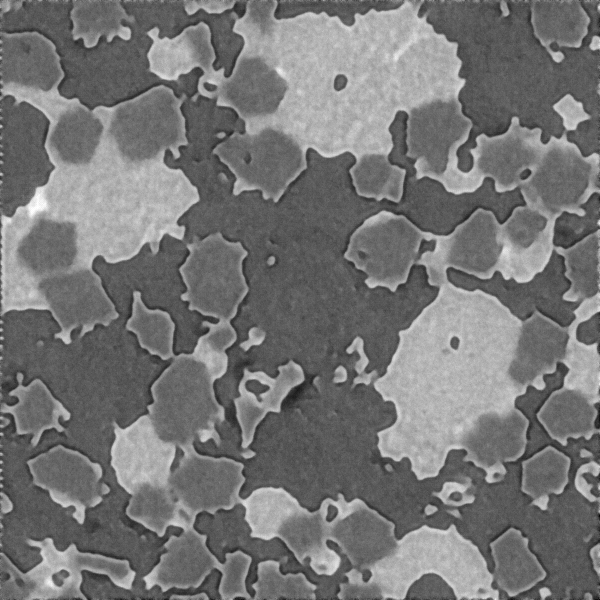}
\end{subfigure}%
\hfill
\begin{subfigure}[t]{.14\textwidth}
  \includegraphics[width=\linewidth]{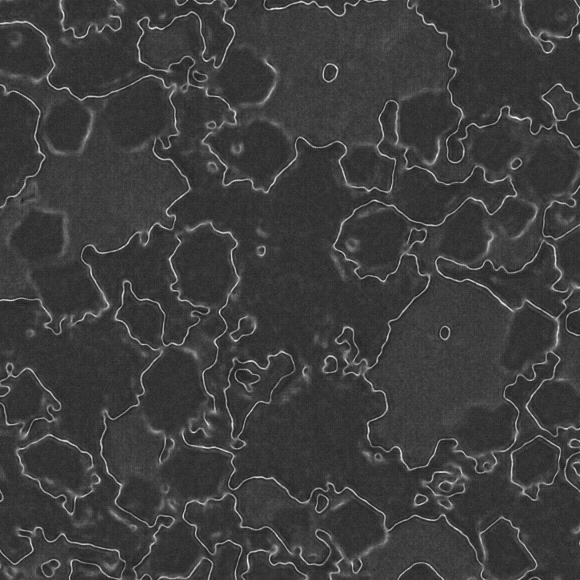}
\end{subfigure}%
\hfill
\begin{subfigure}[t]{.14\textwidth}
  \includegraphics[width=\linewidth]{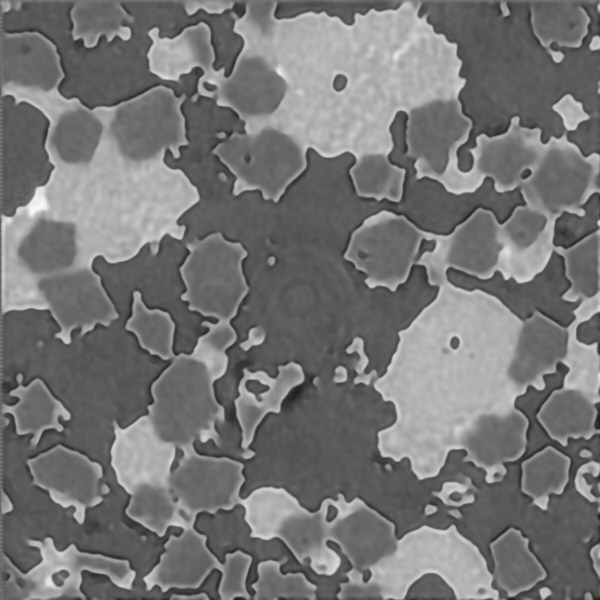}
\end{subfigure}%
%\vfill

\begin{subfigure}[t]{.14\textwidth}
  \includegraphics[width=\linewidth]{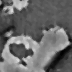}
    \caption*{HR image}
\end{subfigure}%
\hfill
\begin{subfigure}[t]{.14\textwidth}
  \includegraphics[width=\linewidth]{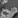}
    \caption*{LR image}
\end{subfigure}%
\hfill
\begin{subfigure}[t]{.14\textwidth}
  \includegraphics[width=\linewidth]{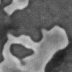}
  \caption*{Prediction 1}
\end{subfigure}%
\hfill
\begin{subfigure}[t]{.14\textwidth}
  \includegraphics[width=\linewidth]{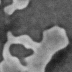}
  \caption*{Prediction 2}
\end{subfigure}%
\hfill
\begin{subfigure}[t]{.14\textwidth}
  \includegraphics[width=\linewidth]{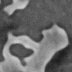}
  \caption*{Prediction 3}
\end{subfigure}%
\hfill
\begin{subfigure}[t]{.14\textwidth}
  \includegraphics[width=\linewidth]{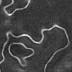}
  \caption*{Standard deviation}
\end{subfigure}%
\hfill
\begin{subfigure}[t]{.14\textwidth}
  \includegraphics[width=\linewidth]{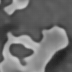}
    \caption*{Mean img}
\end{subfigure}%

\begin{subfigure}[t]{.14\textwidth}
  \includegraphics[width=\linewidth]{Results/UQ/x4_sic/img_hr_rectangle2.png}
\end{subfigure}%
\hfill
\begin{subfigure}[t]{.14\textwidth}
  \includegraphics[width=\linewidth]{Results/UQ/x4_sic/img_hr_LRversion.png}
\end{subfigure}%
\hfill
\begin{subfigure}[t]{.14\textwidth}
  \includegraphics[width=\linewidth]{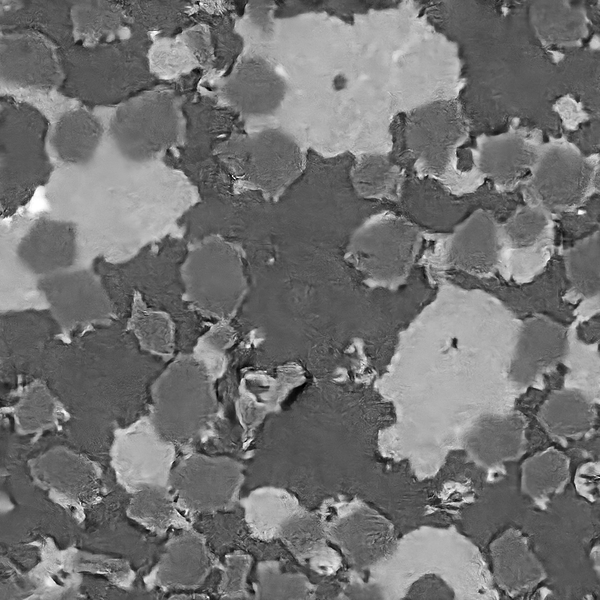}
\end{subfigure}%
\hfill
\begin{subfigure}[t]{.14\textwidth}
  \includegraphics[width=\linewidth]{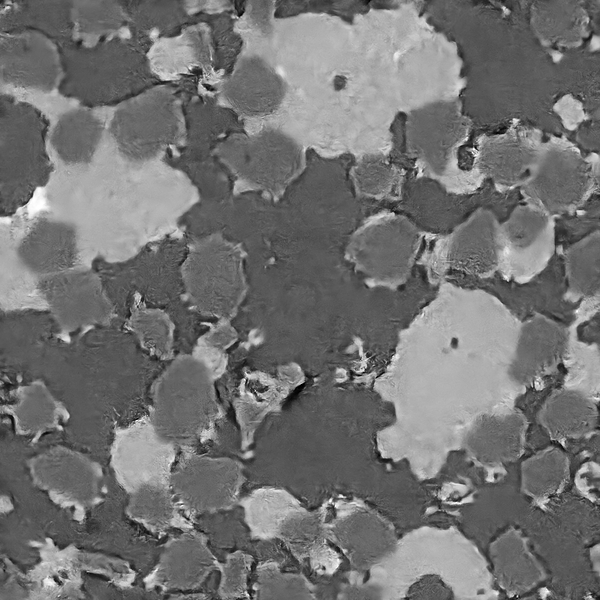}
\end{subfigure}%
\hfill
\begin{subfigure}[t]{.14\textwidth}
  \includegraphics[width=\linewidth]{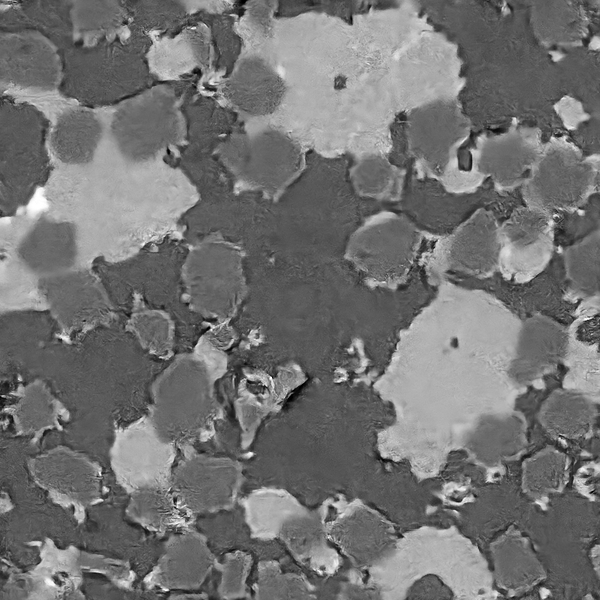}
\end{subfigure}%
\hfill
\begin{subfigure}[t]{.14\textwidth}
  \includegraphics[width=\linewidth]{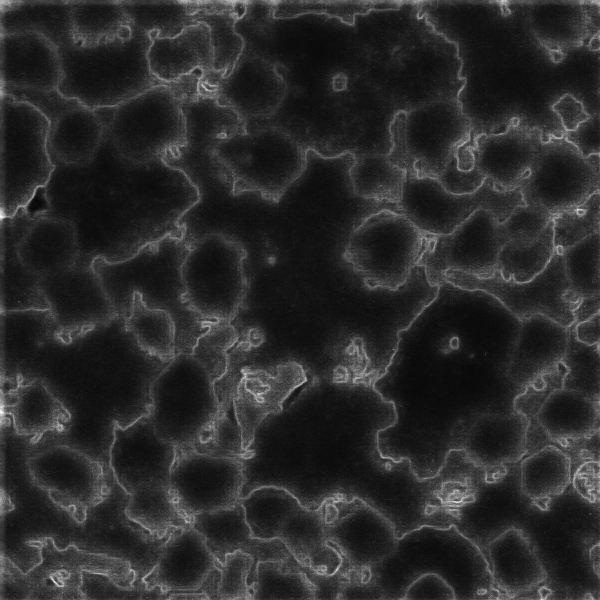}
\end{subfigure}%
\hfill
\begin{subfigure}[t]{.14\textwidth}
  \includegraphics[width=\linewidth]{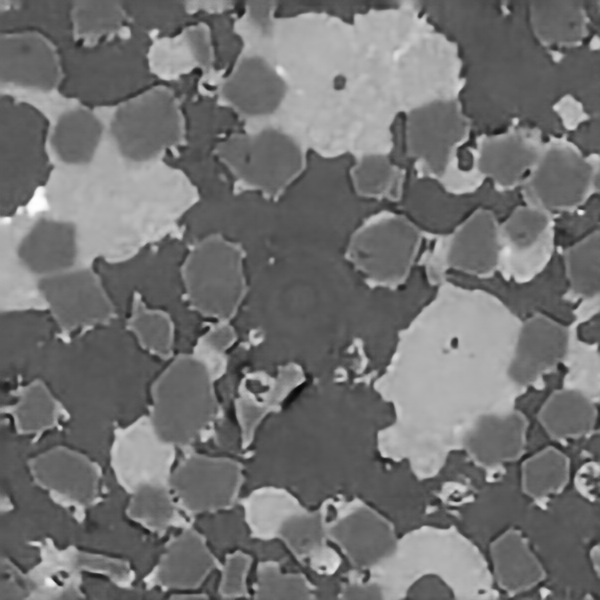}
\end{subfigure}%
%\vfill

\begin{subfigure}[t]{.14\textwidth}
  \includegraphics[width=\linewidth]{Results/UQ/x4_sic/zoom/hr_zoom1.png}
    \caption*{HR image}
\end{subfigure}%
\hfill
\begin{subfigure}[t]{.14\textwidth}
  \includegraphics[width=\linewidth]{Results/UQ/x4_sic/zoom/lr_zoom1.png}
    \caption*{LR image}
\end{subfigure}%
\hfill
\begin{subfigure}[t]{.14\textwidth}
  \includegraphics[width=\linewidth]{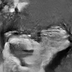}
  \caption*{Prediction 1}
\end{subfigure}%
\hfill
\begin{subfigure}[t]{.14\textwidth}
  \includegraphics[width=\linewidth]{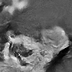}
  \caption*{Prediction 2}
\end{subfigure}%
\hfill
\begin{subfigure}[t]{.14\textwidth}
  \includegraphics[width=\linewidth]{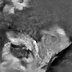}
  \caption*{Prediction 3}
\end{subfigure}%
\hfill
\begin{subfigure}[t]{.14\textwidth}
  \includegraphics[width=\linewidth]{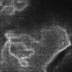}
  \caption*{Standard deviation}
\end{subfigure}%
\hfill
\begin{subfigure}[t]{.14\textwidth}
  \includegraphics[width=\linewidth]{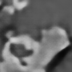}
    \caption*{Mean img}
\end{subfigure}%
\caption{Different WPPFlow (top) and SRFlow (bottom) reconstructions of the ground truth image with stride 4. \textit{Right}: The normalized standard deviation and the mean image.
The zoomed-in part is marked with a white box in the HR image.
} \label{Fig_WPPSRFlow_x4}
\end{figure}

\begin{table}[t]
\begin{center}
\scalebox{.85}{
\begin{tabular}[t]{c|c|cc|cccc} 
           &              & Mean img WPPFlow & Mean img SRFlow    & WPPFlow   & SRFlow \\
\hline
           & PSNR         &  26.68  & \textbf{27.20}  & \textbf{25.74} & 24.79    \\ 
x4         & Blur Effect  &  \textbf{0.4225} & 0.4586 & 0.3539 & \textbf{0.3419}   \\
           & LPIPS        &  \textbf{0.2613} & 0.2868 & \textbf{0.2138} & 0.3090    \\
           & SSIM         & \textbf{0.7662}  & 0.7642 & \textbf{0.6567} & 0.6272  \\
           & FSIM        & 0.9023  & \textbf{0.9027} & \textbf{0.9110} & 0.9064  \\
\hline
Time         &Training     &   -     &  -   & 25h   &  50h    \\
             &Reconstruction& - &  -     &  0.06s    &   0.06s     \\
\hline        
\hline
           & PSNR         &  23.13  & \textbf{23.28}  & \textbf{21.76} & 19.53    \\ 
x8         & Blur Effect  &  \textbf{0.4930} & 0.5977 & 0.3409 & \textbf{0.2743}   \\
           & LPIPS        &  \textbf{0.4046} & 0.4753 & \textbf{0.3495} & 0.4351    \\
           & SSIM         & \textbf{0.6281}  & 0.5715 & \textbf{0.4481} & 0.3106  \\
           & FSIM        & 0.7859  & \textbf{0.8313} & \textbf{0.7687} & 0.7231  \\
\hline
Time         &Training     &    -     &  -   &  12h  & 42h       \\
             &Reconstruction& - &  -     &  0.06s    & 0.06s     \\
\hline       
\end{tabular}} 
\caption{Comparison of superresolution results using the normalizing flows. The best value is marked in bold.}
\label{table_errors_NF}
\end{center}
\end{table}

\paragraph{Magnification Factor 8}
Additionally, we consider the superresolution task with a magnification factor 8. Here, the forward operator $f$ is given by a convolution with a $16 \times 16$ Gaussian blur kernel with standard deviation 4, 
stride 8 and zero-padding to keep the dimensions consistent. Again, we set the noise to $\xi \sim \mathcal{N}(0,0.01^2)$.
For training we use a generated set consisting of 270 low-resolution images of size $20\times20$.

In Figure \ref{Fig_cINN_x8} (top) we show different reconstructions with a magnification factor 8 of a given low-resolution image. Of course, here the differences between the predictions are much larger and we have a higher standard deviation; the maximal pixel-wise standard deviation is 0.167. In contrast to magnification factor 4, where the uncertainties are mainly visible on the edges, in the case of  magnification factor 8 the uncertainties are visible on a much larger area. 
In particular, we observe that WPPFlows are able to detect uncertainties in the topological structure of the data, e.g., if two certain structures are connected or not.

In Figure \ref{Fig_cINN_x8} (bottom) we show three different high-resolution predictions of the SRFlow. It is trained on DIV2K as before for 1800 epochs. Again, the reconstructions admit a lot of artifacts and the pixel-wise standard deviation is not only visible on the edges. 
The mean of 100 reconstructions of WPPFlow and SRFlow is shown in Figure \ref{Fig_cINN_x8} (right). In contrast to magnification factor 4, here the mean image of the WPPFlow is visually better. For the quality measues, see again Table \ref{table_errors_NF}.

\begin{figure}[t!]
\centering
\begin{subfigure}[t]{.14\textwidth}
  \includegraphics[width=\linewidth]{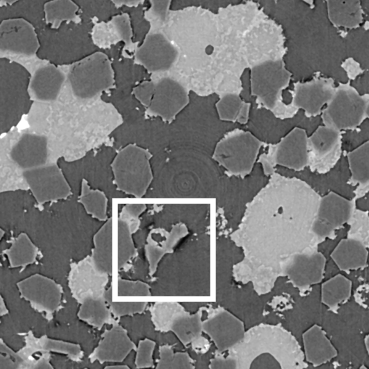}
\end{subfigure}%
\hfill
\begin{subfigure}[t]{.14\textwidth}
  \includegraphics[width=\linewidth]{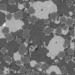}
\end{subfigure}%
\hfill
\begin{subfigure}[t]{.14\textwidth}
  \includegraphics[width=\linewidth]{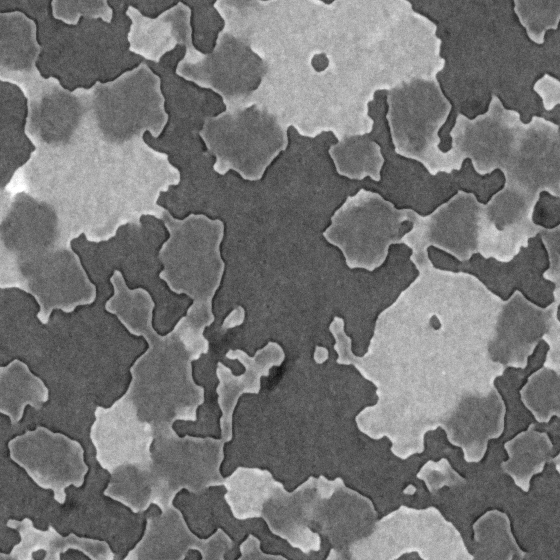}
\end{subfigure}%
\hfill
\begin{subfigure}[t]{.14\textwidth}
  \includegraphics[width=\linewidth]{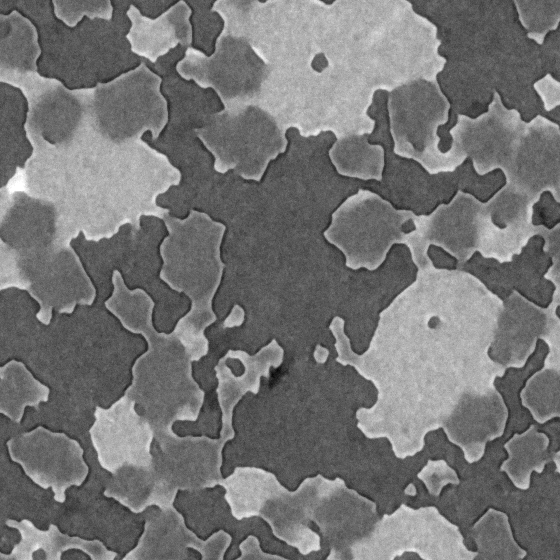}
\end{subfigure}%
\hfill
\begin{subfigure}[t]{.14\textwidth}
  \includegraphics[width=\linewidth]{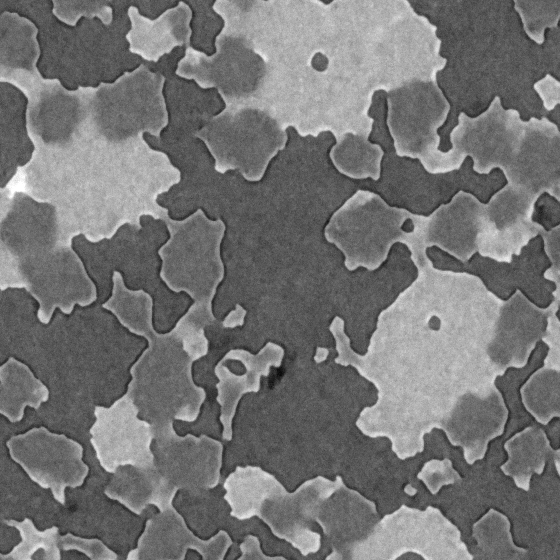}
\end{subfigure}%
\hfill
\begin{subfigure}[t]{.14\textwidth}
  \includegraphics[width=\linewidth]{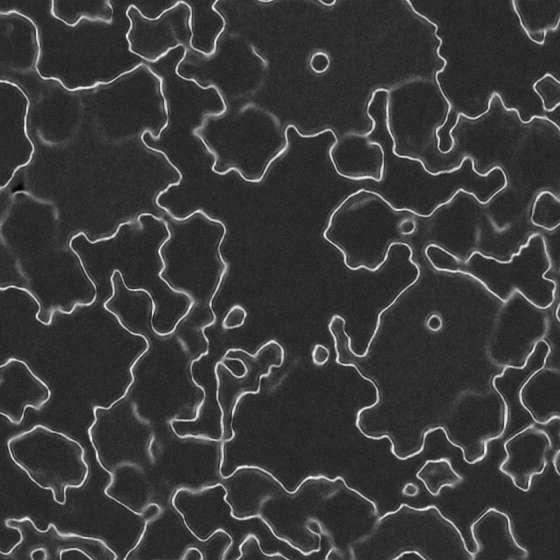}
\end{subfigure}%
\begin{subfigure}[t]{.14\textwidth}
  \includegraphics[width=\linewidth]{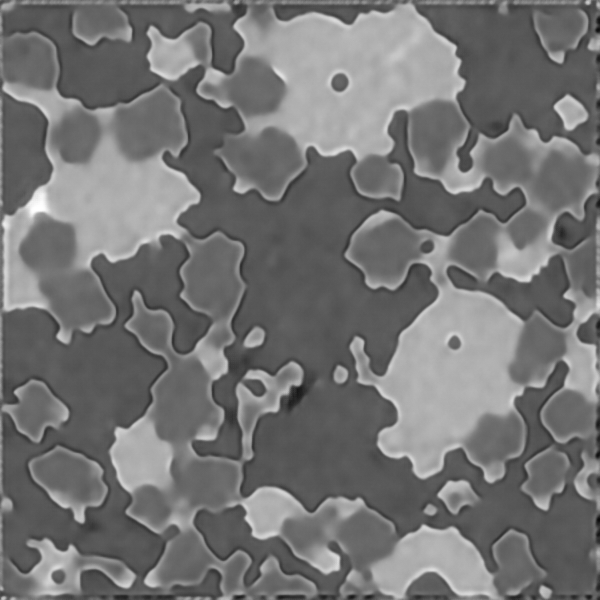}
\end{subfigure}%
%\vfill

\begin{subfigure}[t]{.14\textwidth}
  \includegraphics[width=\linewidth]{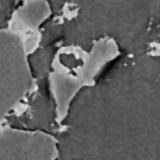}
    \caption*{HR image}
\end{subfigure}%
\hfill
\begin{subfigure}[t]{.14\textwidth}
  \includegraphics[width=\linewidth]{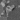}
  \caption*{LR image}
\end{subfigure}%
\hfill
\begin{subfigure}[t]{.14\textwidth}
  \includegraphics[width=\linewidth]{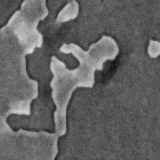}
  \caption*{Prediction 1}
\end{subfigure}%
\hfill
\begin{subfigure}[t]{.14\textwidth}
  \includegraphics[width=\linewidth]{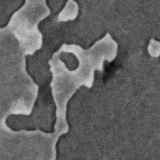}
  \caption*{Prediction 2}
\end{subfigure}%
\hfill
\begin{subfigure}[t]{.14\textwidth}
  \includegraphics[width=\linewidth]{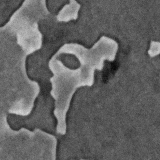}
  \caption*{Prediction 3}
\end{subfigure}%
\hfill
\begin{subfigure}[t]{.14\textwidth}
  \includegraphics[width=\linewidth]{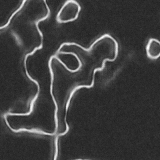}
  \caption*{Standard deviation}
\end{subfigure}%
\begin{subfigure}[t]{.14\textwidth}
  \includegraphics[width=\linewidth]{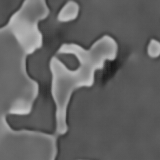}
    \caption*{Mean img}
\end{subfigure}%

\begin{subfigure}[t]{.14\textwidth}
  \includegraphics[width=\linewidth]{Results/UQ/x8_sic/img_hr_rectangle3.png}
\end{subfigure}%
\hfill
\begin{subfigure}[t]{.14\textwidth}
  \includegraphics[width=\linewidth]{Results/UQ/x8_sic/lr_img600_x8.png}
\end{subfigure}%
\hfill
\begin{subfigure}[t]{.14\textwidth}
  \includegraphics[width=\linewidth]{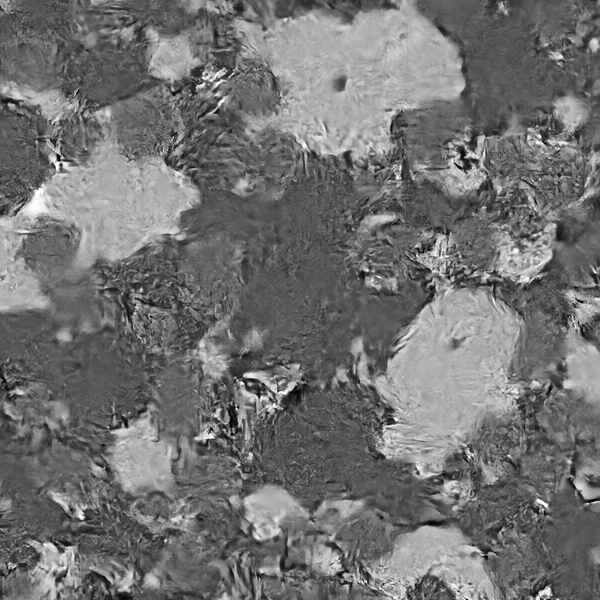}
\end{subfigure}%
\hfill
\begin{subfigure}[t]{.14\textwidth}
  \includegraphics[width=\linewidth]{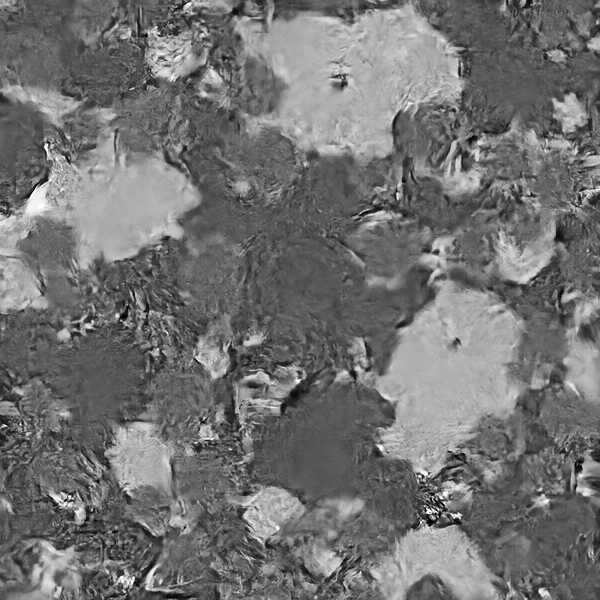}
\end{subfigure}%
\hfill
\begin{subfigure}[t]{.14\textwidth}
  \includegraphics[width=\linewidth]{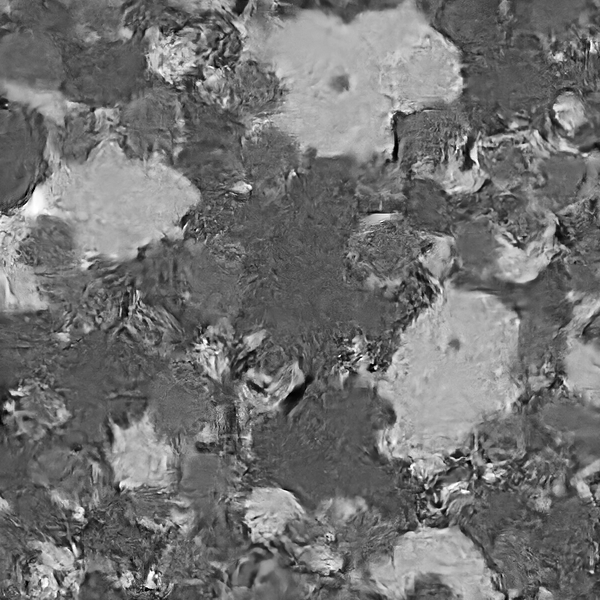}
\end{subfigure}%
\hfill
\begin{subfigure}[t]{.14\textwidth}
  \includegraphics[width=\linewidth]{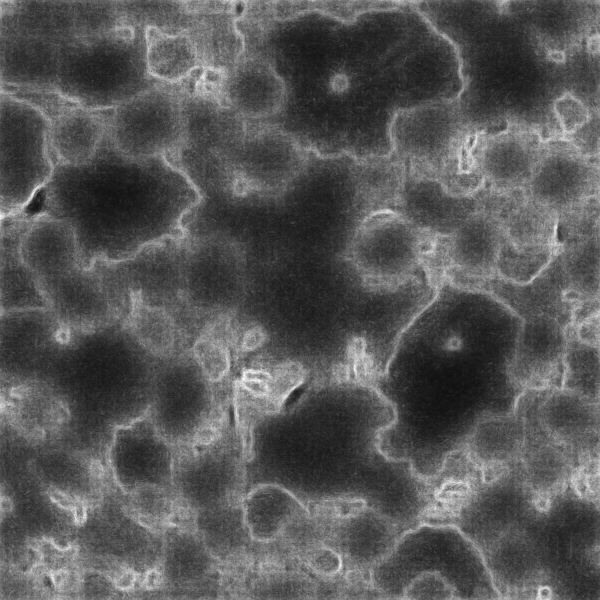}
\end{subfigure}%
\hfill
\begin{subfigure}[t]{.14\textwidth}
  \includegraphics[width=\linewidth]{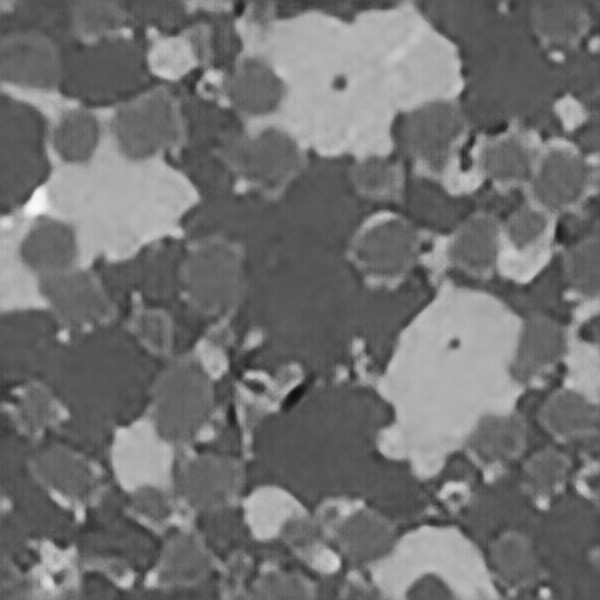}
\end{subfigure}%
%\vfill

\begin{subfigure}[t]{.14\textwidth}
  \includegraphics[width=\linewidth]{Results/UQ/x8_sic/hr_zoom.png}
    \caption*{HR image}
\end{subfigure}%
\hfill
\begin{subfigure}[t]{.14\textwidth}
  \includegraphics[width=\linewidth]{Results/UQ/x8_sic/lr_img_x8_zoom.png}
    \caption*{LR image}
\end{subfigure}%
\hfill
\begin{subfigure}[t]{.14\textwidth}
  \includegraphics[width=\linewidth]{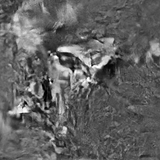}
  \caption*{Prediction 1}
\end{subfigure}%
\hfill
\begin{subfigure}[t]{.14\textwidth}
  \includegraphics[width=\linewidth]{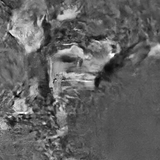}
  \caption*{Prediction 2}
\end{subfigure}%
\hfill
\begin{subfigure}[t]{.14\textwidth}
  \includegraphics[width=\linewidth]{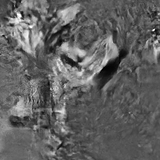}
  \caption*{Prediction 3}
\end{subfigure}%
\hfill
\begin{subfigure}[t]{.14\textwidth}
  \includegraphics[width=\linewidth]{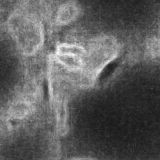}
  \caption*{Standard deviation}
\end{subfigure}%
\hfill
\begin{subfigure}[t]{.14\textwidth}
  \includegraphics[width=\linewidth]{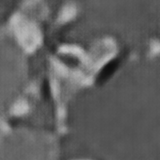}
    \caption*{Mean img}
\end{subfigure}%
\caption{Different WPPFlow (top) and SRFlow (bottom) reconstructions of the ground truth image with stride 8. \textit{Right}: The normalized standard deviation and the mean image.
The zoomed-in part is marked with a white box in the HR image.
} \label{Fig_cINN_x8}
\end{figure}
%--------------------------------------------------------------
\section{Conclusion} \label{Section_conclusion}

We introduced WPPNets, which are CNNs trained with a new loss 
function based on comparisons of empirical patch distributions via the quadratic Wasserstein distance and
demonstrated its power by several numerical examples.
In particular, we observed that WPPNets are very stable under inaccurate operators appearing in real-world applications.
Due to the fact that WPPs require the knowledge of one high-resolution reference image, WPPs could also
be interpreted as a method for one-shot learning, see \cite{CHWC2021,WYKN2020} and references therein. However, as no low-resolution
correspondence to the reference image is given, we would call our WPP based methods an unsupervised learning method.
We measured the uncertainty within the reconstructions by combining WPPs with conditional normalizing flows. 

Our considerations could be extended in several directions:
\begin{itemize}
\item So far, we focused  on image superresolution.
However, the same methods can be applied for any inverse problem.
\item Other noise models than the Gaussian one can be easily incorporated.
\item Images of materials microstructures are often three-dimensional. 
Thus, in the future, we would extend the architecture of the WPPNets for three-dimensional images.
\item It is desirable to establish quantitative, mathematical estimates how inaccurate operators influence the results.
\end{itemize}

\section*{Acknowledgements}

Funding by German Research Foundation (DFG) within the project STE 571/16-1 and by the DFG excellence cluster Math+ within the project TP: EF3-7 are gratefully acknowledged.
The data from Section~\ref{sec_synchrotron} and the second example in Section~\ref{sec_inaccurate} has been acquired in the frame of the EU Horizon 2020 Marie Sklodowska-Curie Actions Innovative Training 
Network MUMMERING (MUltiscale, Multimodal and Multidimensional imaging for EngineeRING, Grant Number 765604) at the beamline 
TOMCAT of the SLS by A. Saadaldin, D. Bernard, and F. Marone Welford. We acknowledge the Paul Scherrer Institut, Villigen, 
Switzerland for provision of synchrotron radiation beamtime at the TOMCAT beamline X02DA of the SLS.    
Many thanks
to Antoine Houdard for generating Figure~\ref{fig_vis} and 
to Dang Phoung Lan Nguyen for 
registering the images from the second example in Section~\ref{sec_inaccurate}.
We would like to thank Gabriele Steidl and Paul Hagemann for fruitful discussions.
Moreover, we would like to thank the anonymous reviewers for the thoughtful evaluation  which helped to improve the paper.

%------------------------------------------------------------------------------------------------------
\appendix
\section{Proof of Proposition \ref{thm_WPP_dist}}\label{app_proof_wpp_dist}
By \eqref{w_dual}, for each $x \in \R^d$ it holds that
\begin{align*}
W_2^2(\mu_x,\mu_{\tilde{x}})& = \max\limits_{\boldsymbol{\psi} \in \R^{\tilde{N}}}  \big( \frac{1}{N} \sum_{j=1}^N \psi^c (P_j (x)) + \frac{1}{\tilde{N}} \sum_{k=1}^{\tilde{N}} \psi_k \big)\\ &\ge \frac{1}{N} \sum_{j=1}^N \min_{k \in \{ 1,...,\tilde{N} \} } \Vert P_j (x) - P_k (\tilde{x}) \Vert_2^2,
\end{align*}
using $\boldsymbol{\psi} = 0$. Now,  for $x\in \R^d$ and $j\in\{1,...,N\}$,  let $\kappa(j,x)\in \{1,...,\tilde{N}\}$ be one element of 
$\argmin_{k \in \{ 1,...,\tilde{N} \} } \big( \Vert P_j (x) - P_k (\tilde{x}) \Vert_2^2 \big)$.\\ 
Moreover, let $j^*(x)\in\argmax_{j\in\{1,...,N\}}\|P_j (x)\|_\infty$ be a patch which contains the entry of $x$ with the largest absolute value. Then we have
\begin{align*}
W_2^2(\mu_x,\mu_{\tilde{x}}) &\ge \frac{1}{N} \sum_{j=1}^N  \Vert P_j (x) - P_{\kappa (j,x) } (\tilde{x}) \Vert_2^2\\
&\ge \frac{1}{N} \Vert P_{j^*(x)} (x) - P_{\kappa (j^*(x),x) } (\tilde{x}) \Vert_2^2 \\
&= \frac{1}{N} \sum_{l=1}^{p^2} \big((P_{j^*(x)} (x))_l - (P_{\kappa (j^*(x),x) } (\tilde{x}))_l\big)^2,
\end{align*}
where $p$ is the patch size. By considering just the summand $l$ with $(P_{j^*(x)} (x))_l=\|x\|_\infty$ and using that
$(P_{\kappa (j^*(x),x) } (\tilde{x}))_l\leq \|\tilde x\|_\infty$, we obtain
$$
W_2^2(\mu_x,\mu_{\tilde{x}})\geq\frac1N \big(\max(\|x\|_\infty-\|\tilde x\|_\infty,0)\big)^2\geq \frac1N \big(\max(c\|x\|_2-\|\tilde x\|_\infty,0)\big)^2,
$$
for some $c>0$.
Now define the compact set $K = \{ x \in \R^d : \| x \|_2 \le \frac{4}{c}\|\tilde x\|_\infty\}$. 
Then, it holds for all $x \in \R^d \setminus K$ that
\begin{equation}\label{properties_outside_K}
\big(\max(c\|x\|_2-\|\tilde x\|_\infty,0)\big)^2=(c\|x\|_2-\|\tilde x\|_\infty)^2\quad\text{and}\quad \tfrac{c^2}{2}\|x\|_2^2-2c\|x\|_2\|\tilde x\|_\infty \geq 0.
\end{equation}
We can split the integral over $\varphi$ as
$$
\int_{\R^d}|\varphi(x)|dx=\int_{K}|\varphi(x)|dx+\int_{\R^d \setminus K}|\varphi(x)|dx.
$$
Since the Wasserstein distance is non-negative, we have that $|\varphi(x)|\leq 1$. As $K$ is compact, we obtain that the first summand
in the above formula is finite. It remains to show that also the second summand is finite.
Indeed, it holds
\begin{align}
\int_{\R^d \setminus K}|\varphi(x)|dx&=\int_{\R^d \setminus K}\exp(-\rho W_2^2(\mu_x,\mu_{\tilde x}))dx\\
&\leq\int_{\R^d \setminus K}\exp\Big(-\frac{\rho}{N} \big(\max(c\|x\|_2-\|\tilde x\|_\infty,0)\big)^2\Big)dx
\end{align}
Since we are integrating over all $x \in \R^d \setminus K$, we obtain by \eqref{properties_outside_K} that
\begin{align}
\int_{\R^d \setminus K}|\varphi(x)|dx&\leq\int_{\R^d \setminus K}\exp\Big(-\tfrac{\rho}{N} \big(c^2\|x\|_2^2-2c\|x\|_2\|\tilde x\|_\infty+\|\tilde x\|_\infty^2\big)\Big)dx\\
&\leq\int_{\R^d \setminus K}\exp\big(-\tfrac{\rho}{2N} c^2\|x\|_2^2)\exp\Big(-\tfrac{\rho}{N} \big(\underbrace{\tfrac{c^2}{2}\|x\|_2^2-2c\|x\|_2\|\tilde x\|_\infty}_{\geq 0 \,\, \text{by \eqref{properties_outside_K}}}\big)\Big)dx\\
&\leq\int_{\R^d \setminus K}\exp\big(-\tfrac{\rho}{2N} c^2\|x\|_2^2)dx<\infty.
\end{align}
This finishes the proof.\hfill$\square$

%---------------------------------------------------------------------------------------
\section{Implementation details of WPPNets}\label{sec_implementation_details}
%---------------------------------------------------------------------------------------

All experiments are implemented in PyTorch. We run them on a single NVIDIA GeForce RTX 2060 GPU with 6 GB GPU memory.

\paragraph{Network Architecture}

We use a 16-layer CNN $G_\theta$ which is adapted from \cite{Tian21}.  
In \cite{Tian21} the authors propose a so-called asymmetric CNN (ACNN) for image super-resolution consisting of 23-layers. 
More specifically, the ACNN has a 17-layer asymmetric block, a 1-layer memory enhancement block and a 5-layer high-frequency 
feature enhancement block (for more details about the structure and the tasks of the individual blocks, see \cite[Section III]{Tian21}). As stated above, we modified the proposed ACNN and take a 10-layer asymmetric block 
(instead of 17-layer) in order to reduce the network complexity.

\paragraph{Training Details}

For the training of the WPPNet we use the Adam optimizer \cite{KB2015} with a learning rate of $0.0001$. The training and test data are independently chosen; we used 1000 low-resolution images of size $25 \times 25$ for training the network and have two pairs of validation images of size $600 \times 600$ and $150 \times 150$ for the high- and low-resolution image, respectively. For the training process the batch size is set to 25 and the number of epochs we trained for the reconstruction image is stated in Table ~\ref{table_epochs}. We choose the patch size 
to be $p=6$, i.e., $P_i (x)$ is a small sub-image of size $6 \times 6$ of an image $x$. 
We subsample the number of patches in the reference image to $|I|=10000$ accordingly to Remark~\ref{rem_computational_complexity}.

\begin{table}[t]
\begin{center}
\scalebox{.85}{
\begin{tabular}{c|ccccccc} 
          & Figure \ref{Comp_HRLRPred_texture_grass} & Figure \ref{Comp_HRLRPred_texture_floor} & Figure \ref{Comp_HRLRPred_SiC}  & Figure \ref{Comp_HRLRPred_FS}  & Figure \ref{fig_magnificationx6}  & Figure \ref{Fig_inaccurateforward} & Figure \ref{fig_estimatedforward} \\
\hline
epochs    & 420  & 270 & 450 & 420 & 570 & 420 & 150  \\
\end{tabular}}
\caption{Number of epochs to obtain the reconstructions visualized in the respective Figure.}                    
\label{table_epochs}
\end{center}
\end{table}

To obtain an approximation of the maximizer $\psi^{*}$ of $F$ we use $20$ 
iterations (except for Figure \ref{Comp_HRLRPred_texture_grass}, there we used 10 iterations) of a stochastic gradient ascent with a learning rate of $1$. 
Moreover, instead of starting with an arbitrary $\psi_{k}^{0}$ or choosing $\psi_{k}^{0} = 0$ for the optimization in epoch $k$, 
we save the approximated maximizer $\psi_{k-1}^{20}$ from the previous epoch $k-1$ and use it as 
the starting vector in epoch $k$, i.e., $\psi_{k-1}^{20} = \psi_k^{0}$. 
Herewith we reach a better approximation of the maximizer $\psi_k^{*}$ in a computationally efficient way.

\paragraph{Hyperparameter selection}

As for any regularized problem the hyperparameter $\lambda$ in \eqref{eq_loss} has to be chosen carefully.
Since the reference image is the only given high-resolution image, we us it as a validation image and perform a grid search of $\lambda$ on it. 
That is, we generate a synthetic low-resolution observation of the reference image by computing $\tilde y=f(\tilde x)+\xi$ for some noise $\xi$ and choose $\lambda$ such that the reconstruction quality with respect to the PSNR of $\tilde x$ from $\tilde y$ is optimal.

\section{Implementation Details for WPPFlows}\label{sec_WPPFlow_architecture}

\paragraph{Network Architecture} The conditional normalizing flow is adapted from \cite{Lugmayr20}. Here the authors propose a multiscale normalizing flow with 3 and 4 scales for a magnification factor 4 and 8, respectively, consisting of 16 flow steps, followed by a transition step for learning a better transition between the scales. Moreover, as a conditioning network a standard 23-block RRDB architecture \cite{WYWGLDQL19} is used to extract features from the given low-resolution image.

We modified the network to reduce the complexity. In particular, we do not use a conditioning network, but we use the low-resolution image itself and their bicubic interpolations for the respective scale. 
The input from the latent space $z \sim p_z$ is of the same size as the high-resolution reconstruction and then invertible reshaped to the size of the low-resolution. The downsample step is taken from \cite{Denker21}, which consists of an invertible reshaping, followed by a GlowCoupling block and an Actnorm layer. Then, as in \cite{Lugmayr20}, 16 and 10 flow steps and a transition step follow for magnification factor 4 and 8, respectively. Lastly, a conditional affine transform in the high-resolution scale is used. Note that we do not use a splitting of 50 \% in the channel dimension and we only used 2 and 3 scales for a magnification factor 4 and 8, respectively.

\paragraph{Training Details} Similar to WPPNet, we used the Adam optimizer \cite{KB2015} with a learning rate of $0.0001$. While 1000 low-resolution images of size $25\times 25$ are used for training WPPFlow with a magnification factor 4, for the magnification factor 8 we used 270 low-resolution images of size $20 \times 20$. For the training process the batch size is set to 10, the patch size is chosen to be 6 and both networks are trained for 450 epochs. We used the regularization parameter $\lambda = 100$ and the maximizer $\psi^*$ is computed similar to WPPNets.
We subsample the number of patches in the reference image to $|I|=10000$ accordingly to Remark~\ref{rem_computational_complexity}.
The selection of the hyperparameter $\lambda$ is done similar as for the WPPNets.

\section{Evaluation on a Larger Test Set}\label{sec_further_exp}

In order to make the experiments from Section~\ref{sec_synchrotron} more reliable, we apply the different methods onto 
a larger test set.
Three examplar images from the test set are given in Figure \ref{fig_additional_ground_truth}. 
The average of the errors are given in Table \ref{table_averagederrorMeasures}. 
Again, the WPPNet and WPP perform better than the other methods in terms of the considered quality measures. 
Overall, the results are comparable with the results from Section~\ref{sec_synchrotron}.

\begin{figure}[t]
\centering
\begin{subfigure}{0.33\textwidth}
  \centering
  \includegraphics[width=\linewidth]{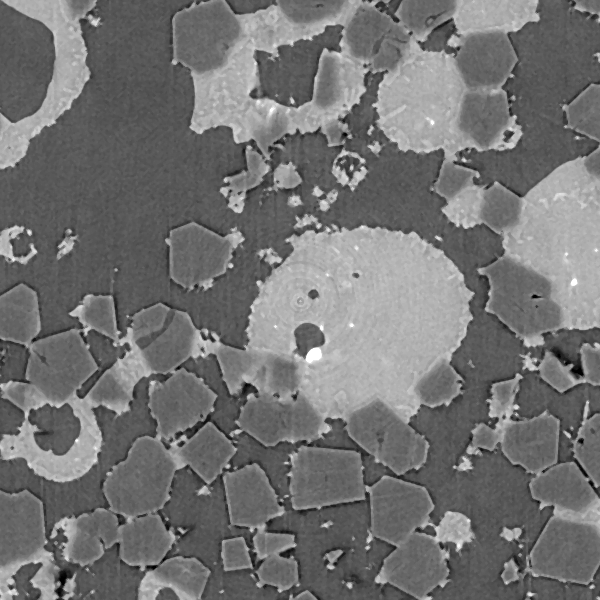}
\end{subfigure}%
\hfill
\begin{subfigure}{0.33\textwidth}
  \centering
  \includegraphics[width=\linewidth]{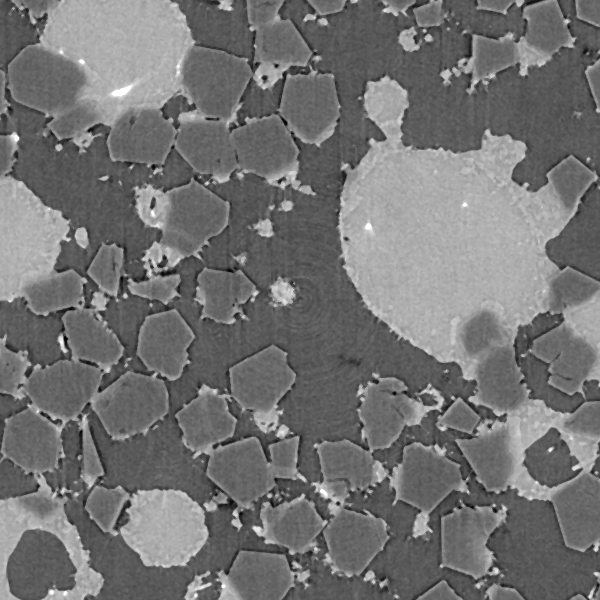}
\end{subfigure}%
\hfill
\begin{subfigure}{0.33\textwidth}
  \centering
  \includegraphics[width=\linewidth]{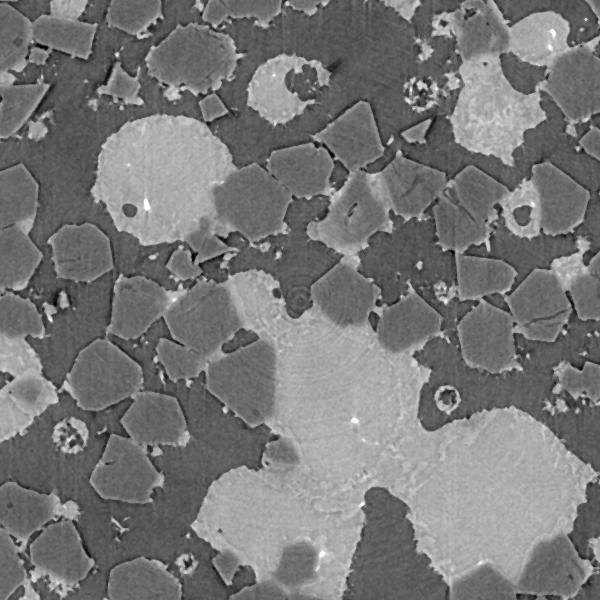}
\end{subfigure}%
\caption{Exemplary ground truth images used for further numerical examples.} \label{fig_additional_ground_truth}
\end{figure}

\begin{table}[t]
\begin{center}
\scalebox{.85}{
\begin{tabular}[t]{c|c|cccccc} 
             &              & bicubic & PnP-DRUNet& DIP+TV    & ACNN  & WPP    & WPPNet \\
\hline
             & PSNR         &  25.70  & 27.72  & \underline{\textbf{28.04}}  & \textbf{28.00}   & 27.73      &  \textbf{28.00} \\ 
SiC & Blur Effect  &  0.5342 & 0.4455 & 0.3967 & 0.3783  & \underline{\textbf{0.3596}}      &  \textbf{0.3663} \\
             & LPIPS        &  0.4110 & 0.3265 & 0.2141 & 0.2272  & \underline{\textbf{0.1591}}      &  \textbf{0.1830}  \\
             & SSIM        & 0.6944  & 0.7630 & \textbf{0.7677} & \underline{\textbf{0.7752}} & 0.7520 & 0.7654   \\
             & FSIM        & 0.8596  & 0.7903 & 0.8788 & 0.8267 & \underline{\textbf{0.9365}} & \textbf{0.9309} \\
             
\end{tabular}}
\caption{Averaged PSNR, Blur Effect and LPIPS value of the high-resolution reconstructions. The best two values are marked in bold, the best one is additionally underlined.}                    
\label{table_averagederrorMeasures}
\end{center}
\end{table}

\section{Estimation of the Forward Operator}\label{sec_estimation_of_forward}

In the following, we describe the estimation process of the forward operator $f$ for superresolution, which is used
in the second example of Section \ref{sec_inaccurate}. For the estimation, we assume that we have given a registered pair 
$(\tilde x,\tilde y)$ of a high-resolution and a low-resolution image and follow the lines of \cite{Hertrich21}.

We assume that our forward operator is given by $f(x)=S(k*x+b)$ for a $15\times 15$ blur kernel $k$, a bias $b\in\R$ 
and a downsampling operator $S$.

\paragraph{Definition of the downsampling operator}
Further, for the downsampling operator $S$, we make use of Fourier transforms.
Given an image $x\in\R^{n_x,n_y}$ the two-dimensional discrete Fourier transform (DFT)
is defined by $\mathcal F_{n_x,n_y}\coloneqq \mathcal F_{n_x}\otimes \mathcal F_{n_y}$, 
where we have $\mathcal F_n=(\exp(-2\pi i k l/n))_{k,l=0}^{n-1}$. Now, the downsampling operator $S\colon\R^{m_x,m_y}\to\R^{n_x,n_y}$ 
is given by
$$
S=\frac{n_xn_y}{m_xm_y}\mathcal F_{n_x,n_y}^{-1} D \mathcal F_{m_x,m_y},
$$
where for $x\in\C^{m_x,m_y}$ the $(i,j)$-th entry of $D(x)$ is given by
$x_{i',j'}$, where
\begin{align}
i'=\begin{cases}i, &$if $i\leq\frac{n_x}{2},\\i+m_x-n_x, &$otherwise.$\end{cases}
\end{align}
and $j'$ is defined analogously. 
Thus, the operator $S$ generates a downsampled version $S(x)$ of an image $x$ by removing the high-frequency part from $x$.
Note that even if the Fourier matrix $\mathcal F_{n_x,n_y}$ is complex valued, the range of $S$ is real-valued, 
as $D$ preserves Hermitian-symmetric spectra.

\paragraph{Estimation of Blur Kernel and Bias}

We assume that we have given images $\tilde x\in\R^{m_x,m_y}$ and $\tilde y\in\R^{n_x,n_y}$ related by $\tilde y\approx S(k*\tilde x+b)$, where the blur kernel $k\in\R^{15\times15}$ and the bias $b\in\R$ are unknown.
In the following, we aim to reconstruct $k$ and $b$ from $\tilde x$ and $\tilde y$. 
Here, we use the notations $N=n_xn_y$ and $M=m_xm_y$. Further let $\tilde k\in\R^{m_x,m_y}$ be the kernel $k$ padded with zeros such that it still corresponds to the same convolution as $k$, but has size $m_x\times m_y$.

Applying the DFT on both sides of $y=S(k*\tilde x+b)=S(\tilde k*\tilde x+b)$ and using the definition of $S$, we obtain that
$$
\hat y =\frac{N}{M}D(\hat k\odot \hat x + M b e)=\frac{N}{M}D(\hat k)\odot D(\hat x)+Nbe,
$$
where $\hat y=\mathcal F_{n_x,n_y}\tilde y$, $\hat x=\mathcal F_{m_x,m_y}\tilde x$, $\hat k=\mathcal F_{m_x,m_y}\tilde k$, $\odot$ is the elementwise product and $e$ denotes the first unit vector (i.e.,\ $e_{0,0}=1$ and all other entries are zero). Now, we can conclude that
$$
D(\hat k)=\frac{M}{N}\hat y\oslash D(\hat x)-  \frac{M b}{\hat x_{0,0}} e,
$$
where $\oslash$ is the elementwise quotient. In practice, we stabilize this quotient by increasing the absolute value of $D(\hat x)$ by $10^{-5}$ while retaining the phase.
Thus, assuming that the high-frequency part of $k$ is negligible (i.e., that $D^T D k=k$), we can approximate $\hat k$ by
$$
\hat k\approx \frac{M}{N} D^\tT \hat y \oslash D(\hat x) -  \frac{Mb}{\hat x_{0,0}} e.
$$
Applying the inverse DFT this becomes
$$
\tilde k\approx \mathcal F_{m_x,m_y}^{-1}\Big(\frac{M}{N} D^\tT (\hat y \oslash D(\hat x))\Big) - \frac{ b}{\hat x_{0,0}}.
$$
Using the assumption that $\tilde k$ is zero outside of the $15\times15$ patch, where $k$ is located, we can estimate $b$ by taking the mean over all pixels of $\mathcal F_{m_x,m_y}^{-1}\Big(\frac{M}{N} D^\tT (\hat y \oslash D(\hat x))\Big)$ outside of this $15\times15$ patch.
Afterwards, we estimate $k$ by reprojecting
$$
\mathcal F_{m_x,m_y}^{-1}\Big(\frac{M}{N} D^\tT (\hat y \oslash D(\hat x))\Big) - \frac{ b}{\hat x_{0,0}}
$$
to the set of all real $15\times15$ kernels.

\bibliographystyle{abbrv}
\bibliography{literatur_new}
\end{document}